\documentclass[11pt,letterpaper]{article}
\usepackage[margin=1in]{geometry}
\usepackage[utf8]{inputenc}
\usepackage[T1]{fontenc} %
\usepackage[dvipsnames]{xcolor}
\usepackage{color} 
\usepackage{times}
\usepackage{cancel}
\usepackage{mathtools}
\usepackage{mathpazo}

\usepackage{hyperref}
\usepackage{url}

\usepackage{xcolor}
\usepackage{color} 
\definecolor{niceRed}{RGB}{190,38,38}
\definecolor{niceYellow}{HTML}{f5b400}
\definecolor{blueGrotto}{HTML}{059DC0}
\definecolor{royalBlue}{HTML}{057DCD}
\definecolor{navyBlue}{HTML}{0B579C}
\definecolor{limeGreen}{HTML}{81B622}
\definecolor{nicePurple}{HTML}{9c27b0}
\definecolor{lightRoyalBlue}{HTML}{def2ff}  
\definecolor{gold}{HTML}{ffa300}

\newcommand*{\white}[1]{\textcolor{white}{#1}}

\usepackage[suppress]{color-edits} 
\addauthor[Anay]{am}{gold}
\addauthor[Alkis]{ak}{blue}
\addauthor[Felix]{fz}{cyan}
\addauthor[Manolis]{mz}{purple}
\addauthor[Ziyu]{zz}{nideRed}

\usepackage{mathtools}
\DeclarePairedDelimiter{\set}{\{}{\}}

\DeclarePairedDelimiter{\card}{|}{|}

\newcommand{\sset}{\subseteq}
\newcommand{\ones}{\chi}

\usepackage{hyperref}
\hypersetup{
  colorlinks = true, 
  urlcolor = {blueGrotto},
  linkcolor = {royalBlue},
  citecolor = {navyBlue}
}
\usepackage[
  backref=true,
  backend=biber,
  natbib=true,
  style=alphabetic,
  sorting=alphabeticlabel,
  sortcites=true,
  minbibnames=3,
  maxbibnames=999,
  mincitenames=4,
  maxcitenames=4,
  minalphanames=4,
  maxalphanames=4
]{biblatex}

\DeclareSortingTemplate{alphabeticlabel}{
  \sort[final]{%
    \field{labelalpha} %
  }
  \sort{%
    \field{year}   %
  }
  \sort{%
    \field{title}  %
  }
}

\AtBeginRefsection{\GenRefcontextData{sorting=ynt}}
\AtEveryCite{\localrefcontext[sorting=ynt]}
\usepackage[most]{tcolorbox}

\tcbset{
    mybox/.style={
        colframe=black,        %
        colback=gray!0,       %
        boxrule=0.5mm,         %
        width=\linewidth,      %
        arc=3mm,               %
        left=5mm,              %
        right=5mm,             %
        top=5mm,               %
        bottom=5mm,            %
        enhanced,              %
        halign=center        %
    }
}

\usepackage{booktabs} 
\usepackage{multicol} 
\usepackage{multirow} 
\usepackage{makecell} 
\usepackage{longtable}

\usepackage{graphicx}
\usepackage{float} 
\usepackage{tikz}
\usepackage{pgfplots}
\pgfplotsset{compat=1.17}
\usepgfplotslibrary{fillbetween}
\usepackage{setspace}
\usepackage[raggedright]{sidecap}

\usepackage{physics} %
\usepackage{amsmath}
\usepackage{amssymb}
\usepackage{amsthm}
\usepackage{bbm}
\usepackage{blkarray}

\usepackage{nicefrac}
\usepackage{dsfont}
\usepackage{xspace}
\usepackage{pifont} %
\usepackage{thm-restate} %
\usepackage[capitalise,noabbrev,nameinlink]{cleveref} %

\usepackage{enumerate} 
\usepackage[inline,shortlabels]{enumitem} 
\usepackage{typed-checklist}
\usepackage{tcolorbox}
\usepackage[framemethod=TikZ]{mdframed}
\mdfsetup{%
backgroundcolor=white, %
roundcorner=4pt,
linewidth=1pt}
\usepackage{changepage} %

\usepackage{algorithm}
\usepackage{algpseudocode}[1]

\usepackage{mathrsfs} %
\usepackage{soul} %

\theoremstyle{plain} 
\newtheorem{theorem}{Theorem}[section]

\newtheorem{proposition}[theorem]{Proposition}
\newtheorem{lemma}[theorem]{Lemma}
\newtheorem{fact}[theorem]{Fact}

\newtheorem{claim}[theorem]{Claim}

\newtheorem{assumption}{Assumption}

\newtheorem{definition}{Definition}
\newtheorem*{definition*}{Definition}

\theoremstyle{definition} 

\newtheorem{remark}[theorem]{Remark}

\theoremstyle{remark}

\NewDocumentEnvironment{pf}{o}
  {\IfNoValueTF{#1}{\begin{proof}}{\begin{proof}[Proof of #1.]}}
  {\IfNoValueTF{#1}{\end{proof}}{\end{proof}}}

\AfterEndEnvironment{definition}{\noindent\ignorespaces}
\AfterEndEnvironment{assumption}{\noindent\ignorespaces}
\AfterEndEnvironment{lemma}{\noindent\ignorespaces}
\AfterEndEnvironment{theorem}{\noindent\ignorespaces}
\AfterEndEnvironment{proposition}{\noindent\ignorespaces}
\AfterEndEnvironment{fact}{\noindent\ignorespaces}
\AfterEndEnvironment{question}{\noindent\ignorespaces}
\AfterEndEnvironment{corollary}{\noindent\ignorespaces}
\AfterEndEnvironment{model}{\noindent\ignorespaces}
\AfterEndEnvironment{remark}{\noindent\ignorespaces}
\AfterEndEnvironment{proof}{\noindent\ignorespaces}
\AfterEndEnvironment{fact}{\noindent\ignorespaces}
\AfterEndEnvironment{minftheorem}{\noindent\ignorespaces}
\AfterEndEnvironment{inftheorem}{\noindent\ignorespaces}
\AfterEndEnvironment{maintheorem}{\noindent\ignorespaces}
\AfterEndEnvironment{restatable}{\noindent\ignorespaces}
\AfterEndEnvironment{infassumption}{\noindent\ignorespaces}
\AfterEndEnvironment{conjecture}{\noindent\ignorespaces}
\AfterEndEnvironment{claim}{\noindent\ignorespaces}

\Crefname{section}{Section}{Sections}
\Crefname{theorem}{Theorem}{Theorems}
\Crefname{lemma}{Lemma}{Lemmas}
\Crefname{definition}{Definition}{Definitions}
\Crefname{conjecture}{Conjecture}{Conjectures}
\Crefname{corollary}{Corollary}{Corollaries}
\Crefname{construction}{Construction}{Constructions}
\Crefname{conjecture}{Conjecture}{Conjectures}
\Crefname{claim}{Claim}{Claims}
\Crefname{observation}{Observation}{Observations}
\Crefname{proposition}{Proposition}{Propositions}
\Crefname{fact}{Fact}{Facts}
\Crefname{question}{Question}{Questions}
\Crefname{problem}{Problem}{Problems}
\Crefname{remark}{Remark}{Remarks}
\Crefname{example}{Example}{Examples}
\Crefname{equation}{Equation}{Equations}
\Crefname{appendix}{Appendix}{Appendices}
\Crefname{algorithm}{Algorithm}{Algorithms}
\Crefname{model}{Model}{Models}
\Crefname{figure}{Figure}{Figures}
\Crefname{inftheorem}{Informal Theorem}{Informal Theorems}
\Crefname{infassumption}{Informal Assumption}{Informal Assumptions}
\Crefname{minftheorem}{Main Informal Theorem}{Main Informal Theorems}
\Crefname{maintheorem}{Main Theorem}{Main Theorems}
\Crefname{assumption}{Assumption}{Assumptions}
\Crefname{case}{Case}{Cases}
\Crefname{program}{Program}{Programs}

\newlist{asmpenum}{enumerate}{1} %
\setlist[asmpenum]{label={\arabic*.},ref=\theassumption.{\arabic*}}
\Crefname{asmpenumi}{Assumption}{Assumptions}

\usepackage{etoolbox}

\newenvironment{subenvironment}{%
    \begin{adjustwidth}{2em}{2em}%
}{
    \end{adjustwidth}%
}

\BeforeBeginEnvironment{subenvironment}{\white{.}\\ \vspace{-10mm}}
\AfterEndEnvironment{subenvironment}{\vspace{4mm}}

\newcommand{\yesnum}{\addtocounter{equation}{1}\tag{\theequation}}  

\makeatletter
\renewcommand{\eqref}[1]{\textup{\eqrefform@{\ref{#1}}}}
\let\eqrefform@\tagform@

\newcommand{\changetag}[1]{%
  \renewcommand\tagform@[1]{\maketag@@@{(\ignorespaces#1\unskip\@@italiccorr)}}%
}
\makeatother

\makeatletter
\newcommand{\tagnum}[2]{%
    \refstepcounter{equation}%
    \tag{#1) \ (\theequation}%
    \protected@write \@auxout {}{%
        \string \newlabel {#2}{{\theequation}{\thepage}{}{equation.\theequation}{}}%
    }%
}
\makeatother

\newcommand{\Stackrel}[2]{\stackrel{\mathmakebox[\widthof{\ensuremath{#2}}]{#1}}{#2}}

\newcommand{\quadtext}[1]{\quad\text{#1}\quad}
\newcommand{\qquadtext}[1]{\qquad\text{#1}\qquad}

\newcommand{\qquadand}{\qquadtext{and}}

\newcommand{\quadwhere}{\quadtext{where}}

\def\abs#1{\left| #1 \right|}

\newcommand{\sinparen}[1]{(#1)}
\newcommand{\sinbrace}[1]{\{#1\}}
\newcommand{\sinsquare}[1]{[#1]}
\newcommand{\inbrace}[1]{\left\{#1\right\}}

\newcommand{\inparen}[1]{\left(#1\right)}
\newcommand{\insquare}[1]{\left[#1\right]}
\newcommand{\inangle}[1]{\left\langle#1\right\rangle}

\newcommand{\ceil}[1]{\left\lceil#1\right\rceil}

\newcommand{\snorm}[1]{\ensuremath{\| #1 \|}}
\let\norm\relax
\newcommand{\norm}[1]{\ensuremath{\left\lVert #1 \right\rVert}}

\newcommand{\R}{\mathbb{R}}

\newcommand{\E}{\operatornamewithlimits{\mathbb{E}}} 
\newcommand{\Ex}{\E}

\newcommand{\Cov}{\ensuremath{\operatornamewithlimits{\rm Cov}}}
\newcommand{\cov}{\ensuremath{\operatornamewithlimits{\rm Cov}}}
\let\var\relax
\newcommand{\var}{\ensuremath{\operatornamewithlimits{\rm Var}}}
\newcommand{\Var}{\var}

\newcommand{\tv}[2]{\operatorname{d}_{\mathsf{TV}}\!\inparen{#1,#2}}
\newcommand{\kl}[2]{\operatornamewithlimits{\mathsf{KL}}\!\inparen{#1\|#2}}

\newcommand{\sfrac}[2]{{#1/#2}} 
\newcommand{\nfrac}[2]{\nicefrac{#1}{#2}}

\newcommand{\poly}{\mathrm{poly}}
\newcommand{\polylog}{\mathrm{polylog}}

\newcommand{\supp}{\operatorname{supp}}

\newcommand{\iid}{i.i.d.}

\newcommand{\eps}{\varepsilon}
\renewcommand{\epsilon}{\varepsilon}
\makeatletter
\newcommand*{\tran}{{\mathpalette\@tran{}}}
\newcommand*{\@tran}[2]{\raisebox{\depth}{$\m@th#1\intercal$}}
\makeatother

\mathchardef\NABLA"272
\newcommand*{\Nabla}{\boldsymbol\NABLA}
\let\nabla\Nabla

\newcommand{\wh}[1]{\widehat{#1}}

\renewcommand{\bar}{\overline}

\renewcommand{\tilde}{\widetilde}
\newcommand{\wt}[1]{\widetilde{#1}}

\newcommand{\customcal}[1]{\euscr{#1}}

\newcommand{\cL}{\customcal{L}}

\newcommand{\cN}{\customcal{N}}

\newcommand{\cQ}{\customcal{Q}}

\DeclareMathAlphabet{\mathdutchcal}{U}{dutchcal}{m}{n}
\SetMathAlphabet{\mathdutchcal}{bold}{U}{dutchcal}{b}{n}
\DeclareMathAlphabet{\mathdutchbcal}{U}{dutchcal}{b}{n}

\DeclareMathAlphabet\urwscr{U}{urwchancal}{b}{n}%
\DeclareMathAlphabet\rsfscr{U}{rsfso}{m}{n}
\DeclareMathAlphabet\euscr{U}{eus}{m}{n}
\DeclareFontEncoding{LS2}{}{}
\DeclareFontSubstitution{LS2}{stix}{m}{n}
\DeclareMathAlphabet\stixcal{LS2}{stixcal}{m} {n}

\renewcommand{\paragraph}[1]{\medskip \noindent\textbf{#1}~}

\renewcommand{\S}{\mathcal{S}}

\newcommand{\eat}[1]{}
\newcommand{\negLL}{\ensuremath{\mathscr{L}}}

\newcommand{\hypo}[1]{\mathdutchcal{#1}}
\newcommand{\hyA}{\hypo{A}}
\newcommand{\hyB}{\hypo{B}}

\newcommand{\hyL}{\hypo{L}}

\newcommand{\hyN}{\hypo{N}}

\newcommand{\hyP}{\hypo{P}}

\newcommand{\hyS}{\hypo{S}}

\renewcommand{\d}{{\rm d}}

\newcommand{\normal}[2]{\cN\!\inparen{#1,#2}}
\newcommand{\coarseNormal}[3]{\cN_{#3\,}\!\inparen{#1,#2}}
\newcommand{\truncatedNormal}[3]{\cN\!\inparen{#1,#2,#3}}
\newcommand{\normalMass}[3]{\cN\!\inparen{#1, #2; #3}}
\newcommand{\coarseNormalMass}[4]{\cN_{#3\,}\!\inparen{#1, #2; #4}}

\newcommand{\wstar}{{w^\star}}

\newcommand{\muStar}{{\mu^\star}}

\usepackage{caption}
\usepackage{subcaption}
\usepackage{graphicx}

\usepackage{xinttools} %
\usepackage{tikz}
\usetikzlibrary{calc}

\def\biglen{20cm} %
\tikzset{
  half plane/.style={ to path={
       ($(\tikztostart)!.5!(\tikztotarget)!#1!(\tikztotarget)!\biglen!90:(\tikztotarget)$)
    -- ($(\tikztostart)!.5!(\tikztotarget)!#1!(\tikztotarget)!\biglen!-90:(\tikztotarget)$)
    -- ([turn]0,2*\biglen) -- ([turn]0,2*\biglen) -- cycle}},
  half plane/.default={1pt}
}

\newcommand{\NP}{\ensuremath{\mathsf{NP}}\xspace}

\renewcommand{\cL}{\negLL}
\renewcommand{\hyL}{\negLL}

\usepackage{array}
\newcolumntype{L}[1]{>{\raggedright\let\newline\\\arraybackslash\hspace{0pt}}m{#1}}
\newcolumntype{C}[1]{>{\centering\let\newline\\\arraybackslash\hspace{0pt}}m{#1}}
\newcolumntype{R}[1]{>{\raggedleft\let\newline\\\arraybackslash\hspace{0pt}}m{#1}}

\newcommand{\eg}{\emph{e.g.}}
\newcommand{\ie}{\emph{i.e.}} 

\newcommand{\nspaceBeforeSection}{\vspace{0mm}}
\newcommand{\nspaceAfterSection}{\vspace{0mm}}

\title{\vspace{-6mm} Mean Estimation from Coarse Data:\\ Characterizations and Efficient Algorithms}

\author{
        \begin{tabular}{C{5.1cm}C{5.1cm}C{5.1cm}}
        {\bf Alkis Kalavasis} 
            & {\bf Anay Mehrotra} 
                & {\bf Manolis Zampetakis}\\[0mm]
        {Yale University} 
            & {Stanford University} 
                & {Yale University}\\[-1mm]
        \mbox{\scriptsize\texttt{\href{mailto:alkis.kalavasis@yale.edu}{alkis.kalavasis@yale.edu}}} 
            & \mbox{\scriptsize\texttt{\href{mailto:anaymehrotra1@gmail.com}{anaymehrotra1@gmail.com}}}
                & \mbox{\scriptsize\texttt{\href{manolis.zampetakis@yale.edu}{manolis.zampetakis@yale.edu}}}\\[4mm]
        \end{tabular}\\[4mm]
        \begin{tabular}{C{5.1cm}C{5.1cm}}
        {\bf Felix Zhou} 
            & {\bf Ziyu Zhu}\\[0mm]
        {Yale University} 
            & {IMC Trading}\\[-1mm]
        \mbox{\scriptsize\texttt{\href{mailto:felix.zhou@yale.edu}{felix.zhou@yale.edu}}} 
            & \mbox{\scriptsize\texttt{\href{mailto:zhzy017@gmail.com}{zhzy017@gmail.com}}}
        \\
        \end{tabular}
}

\date{}

\usepackage[utf8]{inputenc}
\usepackage[T1]{fontenc} %
\usepackage{times}
\usepackage{manfnt} %

\usepackage{color-edits}

\begin{document}
\pagenumbering{gobble}
\maketitle 

\begin{abstract}
Coarse data arise when learners observe only partial information about samples; namely, a set containing the sample rather than its exact value.
This occurs naturally through measurement rounding, 
sensor limitations,   
and lag in economic systems. 
We study Gaussian mean estimation from coarse data, 
where each true sample $x$ is drawn from a $d$-dimensional Gaussian distribution with identity covariance, but is revealed only through the set of a partition containing $x$.
When the coarse samples, roughly speaking, have ``low'' information, the mean cannot be uniquely recovered from observed samples (\ie{}, the problem is not \textit{identifiable}).
Recent work by \citet{fotakis2021coarse} established that \emph{sample}-efficient mean estimation is possible when the unknown mean is \emph{identifiable} and the partition consists of only \emph{convex} sets. Moreover, they showed that without convexity, mean estimation becomes \NP-hard.
However, two fundamental questions remained open: 
(1) When is the mean identifiable under convex partitions?
(2) Is \emph{computationally} efficient estimation possible under identifiability and convex partitions?
This work resolves both questions. 
We provide a geometric characterization of when a convex partition is identifiable, showing it depends on whether the convex sets form ``slabs'' in a direction.
Second, we give the first polynomial-time algorithm for finding $\eps$-accurate estimates of the Gaussian mean given coarse samples from an unknown convex partition, 
matching the optimal $\smash{\wt{O}(d/\epsilon^2)}$ sample complexity.
Our results have direct applications to robust machine learning, particularly robustness to observation rounding.
As a concrete example, we derive a sample- and computationally- efficient algorithm for linear regression with market friction,
a canonical problem in using ML in economics,
where exact prices are unobserved and one only sees a range containing the price \citep{rosett1959Friction}. 
\end{abstract}

\clearpage
\tableofcontents
\clearpage
\pagenumbering{arabic}

\nspaceBeforeSection{}
\section{Introduction}
\nspaceAfterSection{}
    The rapid growth of data-driven inference and decision-making has amplified the need for statistical and algorithmic methods that can robustly handle incomplete or imperfect data.
While a large body of statistical literature focuses on the problem of \emph{missing data} \citep{little1989analysis,little2019statistical}, where observations are either fully precise or totally incomplete, many applications present a different challenge:
data are often neither entirely missing nor perfectly observed.
For instance, consider healthcare analytics \citep{chandel2024zedp,nietert2006tdp}, recommendation systems \citep{hu2008recommender}, and sensor networks \citep{jayasekaramudeli2024quantized,dogandzic2008decentralized}, where observations are frequently \textit{rounded} to some precision.
This occurs when a healthcare form, user feedback system, or sensor network only records values up to the nearest unit (for instance, a rating of $4.7$ is rounded to $5$, only revealing that the true rating lies {in $[4.5,5.5)$}).
Economics provides another compelling example: market \textit{friction} \citep{rosett1959Friction}, where agent responses to environmental changes exhibit lag.
Specifically, small movements in underlying conditions fail to trigger price adjustments; instead, prices remain fixed within an ``inaction band'' \citep{rosett1959Friction}.
Here, the analyst learns only that the true, friction-free value lies somewhere within that band, but does not gain any information about the precise value.
Similar phenomena arise in physical sciences and survey statistics due to sensor limitations and data aggregation \citep{wang2008heaping,sheppard1897corrections}. 

These scenarios share a common structure: the statistician observes a subset $S$ of the space containing the true value $v$ but not $v$ itself.
The missing data framework cannot capture this partial observability, as it assumes that the data is either fully observed or absent.
In contrast, coarse data provides an intermediate case: 
it restricts $v$ to the observed set $S$, 
offering some information while still leaving some uncertainty.
This fundamental distinction motivated extensive works in statistics \citep{heitjan1990inference,heitjan1991ignorability,heitjan1993ignorability,gill1997coarsening} (also see the survey by \citet{Amemiya1973}) to model coarse data and study
estimation \mbox{problems in both supervised and unsupervised settings.}
 
Despite this extensive theoretical foundation, statistically and computationally efficient algorithms for even fundamental tasks have remained elusive in the coarse data setup. 
In this work, we revisit perhaps the most fundamental problem in this area: \textit{Gaussian mean estimation from coarse data.}

\paragraph{Problem Setup.}
The problem is parameterized by a (potentially unknown) partition $\hyP$ of $\R^d$ into disjoint sets \citep{heitjan1990inference}.
For instance, with $d=1$, $\hyP$ can be the set of unit intervals formed by the usual grid of $\R$ (namely, $\dots,[-1,0),[0,1),[1,2),\dots$), capturing the coarsening arising from rounding in different aforementioned domains.
The learner observes sets $P_1,P_2,P_3,\dots$ from $\hyP$, where each $P_i$ is sampled as follows:
\begin{center}
    \vspace{-0.5em}
    \mbox{First sample an (unseen) $x_i \sim \normal{\muStar}{I}$, then output the unique set $P_i$ in $\hyP$ containing $x_i$.}
    \vspace{-0.75em}
\end{center}
Our goal is to find an estimate $\wh{\mu}$ of the unknown mean $\muStar$ in time polynomial in both the dimension $d$ and the inverse desired accuracy $\nicefrac1\eps$.

To gain some intuition, first consider the task without coarsening:
given samples $x_1,x_2,\dots,x_n$ from a Gaussian with mean $\muStar$, 
the empirical mean $\sum_i \nfrac{x_i}{n}$
is $O(\nfrac{1}{\sqrt{n}})$-close to $\muStar$ with high probability. 
Under coarsening, 
however, 
we observe sets instead of points, making this approach inapplicable.
A natural idea is to pick a canonical value for each set $P_i$ (such as its centroid) and average these canonical values,
but \textit{any} such choice of canonical value can yield estimates $\Omega(r)$ away from $\muStar$, 
where $r$ is the radius of the sets.\footnote{The radius of a set is the radius of the smallest ball containing the set.}
Thus, Gaussian mean estimation from coarsened data requires more sophisticated methods to achieve vanishing error as the number of samples $n \to \infty$.

Coarse samples make mean estimation strictly more challenging.
For some extreme partitions $\hyP$, 
estimation becomes impossible.
Consider \eg{}, $\hyP = \sinbrace{\R^d}${, where the partition consists of only the whole space.}
The learner observes samples $\R^d,\R^d,\dots$, 
which contain no information about $\muStar$, making estimation information-theoretically impossible.
A partition is said to be \emph{identifiable} if the mean $\muStar$ can be uniquely determined from coarse samples.
In the example above, the partition is not identifiable.
See \Cref{fig:convex-partitions} for further examples of partitions and their identifiability properties.
{
\begin{SCfigure}[0.74][htbp]
    \raggedright
    \begin{subfigure}[t]{0.18\linewidth}
        \centering
        \includegraphics[width=\linewidth]{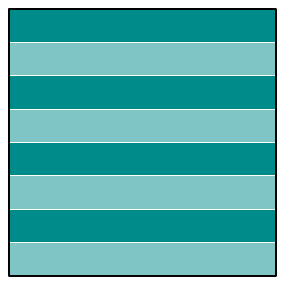}
    \end{subfigure}
    \begin{subfigure}[t]{0.18\linewidth}
        \centering
        \includegraphics[width=\linewidth]{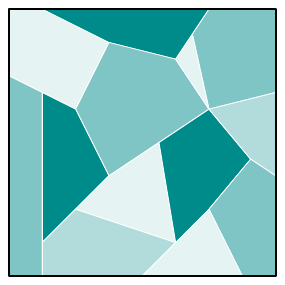}
    \end{subfigure}
    \begin{subfigure}[t]{0.18\linewidth}
        \centering
        \includegraphics[width=\linewidth]{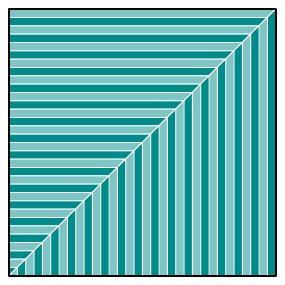}
    \end{subfigure}
    \caption{Different partitions $\hyP$ of $\R^2$.
        (a) Left: A non-identifiable partition where any $\muStar \in \R^2$ and its translation along the $x$-axis
        induce identical distributions of coarse samples.
        (b) Middle and (c) right: Identifiable partitions of the space.}
    \label{fig:convex-partitions}
\end{SCfigure}
}

\paragraph{The Need for Convex Partitions.}
As mentioned, there are several prior works on learning from coarse data.
The key prior work for coarse Gaussian mean estimation is \citep{fotakis2021coarse}. They showed that the partition structure affects both the identifiability and the computational complexity of mean estimation.
In particular, they showed that if $\hyP$ contains non-convex sets,\footnote{A set $S$ is convex if for any $x_1,x_2\in S$, the midpoint $(x_1+x_2)/2$ is also in $S$. Convex sets are quite general and include polytopes (boxes, halfspaces) and ellipsoids.} then Gaussian mean estimation from coarse samples is \NP-hard in general.
Thus, for computational efficiency, one must focus on \textit{convex partitions}: partitions whose sets are all convex.

\paragraph{Prior Sample Complexity Results.}
For convex and identifiable partitions, \citet{fotakis2021coarse} provided a (non-efficient) method achieving an $\eps$-accurate estimate using $\smash{{\wt{O}}}(\nfrac{d}{\eps^2})$ coarse samples.\footnote{{To be more concrete, the sample complexity scales as $O(d/\epsilon^2)\cdot  \alpha^{-4}$, where $\alpha$ quantifies how ``informative'' the partition is (\cref{thm:convexLocalPartitionMeanEstimation}). The exact details are postponed {to \Cref{sec:results}}.}}
This only {establishes} sample complexity guarantees{, leaving} two fundamental questions open:
\begin{itemize}[leftmargin=22pt] %
    \item[\textbf{Q1.}] Which convex partitions result in an identifiable instance?
    \item[\textbf{Q2.}] Is computationally-efficient mean estimation possible for all identifiable convex partitions?
\end{itemize}
Answering \textbf{Q1} contributes to the line of work characterizing identification conditions for missing data problems {across} statistics (\eg{}, \citet{everitt2013finite,teicher1963identifiability}), statistical learning theory (\eg{}, \citet{angluin1983inductive,cai2025ate}), and econometrics 
     (\eg{}, \citet{manski1990nonparametric,athey2002identification}). 
    \textbf{Q2} addresses a key computational question: given \citet{fotakis2021coarse}'s result {on sample-efficient estimation with} convex partitions in high dimensions, is {there} an efficient algorithm for coarse Gaussian mean estimation or do we face a statistical-to-computational gap?

\paragraph{Our Contributions.}
Our main contribution is to completely resolve the above questions:
\begin{itemize}[leftmargin=*] %
    \item[$\blacktriangleright$] \textbf{Identifiability Characterization (\Cref{thm:char}):} 
    We characterize the convex partitions $\hyP$ under which the mean $\muStar$ is identifiable.
    \item[$\blacktriangleright$] \textbf{Efficient Algorithm (\Cref{thm:convexPartitionMeanEstimation}):} 
    We provide the first polynomial-time algorithm for estimating $\muStar$ for any convex, identifiable partition $\hyP$. 
    The algorithm 
    matches the sample complexity of \citet{fotakis2021coarse},
    while also being computationally efficient.
\end{itemize}
These results settle the computational complexity of coarse Gaussian mean estimation under convex partitions. 
As a concrete application of our techniques,  we obtain a sample- and computationally-efficient algorithm for linear regression with market friction,
a canonical setting where coarsening arises from lag in market participants' responses (\Cref{thm:friction}).

\paragraph{Characterization Techniques.}
    Our first result is a geometric characterization of the sets of the partition $\hyP$ that allows for identifying the Gaussian mean from coarse observations. Returning to \Cref{fig:convex-partitions}(a), it is not difficult to see that partitions that consist of parallel slabs are \emph{not} identifiable. This gives a very natural and intuitive sufficient condition. It remains to argue that this is also necessary,
    \ie{}, that \emph{any} non-identifiable partition should look like a collection of parallel slabs.
    \begin{enumerate}[leftmargin=*]
        \item[$\blacktriangleright$] \textbf{Necessity of Parallel Slabs:} 
        Our main technical contribution in characterization is to show that parallel slabs are also necessary for non-identifiability. 
        To show this, our proof combines tools from optimal transport and variance reduction inequalities \citep{vempala2010learning,harge2004convex}{;} we believe this combination will have broader applications in estimation with coarse samples.
    \end{enumerate} 

\paragraph{Algorithmic Techniques.}
    Our computationally efficient algorithm performs stochastic gradient descent (SGD) on a variant of the log-likelihood objective (\Cref{sec:overview}). Obtaining provable guarantees for this approach requires several key innovations (compared to{, \eg{},} \citet{fotakis2021coarse}):
    \begin{enumerate}[leftmargin=*]
        \item[$\blacktriangleright$] \textbf{Controlling Second Moment:} 
        Establishing SGD convergence requires bounding the second moment of gradients, which intuitively controls the ``noise'' in each update. 
        In our setting, the second moment scales with the radius of coarse observations 
        and can be unbounded if the partition contains unbounded sets.
        To overcome this, we introduce an idealized class of \emph{local partitions}
        (where all sets have a bounded radius;~\Cref{sec:meanEstimation:projectionSetlocalPartitions}),
        develop an algorithm for this class,
        and then show that a general partition can be reduced to a local one (\Cref{sec:meanEstimation:generalPartitions}).
    \end{enumerate}
    We refer the reader to \Cref{sec:overview} for further discussion of our techniques and challenges. 

    \nspaceBeforeSection{}
\subsection{Related Works}
    \nspaceAfterSection{}
    
    {In this section, we provide an overview of further related works.}

    \paragraph{Learning from Coarse Data.}
        {Inference from coarse data has been extensively studied in statistics~\citep{heitjan1990inference,heitjan1991ignorability,heitjan1993ignorability,gill1997coarsening,gill2008algorithmic}.
        The closest to our work is that of \citet{fotakis2021coarse}, 
        who studied coarse estimation in both supervised and unsupervised settings. 
        For coarse Gaussian mean estimation,
        they give a sample-efficient algorithm for identifiable convex partitions. 
        In the absence of convexity, they show that even fitting the true coarse distribution in total variation distance becomes $\NP$-hard.
        In PAC learning with multiple labels,
        they show that any SQ algorithm over ``fine'' {(\ie{}, non-coarse)} labels can be simulated using coarse labels, 
        {with coarse sample complexity polynomial in the information distortion and number of fine labels.}
        More recently,
        \citet{diakonikolas2025robust} established an SQ lower bound for linear multiclass classification under coarse labels (under a weaker notion of information preservation).}
        {As we have mentioned, coarse observations arise in a number of natural scenarios.
        An interesting example of this is given by \citet{kalavasis2025can}:
        They showed that the model of \citet{roy1951earnings} can be framed as a problem of Gaussian mean-estimation with coarse observations arising from a \textit{non-convex} partition.
        While, in general, non-convex partitions are intractable, the partitions arising in \citet{roy1951earnings}'s model are highly structured and this enables \cite{kalavasis2025can} to develop an efficient local convergence algorithm for the problem.}

    \paragraph{Sheppard's Corrections.}
    {Sheppard's corrections are approximate corrections to estimates of moments computed from 
    binned data~\citep{sheppard1897corrections},
    \ie{}, 1-dimensional coarse Gaussian samples from equal-length interval partitions.
    For mean estimation,
    our model extends Sheppard's corrections into the high-dimensional setting with non-uniform convex partitions.}{Our work also relates to Sheppard's corrections, which are classical bias adjustments for \textit{one-dimensional} coarse data (\eg{}, values rounded to the nearest integer) that yield asymptotically unbiased moment estimators under certain assumptions on the partition~\citep{sheppard1897corrections}. 
    These corrections are widely used in astronomy and official statistics, among other fields \citep{SCHNEEWEISS2009577,schneeweiss2010sheppard}. 
    For mean estimation, our computationally efficient algorithm generalizes Sheppard's approach to significantly more complex partitions, including high-dimensional ones (\cref{thm:convexLocalPartitionMeanEstimation}). 
    In addition, we also characterize the statistical limits of estimation with coarse data by characterizing identifiability (\cref{thm:char}).
}

    \paragraph{Learning from Partial Labels.} 
    Our problem falls in the broader category of learning from partial labels, which has been extensively studied in supervised learning~\citep{jin2002learning,durand2019learning,nguyen2008classification,cour2011learning,ratner2017snorkel,de2016deepdive,cauchois2024predictive,feng2020provably,liu2014learnability,feng2019partial}. Our work studies the unsupervised version of the problem, as we previously discussed.

\nspaceBeforeSection{}
\section{Preliminaries}
\nspaceAfterSection{}
    In this section, we present the notation we use and introduce the coarse observational model.

\paragraph{Notation.}
We begin by introducing the notation used throughout the paper.
    \nspaceAfterSection{}
\begin{itemize}[leftmargin=*]
    \item[$\blacktriangleright$] \textbf{Vectors and Balls.}
    We use standard notations for vector norms:
    For a vector $v \in \R^d$, we use $\|v\|_2$ to denote its $L_2$-norm and $\|v\|_\infty$ its $L_\infty$-norm.
    {We} use $B(v,R)$ to denote the $L_2$-ball centered at {vector} $v$ with radius $R$ and $B_{\infty}(v,R)$ for the associated $L_\infty$-ball.
    Also, for $v\in \R^d$ and $i\in [d]$,
    $v_{-i}\in \R^{d-1}$ denotes the vector obtained from $v$ by removing the $i$-th coordinate.

    \item[$\blacktriangleright$] \textbf{Partitions.}
    We usually denote by $\hyP$ a partition of $\R^d$ and by $P$ an element of this partition. We will call a partition convex if each $P \in \hyP$ is convex. 

    \item[$\blacktriangleright$] \textbf{Gaussians, Coarse Gaussians, and Distances Between Distributions.}
    We use $\normal{\mu}{\Sigma}$ to denote the Gaussian distribution with mean $\mu \in \R^d$ and covariance $\Sigma \in \R^{d \times d}.$
    In particular, $\normal{\mu}{\Sigma}$ has density $\normalMass{\mu}{\Sigma}{x}\propto e^{-\frac{1}{2} (x-\mu)\Sigma^{-\sfrac{1}{2}} (x-\mu)}$.
    The mass of some set $S \subseteq \R^d$ under this distribution is denoted by $\normalMass{\mu}{\Sigma}{S}.$ More broadly, for a distribution $p$  and set $S$, $p(S)$ is the mass of $S$ under $p$. For some partition $\hyP$, we use the notation $\coarseNormal{\mu}{\Sigma}{\hyP}$ to denote the coarse Gaussian distribution, defined as
    $\coarseNormalMass{\mu}{\Sigma}{\hyP}{P} = \int_P \normalMass{\mu}{\Sigma}{x} \d x$ for each $P \in \hyP.$ For two distributions $p,q$ over $\R^d$, we denote their total variation distance as $\tv{p}{q} = \inparen{\sfrac{1}{2}} \int_{\R^d} |p(x) - q(x)| \d x$ and their KL divergence as 
    $\kl{p}{q}= \E_{x \sim p}[\log ( \nfrac{p(x)}{q(x)} )].$
\end{itemize}

\nspaceBeforeSection{}
\subsection{The Coarse Observations Model}
\nspaceAfterSection{}
We now formalize the random process that generates coarse observations 
{in \Cref{def:coarseModel} and then discuss what it means for a problem instance to be identifiable.}

\begin{definition}
[Coarse Mean Estimation \citep{fotakis2021coarse}]
    \label{def:coarseModel}
Let $\hyP$ be an unknown partition of $\R^d$.
Consider the Gaussian distribution $\normal{\mu^\star}{I}$, 
with mean $\mu^\star \in \R^d$ {(unknown to the learner)} and identity covariance matrix.
A coarse sample is generated as follows:
\begin{enumerate}[leftmargin=*]
    \item {First, nature draws} $x$ from $\normal{\mu^\star}{I}$.
    \item {Then, it reveals} the \textbf{unique} set $P \in \hyP$ that contains $x$ (but not $x$ itself) {to the learner}.
\end{enumerate}
We denote the distribution of {the coarse sample} $P$ as $\coarseNormal{\muStar}{I}{\hyP}$.\label{def:1}
\end{definition}
{Given a desired accuracy $\eps>0$ and coarse samples $P_1,P_2,\dots $ from the generative process of \Cref{def:1}, the learner's goal is to compute an estimate $\wh{\mu}$ of $\muStar$ that, with high-probability, satisfies $\snorm{\wh{\mu} - \muStar}_2\leq \eps$ in time that is polynomial in $\nfrac{1}{\eps}$ and the dimension $d$.}

Since {the learner's goal is to recover the \textit{parameter} $\muStar$}, 
we must {at least} ensure that the problem is \emph{identifiable},
{which roughly means that parameter recovery is possible for any $\eps>0$}.
{Recall that there are some extreme partitions, such as $\hyP = \{\R^d\}$, for which this is impossible.}
{The following notion of information preservation is equivalent to identifiability (and we explain this after the definition).} 
\begin{definition}
[Information Preservation ({a.k.a.} {Identifiability})]\label{def:identifiability}
   {A} partition $\hyP$
    is {said to be} information preserving with respect to $\mu^\star$ if for any $\mu \neq \mu^\star$, it holds that $\tv{\coarseNormal{\muStar}{I}{\hyP}}{\coarseNormal{\mu}{I}{\hyP}} > 0.$
\end{definition}
Recall that the total variation or TV distance between two distributions is $0$ if and only if the two distributions are identical. 
Thus if $\hyP$ is not information-preserving, 
there are two distinct mean values $\muStar\neq \mu$
such that the corresponding coarse distributions $\coarseNormal{\muStar}{I}{\hyP}$ and $\coarseNormal{\mu}{I}{\hyP}$ are identical,
making the two indistinguishable even with infinite coarse samples.
Conversely, when a partition is information-preserving, 
then the coarse distributions induced by any $\mu\neq \muStar$ 
differ in total variation. 
Thus infinitely many coarse samples suffice to distinguish $\muStar$ from any other $\mu\neq \muStar$.

{As examples, 
we have already seen that some partitions are not identifiable; examples include the partition $\hyP = \{\R^d\}$ and the partition {in} \Cref{fig:convex-partitions}(a);}
{while} those in \Cref{fig:convex-partitions}(b,c) are identifiable.

Given the identifiability of an instance, one can study {the number of samples and time needed to estimate the mean}. 
For that, one needs to introduce a quantitative version of the identification condition{~and, indeed, to this end,} \mbox{prior work has introduced the notion of \emph{$\alpha$-information preservation:}}

\begin{definition}
[$\alpha$-Information Preservation; \citep{fotakis2021coarse}]
\label{def:informationPreservation}
    Given $\alpha\in (0,1]$, we say that the partition $\hyP$ of $\R^d$ is $\alpha$-information preserving with respect to $\mu^\star$ if 
    for any $\mu \neq \mu^\star$, it holds that 
    \begin{equation}
        \tv{\coarseNormal{\muStar}{I}{\hyP}}{\coarseNormal{\mu}{I}{\hyP}} \geq \alpha \cdot  \tv{\normal{\muStar}{I}}{\normal{\mu}{I}}\,.
    \end{equation}
\end{definition}
{As a sanity check, note that for any $\mu\neq \muStar$, we have $\tv{\normal{\muStar}{I}}{\normal{\mu}{I}} > 0$, so $\alpha$-information-preservation for any $\alpha>0$ implies information preservation. \citet{fotakis2021coarse}'s sample-efficient algorithms require this stronger $\alpha$-information-preservation notion. Our computationally efficient algorithm (\Cref{thm:convexPartitionMeanEstimation}) will also require this strengthening of identifiability.}

\nspaceBeforeSection{}
\section{Our Results}\label{sec:results}
\nspaceAfterSection{}
    In this section, we present our main results.
    {\Cref{sec:results:char} presents the characterization of identifiability, \Cref{sec:results:algorithm} the computationally-efficient algorithm, and, finally, \Cref{sec:results:friction} presents the application of the computationally-efficient algorithm to regression with friction.}

    \nspaceBeforeSection{}
     \subsection{Characterization of Identifiability for Convex Partitions}\label{sec:results:char}
     \nspaceAfterSection{}
        In this section, we give a geometric characterization of when a convex partition is identifiable. 
        Informally, a partition fails to be identifiable if and only if it is translation invariant in some direction.
        To formally state the characterization (and prove it), we need the following notion of a ``slab.''
\begin{definition}[Slab]
    Given unit vector $v\in \R^d$, a (non-empty) set $P\subseteq \R^d$ is said to be a \emph{slab} in direction $v$ if $P$ is invariant under translations along {direction} $v$.
    That is, $P$ is a slab in direction $v$ if it has the property that if $x\in P$ then $x+t v\in P$ for all $t\in\R$. 
\end{definition}
Examples of slabs include all the sets in the partition in \Cref{fig:convex-partitions}(a) and the sets in \Cref{fig:slabs}.
Note that because of the translation invariance, any slab in direction $v$ must be unbounded in that direction. 
Hence, bounded sets (such as the unit circle or a bounded box) are not slabs in any direction.

Now we are ready to formally state our characterization. Its proof appears in \Cref{sec:proof:char}.
\begin{restatable}[Characterization of identifiability for convex partitions]{theorem}{characterizationThm}\label{thm:char}
    Fix any $\muStar\in \R^d$.
    A convex partition $\hyP$ of $\R^d$ is \textbf{not} information preserving (\ie{}, \textbf{not} identifiable; \Cref{def:identifiability}) with respect to $\muStar$ if and only if there is a unit vector $v\in \R^d$ such that almost every\footnote{
        {Let $\hyB\subseteq\hyP$ be the set of non-slabs in $\hyP$.
        ``Almost every'' means that the volume of $\bigcup_{P\in \hyB} P$ is 0.}
    }
    $P\in\hyP$ is a slab in direction $v$.
\end{restatable}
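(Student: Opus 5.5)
The easy direction is the "if" part. Suppose almost every $P\in\hyP$ is a slab in direction $v$. Take $\mu=\muStar+tv$ with $t\neq 0$, and let $P$ be a slab in direction $v$. Translating by $tv$ maps $P$ onto itself, so $\normalMass{\muStar+tv}{I}{P}=\normalMass{\muStar}{I}{P}$. The non-slab sets have a union of volume zero, so each of them has mass $0$ under both Gaussians. Hence the two coarse distributions coincide, their total variation distance is $0$, and $\hyP$ is not information preserving.

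For the converse, suppose $\mu\neq\muStar$ and $\coarseNormalMass{\muStar}{I}{\hyP}{P}=\coarseNormalMass{\mu}{I}{\hyP}{P}$ for every $P$. Set $v=(\mu-\muStar)/\snorm{\mu-\muStar}_2$. By rotation and translation we may assume $\muStar=0$ and $\mu=te_1$ with $t>0$. The plan has three steps.

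\begin{enumerate}
\item \textbf{Cells of positive mass.} For every $P$ with positive mass, consider the conditional laws $p_0=\normal{0}{I}|_P$ and $p_t=\normal{te_1}{I}|_P$. The likelihood ratio is $d\normal{te_1}{I}/d\normal{0}{I}(x)=e^{tx_1-t^2/2}$. Equality of masses therefore says $\E_{p_0}[e^{tx_1}]=e^{t^2/2}=\E_{\normal{0}{I}}[e^{tx_1}]$ on every cell.
\item \textbf{One-parameter family.} Identifiability fails for a single $t$. To get a usable family, use convexity as follows. The function $\phi_P(s)=\log\normalMass{se_1}{I}{P}$ is concave in $s$ by Prékopa–Leindler, since $P$ is convex and Gaussians are log-concave. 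The $\phi_P$ satisfy $\sum_P e^{\phi_P(s)}=1$ for all $s$, and $\phi_P(0)=\phi_P(t)$ for every $P$. Concavity gives $\phi_P(s)\ge\phi_P(0)$ for $s\in[0,t]$ on each cell. Summing, $\sum_P e^{\phi_P(s)}=1=\sum_P e^{\phi_P(0)}$, so equality holds on $[0,t]$ for almost every cell. A concave function constant on an interval has zero derivative there. Hence $\phi_P'(0)=0$, which says $\E_{p_0}[x_1]=0$; the same holds at every $s\in[0,t]$.
\item \textbf{Second derivative.} Flatness also forces $\phi_P''(s)=0$ on $(0,t)$. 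Now $\phi_P''(s)=\Var_{\normal{se_1}{I}|_P}(x_1)-1$. So the conditional variance of $x_1$ on $P$ under the Gaussian equals $1$, the unrestricted variance.
\end{enumerate}

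The main obstacle is the rigidity statement. This is where the variance-reduction inequality for Gaussians restricted to convex sets (Brascamp–Lieb / Hargé, as cited in the introduction) enters. It states that for a convex $K$, $\Var_{\normal{0}{I}|_K}(\langle v,x\rangle)\le 1$, with equality only if $K$ is invariant under translation in direction $v$. Hargé's inequality gives the bound; the equality case needs extra work. My plan for equality is to use Caffarelli contraction / optimal transport. The Brenier map $T$ from $\normal{0}{I}$ to $\normal{0}{I}|_K$ is $1$-Lipschitz. Write $x_1$ under the target as $T(x)_1$ and apply the Gaussian Poincaré inequality: $\Var(T_1)\le\E\snorm{\nabla T_1}_2^2\le 1$. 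Equality in Poincaré forces $T_1$ to be affine, and then the contraction bound forces $T_1(x)=x_1+c$. So the push-forward has an unrestricted standard Gaussian first marginal direction, conditionally independent in a suitable sense. Together with convexity of $K$, this yields $K+\R e_1=K$ up to null sets. Convexity of $P$ then upgrades "up to null sets" to "exactly", after taking closures or interiors, which is harmless.

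Finally, the cells that have zero mass, or that fail the "almost every" step, have union of volume zero. This gives the statement with $v=e_1$.
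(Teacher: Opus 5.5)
Your easy direction is the same as the paper's. For the hard direction your route differs at both stages, and it works once one step is made precise.

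For the unit-variance step, the paper uses convexity of the aggregate $\cL$ to get $u^\top\nabla^2\cL(\muStar)u=0$, i.e.\ $\E_P\Var(\langle u,x\rangle)=1$, and then applies $\Var\le 1$ cellwise. Your per-cell argument (log-concavity of $s\mapsto\cN(se_1,I;P)$, equal values at $0$ and $t$, masses summing to one) reaches the same point directly, with no exceptional positive-mass cells.

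For rigidity, the paper passes to the $u$-marginal and uses the equality case of \Cref{lem:1d-vr} to get $W_P$ constant. It then runs the Prékopa--Leindler equality case on the sections $C_P(y)$. Your Caffarelli--Poincaré argument replaces both steps: it uses a heavier theorem instead of an elementary 1D reduction, but gets the slab structure in one stroke.

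As written, though, the step after ``$T_1$ is affine'' is only gestured at. Knowing $T_1(x)=x_1+c$, i.e.\ that the $x_1$-marginal is an unrestricted Gaussian, only gives that $W_P$ is constant (\Cref{lem:const-W}). That is exactly where the paper still needs Prékopa--Leindler. What closes your route is that $T=\nabla\psi$ with $0\preceq\nabla^2\psi\preceq I$ a.e. Then $\norm{\nabla T_1}^2=\norm{\nabla^2\psi\,e_1}^2\le e_1^\top\nabla^2\psi\,e_1\le 1$ pointwise, so $\E\norm{\nabla T_1}^2=1$ already forces $\nabla^2\psi\,e_1=e_1$ a.e. You do not even need the equality case of Poincaré. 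It is this symmetry of the Jacobian, not the Lipschitz bound alone, that pins the direction to $e_1$. Hence $T(x+se_1)=T(x)+se_1$ for all $x,s$. So $T(\R^d)$ is an $e_1$-invariant subset of $\bar K$ of full $\cN(0,I)|_K$-measure, which makes $\bar K$ and $\operatorname{int}K$ slabs.

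Your final ``upgrade to exactly, which is harmless'' is a genuine error. No argument can repair it, because the statement as written is false. In $\R^2$ let $P_1=\{(y,z):z>0\}\cup\{(y,0):y\ge 0\}$ and $P_2=\R^2\setminus P_1$. Both are convex, and $\cN(\mu,I;P_1)=\Phi(\mu_2)$ (with $\Phi$ the standard normal CDF) does not depend on $\mu_1$. So the partition is not identifiable, yet neither cell is a slab in any direction. What your argument (and the paper's) actually proves is that the interior, equivalently the closure, of every positive-mass cell is a slab in direction $v$. Each such cell is therefore a slab only up to a null set, and the theorem should be stated that way.

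The paper's proof has the same flaw. The Prékopa--Leindler equality case holds only a.e., so in \Cref{prop:const-sections} one gets $A=C+tw$ only up to null sets. Indeed $P_1$ satisfies that proposition's hypothesis ($W_{P_1}$ constant) while violating its conclusion.

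Relatedly, your closing claim that the zero-mass cells have null union is false in general: the paper's vertical-line example consists entirely of zero-mass cells. Like the paper, whose $\cL$ is then identically $+\infty$, you are tacitly assuming that the countably many positive-mass cells cover $\R^d$ up to a null set.
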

    {We have already seen one example of a non-identifiable partition in \Cref{fig:convex-partitions}(a) in two dimensions. 
    One can also construct examples in higher dimensions, \eg{}, {partitions in three dimensions using}
    the slabs in \Cref{fig:slabs}(a).
    Some more remarks are in order.
    \begin{enumerate}[leftmargin=*]
        \item Information preservation (\ie{}, identifiability) does not depend on $\mu^\star$: either $\hyP$ is identifiable for every $\mu^\star$ or for none. 
        Hence, the difficulty of coarse Gaussian mean estimation comes from the structure of the coarse samples and not the specific Gaussian distribution.
        \item {The characterization in \Cref{thm:char} is already sufficient to deduce identifiability with respect to several interesting partitions. 
        For instance, consider the partition induced by rounding, \ie{}, the grid partition $\hyP$ of $\R^d$ formed by axis-aligned boxes of side length, say, 1.
        This is identifiable by \Cref{thm:char} because every $P\in \hyP$ is convex and bounded (not slabs).
        \item The ``almost every'' quantifier in \Cref{thm:char} is important: some convex partitions $\hyP$ contain non-slabs but are non-identifiable because the non-slabs have zero total volume.
        For instance, {consider the following}
        partition of $\R^2$ by (infinite) vertical lines $\ell_a\coloneqq \inbrace{(a,y): y\in \R}$ at each $x=a$ for $a\neq 1$,
        and at $x=1$ by two half-lines $\ell_{1,+}\coloneqq \inbrace{(1,y): y\geq 0}$ and $\ell_{1,-}\coloneqq  \inbrace{(1,y): y < 0}$.
        Here $\ell_{1,+}$ and $\ell_{1,-}$ are non-slabs, yet the partition is non-identifiable because these sets together have zero volume.
        Intuitively, manipulating zero-volume sets does not change the coarse distribution in total variation distance (see \Cref{def:coarseModel}).}
        \item Finally, we note that identifiability is purely information‑theoretic.
        Being identifiable does not by itself guarantee sample‑efficient estimation. 
        For this, the natural strengthening of information preservation, $\alpha$-information preservation, is required (\Cref{def:informationPreservation}).
    \end{enumerate} 

    {
    \begin{SCfigure}[45][htbp]
        \raggedright
        \begin{subfigure}[t]{0.08\linewidth}
            \includegraphics[width=\linewidth,height=1.5\linewidth]{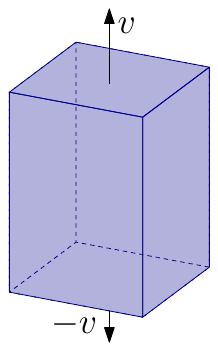}
        \end{subfigure}
        \hspace{1em}
        \begin{subfigure}[t]{0.08\linewidth}
            \includegraphics[width=\linewidth,height=1.5\linewidth]{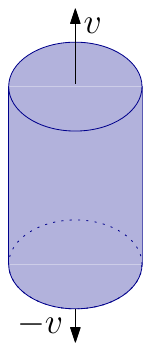}
        \end{subfigure}
        \hspace{1em}
        \begin{subfigure}[t]{0.08\linewidth}
            \includegraphics[width=\linewidth,height=1.5\linewidth]{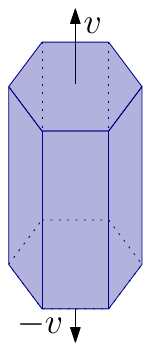}
        \end{subfigure}
        \caption{{An illustration of different slabs in direction $v$ in three dimensions. Intuitively, a slab in direction $v$ is formed by taking a 2D shape ((a) square, (b) circle, (c) hexagon in each of the three subfigures) and extending it infinitely in direction $v$ (and $-v$).}}
        \label{fig:slabs}
    \end{SCfigure}
    }

    \nspaceBeforeSection{}
    \subsection{Polynomial-Time Gaussian Mean Estimation from Convex Partitions}\label{sec:results:algorithm}
    \nspaceAfterSection{}
        {Next, we present our computationally efficient algorithm showing that polynomial-time coarse Gaussian mean estimation is possible from convex partitions.
        The running time of these algorithms degrades naturally with how much information the partition preserves, as captured by the information preservation parameter $\alpha$ (\Cref{def:informationPreservation}). 

        To be precise about running time, we need to specify how the algorithm accesses a coarse sample $P$. 
        In high dimensions, the representation of the set can dramatically affect the computational complexity of a problem,
        a well-known issue in {geometric algorithms~\citep{grotschel1988geometric}}. 
        Our algorithm works with natural encodings (\eg{}, polytopes described explicitly by linear inequalities,
        or convex bodies implicitly described by a separation oracle and an inner radius) and runs in polynomial time relative to the \textit{bit complexity} of the encodings of observed sets.\footnote{Bit complexity measures how many bits are needed to encode the input using the standard binary encoding (which, \eg{}, maps integers to their binary representation, rational numbers as a pair of integers, and vectors/matrices as a tuple of their entries) \cite[Section 1.3]{grotschel1988geometric}}
        }
    
    Our computationally-efficient result is as follows. 
    Its proof appears in \Cref{sec:coarseGaussianEstimation}.
    \begin{restatable}[Coarse Mean Estimation Algorithm]{theorem}{thmConvexPartitionMeanEstimation}\label{thm:convexPartitionMeanEstimation}
        {
        There is an algorithm that, 
        for any true mean $\mu^\star \in \R^d$ with $\norm{\muStar}_2\leq D$
        and any partition $\hyP$ that is convex
        and $\alpha$-information preserving with respect to $\muStar$ (\Cref{def:informationPreservation}),
        gets as input the desired accuracy and confidence $\epsilon,\delta \in (0,1)$, draws \iid{} coarse Gaussian samples from $\hyP$, and
        outputs an estimate $\wh{\mu}$ satisfying
        \[
            \norm{\wh{\mu} - \muStar}_2 \leq \eps
        \]
        with probability $1-\delta$. 
        {The} algorithms uses
        {\[
            m = \tilde O\inparen{
                \frac{dD^2\log{\nfrac1\delta}}{\alpha^4} 
                + 
                \frac{d\log{\nfrac1\delta}}{\alpha^4\eps^2}
            }
        \]}
        {coarse samples} and {runs in time polynomial in $m$ and the bit complexity of the coarse samples}.
        }
    \end{restatable}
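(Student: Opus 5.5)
The plan is to run projected SGD on the population negative log-likelihood of the coarse observations,
\[
\cL(\mu) = \E_{P\sim \coarseNormal{\muStar}{I}{\hyP}}\left[-\log \normalMass{\mu}{I}{P}\right].
\]
First I would record its basic properties. Its gradient at $\mu$ is $\E_P[\mu - \E_{x\sim\normal{\mu}{I}}[x\mid x\in P]]$, and its Hessian is $I - \E_P[\Cov_{\normal{\mu}{I}}(x\mid x\in P)]$. Since each $P$ is convex, the Brascamp--Lieb (variance-reduction) inequality for Gaussians restricted to convex sets gives $\Cov(x\mid x\in P)\preceq I$. Hence $\cL$ is convex. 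Moreover $\nabla\cL(\muStar)=0$, because averaging conditional means over $P$ recovers $\muStar$.

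A stochastic gradient only needs a sample from $\normal{\mu}{I}$ conditioned on the observed convex set $P$. This can be produced in polynomial time, in the bit complexity of $P$, by log-concave sampling with a membership or separation oracle. When $\normalMass{\mu}{I}{P}$ is tiny one first locates a point of $P$ with the ellipsoid method. Biases from approximate sampling are pushed to inverse-polynomial size and absorbed into the analysis.

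The second step is to convert $\alpha$-information preservation into curvature. I would aim to show that for $\mu$ in a ball of radius $O(D+1)$,
\[
\langle \nabla \cL(\mu), \mu - \muStar\rangle \ \gtrsim\ \alpha^2 \min\{\|\mu-\muStar\|_2, \|\mu-\muStar\|_2^2\}.
\]
The route is to integrate the Hessian along the segment from $\muStar$ to $\mu$. Then $\cL(\mu)-\cL(\muStar) = \kl{\coarseNormal{\muStar}{I}{\hyP}}{\coarseNormal{\mu}{I}{\hyP}} \geq 2\,\mathrm{d_{TV}}^2 \geq 2\alpha^2\, \mathrm{d_{TV}}(\normal{\muStar}{I},\normal{\mu}{I})^2$ by Pinsker, and the last quantity is $\asymp \alpha^2\min\{1,\|\mu-\muStar\|_2^2\}$. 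Convexity then turns the function gap into the gradient inequality. This yields $\alpha^2$-strong convexity near $\muStar$ and linear growth far away. Far away we run SGD until the iterate enters a constant-radius ball, which accounts for the $D^2 d/\alpha^4$ term. We then finish with strongly convex SGD, where variance $\sigma^2$ gives error $\sigma^2/(\alpha^4 T)$ in squared distance, which accounts for the $d/(\alpha^4\eps^2)$ term. The $\log(1/\delta)$ factor comes from a high-probability SGD bound or from median-of-means boosting over independent runs.

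The main obstacle is bounding the second moment of the stochastic gradient $\mu - x$, where $x\sim\normal{\mu}{I}$ is conditioned on $P$. When $P$ is unbounded (for instance a halfspace), $\E\|x-\mu\|^2$ is not controlled by $d$ plus a radius term. I would handle this in two stages. First, treat local partitions, where every set has radius at most $R$. There the second moment is $O(d + R^2 + \|\mu-\muStar\|^2)$, which suffices. Second, reduce a general convex partition to a local one by intersecting every observed set with a fine grid of boxes of side about $1/\sqrt{\log(\cdot)}$ around the current region. The learner can do this because it also knows which cell is hit? It cannot: it only sees $P$. So the reduction must instead refine the partition implicitly, by replacing $P$ with $P\cap B_\infty(\cdot)$ pieces sampled proportionally under the current $\mu$. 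The point to check is that refinement preserves convexity and degrades $\alpha$ by at most a constant, since refining only increases the coarse TV. Making this refinement consistent with the true observation model, while controlling the extra bias it introduces, is where I expect most of the work to lie.
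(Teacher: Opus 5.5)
Most of your plan matches the paper. This includes convexity of $\cL$ from the Brascamp--Lieb covariance bound, the Pinsker plus $\alpha$-information-preservation growth bound $\cL(\mu)-\cL(\muStar)\gtrsim\alpha^2\min\{1,\snorm{\mu-\muStar}_2^2\}$, a two-stage SGD, and boosting. Turning the growth bound into a gradient inequality and running strongly-convex-style SGD is a harmless variant of the paper's iterative PSGD under local growth with the clustering trick.

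\textbf{The gap.} The genuine gap is the step you flag yourself: you never give a valid reduction from a general convex partition to a local one, and the mechanisms you sketch do not work.
\begin{itemize}
\item A fine-grid refinement is not observable, because the learner cannot tell which grid cell contains the hidden $x$.
\item Choosing a piece $Q\subseteq P$ with probability proportional to $\normalMass{\mu}{I}{Q}$ and then sampling from $\normal{\mu}{I}$ conditioned on $Q$ is distributionally identical to sampling from $\normal{\mu}{I}$ conditioned on $P$. So it reproduces the original stochastic gradient and its second moment exactly. The refined likelihood would need the piece chosen according to the hidden $x\sim\normal{\muStar}{I}$, which you do not have.
\item A box centered at the current iterate makes the objective itself iterate-dependent.
\item Refinement does not degrade $\alpha$ at all; by data processing the coarse TV can only grow.
\end{itemize}

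\textbf{The missing idea.} You need neither small cells nor randomization; use one \emph{fixed} box $B_\infty(0,R)$ with $R=D+O(\log(md/\delta))$.
\begin{itemize}
\item Since $\snorm{\muStar}_\infty\le D$, with probability $1-\delta$ all $m$ hidden points land in this box.
\item Let $\hyP(R)$ have cells $P\cap B_\infty(0,R)$ for $P\in\hyP$, plus singletons outside the box. It is convex and $R$-local. It is still $\alpha$-information preserving because it refines $\hyP$.
\item On the good event, the $\hyP(R)$-cell of the hidden $x$ is computable from $P$ alone: it is just $P\cap B_\infty(0,R)$.
\item A coupling argument then shows the algorithm fed $P\cap B_\infty(0,R)$ behaves exactly as on i.i.d.\ samples from $\coarseNormal{\muStar}{I}{\hyP(R)}$. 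There the second-moment bound $O(D^2+dR^2)$ holds, and this produces the $dD^2/\alpha^4$ term.
\end{itemize}

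\textbf{A simpler route.} The obstacle is also milder than you fear. For convex $P$, the map $\mu\mapsto\E_{\normal{\mu}{I}}[x\mid x\in P]$ has Jacobian $\Cov_{\normal{\mu}{I}}(x\mid x\in P)\preceq I$, so it is $1$-Lipschitz. Combined with Jensen at $\muStar$, this bounds the averaged second moment of $\mu-x$ by $O(d+\snorm{\mu-\muStar}_2^2)$ with no locality at all. Either argument closes the gap, but your proposal supplies neither.
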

    {Thus, when the information preservation is constant, \ie{}, $\alpha=\Omega(1)$, and we are provided a constant warm-start to the mean (\ie{}, $D=O(1)$),
    the above algorithm obtains an $\eps$-accurate estimate to $\muStar$ in $\smash{\wt{O}}(\nfrac{d}{\eps^2})$ time while using $\smash{\wt{O}}(\nfrac{d}{\eps^2})$ samples,
    matching the sample complexity of \citet{fotakis2021coarse} while being computationally efficient.}
    {Moreover, we can actually prove a stronger guarantee: we can replace $\snorm{\muStar}_2\leq D$ in the theorem by the strictly weaker requirement that $\snorm{\muStar}_\infty\leq D$ (this is strictly weaker since $\snorm{\muStar}_\infty \leq \snorm{\muStar}_2$).}

    \paragraph{Extension to Mixtures of Partitions Model.}
    Some works on coarse estimation consider an extension of the model in \Cref{def:coarseModel} where {instead of a single partition $\hyP$, each coarse sample is drawn from a different partition $\hyP$.
    Formally, there are convex partitions $\hyP_1, \dots, \hyP_\ell$ and the observations are sets $P$ where $P\in \hyP_j$ for some $1\leq j\leq \ell$, where $j$ is chosen independently by some prior distribution over $[\ell]$.
    The underlying data-generating distribution $\normal{\muStar}{I}$ remains the same but the observations are created by first drawing a partition index $1\leq j\leq \ell$ according to some mixture probability, independently drawing $x\sim \normal{\muStar}{I}$, and then outputting $P\in \hyP_j$ containing $x$.
    We remark that our algorithm, 
    like the sample-efficient method of \citet{fotakis2021coarse},
    also straightforwardly extends to this model with the same time and sample complexity.}%

    \nspaceBeforeSection{}
    \subsection{Application to Regression with Friction}\label{sec:results:friction}
    \nspaceAfterSection{}
        Next, to illustrate the flexibility of our algorithmic techniques, we provide an application to linear regression with market friction, a problem dating back to \citet{rosett1959Friction}.
        We present our main result and an informal description of the model here. 
        The formal statement of the result, its proof, and a formal description of the model appear in \Cref{sec:friction}.

        \paragraph{{Setup: Linear Regression with Market Friction.}}
            Consider the task of analyzing the relationship between changes in the yield of a bond $x\in \R$ and the change in an investor's bond holdings $y\in \R$.
            In an ideal world, we would expect a linear relationship between changes in yield and changes in holdings $y = x\cdot \wstar  + \xi$ for $\xi\sim \normal{0}{1}$.
            However, due to transaction costs, bid-ask spreads, and the time and effort required to execute a trade, investors will not adjust their holdings for small changes in the bond's yield.
            Thus, even if the underlying relationship is linear, we only observe a transformation $(x, z=c(y))$ by some friction function $c: \R\to \R$ (\Cref{fig:friction}(a)).
    
    {
    \begin{SCfigure}[50][htbp]
        \raggedright
        \begin{subfigure}[t]{0.2\linewidth}
            \centering
            \includegraphics[width=0.8\linewidth]{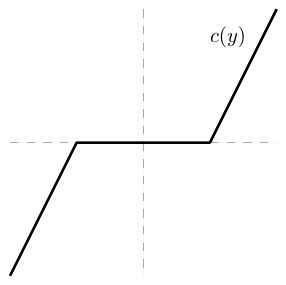}
        \end{subfigure}
        \begin{subfigure}[t]{0.2\linewidth}
            \centering
            \includegraphics[width=0.8\linewidth]{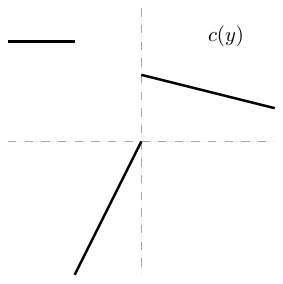}
        \end{subfigure}
        \begin{subfigure}[t]{0.2\linewidth}
            \centering
            \includegraphics[width=0.8\linewidth]{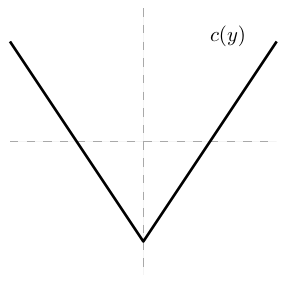}
        \end{subfigure}
        \caption{Illustrations of different friction functions $c:\R\to \R$.
        (a) Left: monotone with convex pre-images.
        (b) Middle: not monotone but has convex pre-images.
        (c) Right: not monotone and has non-convex pre-images.}
        \label{fig:friction}
    \end{SCfigure}
    }

        \paragraph{{Translating Market Friction to Coarsening.}}
            {To utilize our algorithmic techniques, we need to frame the problem as an instance of coarsening.} 
            Indeed, we can think of an observation pair $(x, z=c(y))$ as the coarse observation $(x, S)$ where $S = c^{-1}(z)\sset \R$.
            This gives us a bridge to apply the techniques we developed for coarse mean estimation.
            As in the case of coarse mean estimation, it may be information theoretically impossible to estimate $\wstar\in \R^d$ for all friction functions.
            Thus, 
            we require an analogous notion of information preservation for instances of linear regression with friction (\Cref{asmp:friction:informationPreservation}) and design an algorithm for estimating $\wstar$  when the friction function $c$ is information preserving and also has convex pre-images, \ie{}, $c^{-1}(z)\sset \R$ is convex for all $z\in \R$.
            {This is a large class and includes, \eg{},} monotonically non-decreasing functions as a special case.
    
    \begin{theorem}[Informal; See \Cref{cor:friction:linearEstimation}]\label{thm:friction}
        There is an algorithm for $\alpha$-information preserving instances
        of linear regression with friction
        that, 
        given $n$ observations, 
        outputs an estimate $\wh{w}\in \R^d$ satisfying
        $
            \smash{\snorm{\wh{w} - \wstar}_2^2
            \leq \wt{O}\inparen{\frac{d}{\alpha^4 n}}}
        $
        with probability {$0.99$} in $\poly(n, d)$ time.
    \end{theorem}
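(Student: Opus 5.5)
Because the statement is informal, the plan is to reduce regression with friction to the coarse-observation machinery behind \Cref{thm:convexPartitionMeanEstimation}, with the covariate playing the role of a known shift. Condition on a covariate $x$. The latent response $y=x^\top\wstar+\xi$ is distributed as $\normal{x^\top\wstar}{1}$, and we observe the convex set $S=c^{-1}(c(y))\sset\R$. So each pair $(x,S)$ is a one-dimensional coarse Gaussian sample whose mean is linear in the unknown parameter. The natural objective is the population negative log-likelihood
\[
\cL(w)=\E_{x}\,\E_{S}\insquare{-\log \normalMass{x^\top w}{1}{S}}\,.
\]
Since $S$ is an interval, $\log\normalMass{\mu}{1}{S}$ is concave in $\mu$ by log-concavity of Gaussian measure of convex sets. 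Hence $\cL$ is convex in $w$, and we will run projected SGD on it. The stochastic gradient is $x\,(x^\top w-\E[y\mid y\in S, y\sim\normal{x^\top w}{1}])$, which we compute by sampling or evaluating a truncated normal in $\poly$ time.

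The steps, in order, are as follows.
\begin{enumerate}
\item Show that $\wstar$ minimizes $\cL$. This follows since $\nabla\cL(\wstar)=0$ by the tower property.
\item Prove strong convexity of $\cL$ near $\wstar$ with parameter $\Omega(\alpha^2)\cdot\lambda_{\min}(\E[xx^\top])$. Here we use the $\alpha$-information-preservation assumption for friction (\Cref{asmp:friction:informationPreservation}). The Hessian is $\E[xx^\top(1-\var[y\mid S])]$, and the quantity $1-\E\var[y\mid S]$ equals the variance of the conditional mean. This is the Fisher information of the coarse model, and it will be lower bounded through the TV-ratio $\alpha$ exactly as in the analysis underlying \Cref{thm:convexLocalPartitionMeanEstimation}.
\item Bound the second moment of the stochastic gradients. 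This is where local partitions were needed in \Cref{sec:meanEstimation:projectionSetlocalPartitions}. Unbounded intervals $S$ cause trouble, so we apply the reduction of \Cref{sec:meanEstimation:generalPartitions}: intersect or refine $S$ with a grid of width $O(\sqrt{\log n})$ around the current mean, which only adds information, then truncate at high-probability radius. After this, $\E\snorm{g}^2\lesssim d\,\polylog(n)$ under a bounded or Gaussian design.
\item Apply standard SGD for strongly convex objectives. This gives $\snorm{\wh w-\wstar}_2^2\lesssim \frac{G^2}{\lambda^2 n}=\wt O\inparen{\frac{d}{\alpha^4 n}}$ with constant probability, boosted by a median trick if needed. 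A warm start obtained by a crude first phase plays the role of $D$.
\end{enumerate}

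The main obstacle is step 2: transferring the TV-based $\alpha$-preservation into a \emph{uniform} strong-convexity bound over the region SGD explores. That region includes $w$ away from $\wstar$, where the shifted means $x^\top w$ differ.

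My first attempt at step 2 is to mimic the coarse-mean argument. The TV bound gives $\cL(w)-\cL(\wstar)\ge \E_x\kl{\cdot}{\cdot}\gtrsim \alpha^2\E_x\min(1,(x^\top(w-\wstar))^2)$ by Pinsker. I would then combine this with convexity to get a quadratic growth condition on a ball, which suffices for SGD's rate. The $\min(1,\cdot)$ is handled by restricting to a ball of constant radius after the warm start.
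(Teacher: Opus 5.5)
Most of your outline matches the paper: the convex negative log-likelihood, truncation to an $R$-local partition, and a warm start followed by SGD. Your step-2 ``first attempt'' is essentially \Cref{lem:frictionLocalGrowth}; the Hessian/Fisher-information route is unnecessary, since the paper never lower-bounds the Hessian and works only with function-value growth. Your step 4 device is also a valid simplification of the paper's restart scheme. On a projection ball where quadratic growth holds, convexity gives $\langle \nabla\cL(w), w-\wstar\rangle \ge \cL(w)-\cL(\wstar)\ge \frac{\lambda}{2}\snorm{w-\wstar}_2^2$, and that is all the last-iterate distance recursion needs. This requires two things: every step must see a fresh unbiased gradient of $\cL$, and the ball must be small enough that $|x^\top(w-\wstar)|\le 1$, e.g.\ after normalizing $\snorm{x}_2\le 1$ (or via Gaussian anti-concentration). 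Otherwise the $\min\{1,\cdot\}$ costs a factor $\max\snorm{x}_2^2$ in $\lambda$.

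The genuine gap is the model. You write $\E_x$ and treat each $(x,S)$ as an i.i.d.\ draw from a random design. The theorem, however, refers to \Cref{def:friction}: $x_1,\dots,x_n$ are arbitrary and fixed, each has exactly one coarse response, and $\alpha$-preservation and $\frac1n\sum_i x_ix_i^\top\succeq b^2 I$ are conditions on that fixed set. There $\cL(w)=\frac1n\sum_i \ell_i(x_i^\top w)$ is an average of $n$ different functions, and observation $i$ gives an unbiased gradient of $\ell_i$ only. Sampling indices with replacement would reuse the same $S_i$; that optimizes the empirical NLL, not $\cL$, and you would then owe a separate statistical analysis of its minimizer. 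So ``standard SGD'' in step 4 is unjustified in this model. This is exactly why the paper makes a single pass over a random permutation, uses the without-replacement bound \Cref{thm:friction:sgdWithoutReplacement} (with its extra $O(LC/\sqrt n)$ term), and needs the staged conditioning argument of \Cref{thm:one-passs-SGD-local-growth-convergence}. Your route can be repaired, but you must supply the argument. By well-specification, each $\ell_i$ is itself convex and minimized at $\wstar$ (it is a KL divergence plus a constant). Hence $\langle\nabla\ell_i(w),w-\wstar\rangle\ge \ell_i(w)-\ell_i(\wstar)\gtrsim \alpha^2\min\{1,(x_i^\top(w-\wstar))^2\}$ for every $i$. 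In one pass over a random permutation, the conditional drift at step $t$ is the average of this over the not-yet-used indices. The distance recursion therefore survives as long as the unused design matrix stays $\succeq\Omega(b^2)I$. That holds w.h.p.\ during the first half of the pass, by matrix concentration for sampling without replacement.

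A smaller issue is step 3 as written. You cannot ``refine $S$ with a grid'', because you do not know which cell contains the latent $y$. A window $W_t$ centered at the current prediction $x^\top w_t$ need not contain $y$ while $w_t$ is still far from $\wstar$, e.g.\ during the crude warm-start phase. In that case $S\cap W_t$ is not a draw from any refined partition, and the gradient is biased. The paper's reduction works because the fixed window $[-R,R]$ with $R=CD\sqrt d+O(\log n)$ contains every latent $y_i$ w.h.p., regardless of the iterate. So $S_i\cap[-R,R]$ is exactly a sample from the refinement $\hyP(R)$, which inherits $\alpha$-preservation; use that window instead.
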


\nspaceBeforeSection{}
\section{Overview of Main Proofs}\label{sec:overview}\label{sec:techniques} 
\nspaceAfterSection{}
    Next, we overview the proofs of our main results
    and defer the exact details to \Cref{sec:proof:char,sec:coarseGaussianEstimation}.

    {A central component of both the characterization and the algorithmic result is the log-likelihood function with respect to coarse data.
    We refer the reader to \citet{rossi2018mathematical,cover1999elements} for useful preliminaries on the log-likelihood.
    For a partition $\hyP$ and true mean $\muStar$, the negative log-likelihood function arising from the coarse-observations model (\Cref{def:coarseModel}) is as follows
    \[
        \cL(\mu)\;\coloneqq\; 
        \E\nolimits_{P\sim \coarseNormal{\muStar}{I}{\hyP}}\insquare{-\log \coarseNormalMass{\mu}{I}{\hyP}{P}}\,.
    \]
    {To illustrate the usefulness of the log-likelihood,} we present a short proof connecting a certain property of the log-likelihood to identifiability; this is the starting point of our characterization.}
    \begin{proposition}[Likelihood-Based Characterization of Identifiability]
        \label{thm:likelihood}
     A partition $\hyP$ of $\R^d$ is information preserving (\Cref{def:identifiability})
     with respect to $\muStar\in \R^d$
     if and only if $\cL(\cdot)$ has a \emph{unique} minimizer.
    \end{proposition}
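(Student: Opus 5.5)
The plan is to write $\cL$ as the minimum value plus a KL divergence, and then read off both directions of the equivalence. Write $p^\star \coloneqq \coarseNormal{\muStar}{I}{\hyP}$ and $p_\mu \coloneqq \coarseNormal{\mu}{I}{\hyP}$. Both are distributions over the (at most countably many, up to null sets) elements of $\hyP$ with positive mass.

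The first step is to decompose the objective as
\[
    \cL(\mu) \;=\; \E\nolimits_{P\sim p^\star}\insquare{-\log p^\star(P)} \;+\; \kl{p^\star}{p_\mu}\,.
\]
This is just adding and subtracting $\log p^\star(P)$ inside the expectation. The first term does not depend on $\mu$. Since $\kl{p^\star}{p_\mu}\ge 0$, with equality if and only if $p^\star = p_\mu$, we get three facts:
\begin{itemize}
    \item $\muStar$ is always a minimizer of $\cL$;
    \item the minimum value equals the entropy term;
    \item $\mu$ is a minimizer if and only if $p_\mu = p^\star$, which holds if and only if $\tv{p^\star}{p_\mu}=0$.
\end{itemize}

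The second step is to show these quantities are well defined. A Gaussian puts positive mass on every positive-volume set, so $p_\mu(P)>0$ exactly when $p^\star(P)>0$. Hence the KL divergence cannot be infinite for trivial support reasons. The only technicality is finiteness of the entropy term $\E_{p^\star}[-\log p^\star(P)]$, since $\hyP$ could be countably infinite. If this term can be infinite, I would handle it by working with the difference $\cL(\mu)-\cL(\muStar)$, or equivalently by defining $\cL$ up to this additive constant. Then minimizers are by definition the $\mu$ with $\kl{p^\star}{p_\mu}=0$.

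The third step is the equivalence itself, and both directions are immediate from the third fact above. Suppose $\hyP$ is information preserving. Then every $\mu\ne\muStar$ has $\tv{p^\star}{p_\mu}>0$, so $\kl{p^\star}{p_\mu}>0$, and $\muStar$ is the unique minimizer. Conversely, suppose the minimizer is unique. It must be $\muStar$, so every $\mu\neq\muStar$ has $\kl{p^\star}{p_\mu}>0$, which means $p_\mu\ne p^\star$ and hence positive TV distance.

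The main obstacle is only the measure-theoretic bookkeeping in the second step: infinite partitions, possible infinite entropy, and zero-mass cells. The core argument is Gibbs' inequality.
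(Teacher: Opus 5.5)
Your proof is correct and is essentially the paper's argument. Both rest on the identity $\cL(\mu)-\cL(\muStar)=\kl{p^\star}{p_\mu}$. For strict positivity when $\mu\neq\muStar$, the paper uses Pinsker's inequality ($\mathrm{KL}\ge 2\,\mathrm{TV}^2>0$), while you use the equality case of Gibbs' inequality; this is an equivalent step. The converse is in both cases the observation that equal coarse distributions give equal values of $\cL$.

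One small correction: your aside that the coarse distributions live on countably many cells ``up to null sets'' is false in general. In the paper's vertical-line example every cell is null, yet the cells cover $\R^2$. Your argument does not need this claim, since it only uses $\kl{p^\star}{p_\mu}\ge 0$ with equality iff $p_\mu=p^\star$.
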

    {Thus, if $\hyP$ is identifiable, then the corresponding negative log-likelihood has a unique minimizer $\cL(\cdot)$. 
    Further, one can show that $\muStar$ is always a minimizer of $\cL(\cdot)$.\footnote{This follows because of the classic equality $\cL(\mu)=\kl{\coarseNormal{\muStar}{I}{\hyP}}{\coarseNormal{\mu}{I}{\hyP}}+\text{const}$~\citep{cover1999elements}. Indeed, since the KL is always non-negative and it achieves a value 0 at $\muStar$, $\muStar$ must be a minimizer of $\cL(\cdot)$.}
    Therefore, \Cref{thm:likelihood} also tells us that if a partition is identifiable, then the unique minimizer of $\cL(\cdot)$ is $\muStar$ itself.}
    \begin{proof}[Proof of \Cref{thm:likelihood}]
        Suppose the instance is identifiable.
        {The following holds for any $\mu\neq \muStar$,}
        \[
            \textstyle
            \cL(\mu) - \cL(\muStar)
            = \kl{\coarseNormal{\muStar}{I}{\hyP}}{\coarseNormal{\mu}{I}{\hyP}}
            \geq 2\tv{\coarseNormal{\muStar}{I}{\hyP}}{\coarseNormal{\mu}{I}{\hyP}}^2
            > 0\,.
        \]
        Here, the first equality is a standard property of the log-likelihood~\citep{cover1999elements},
        the second is Pinsker's inequality, 
        and the third is the definition of identifiability of the instance.
        
        {On the other hand, if} the instance is not identifiable,
        we can find some $\mu\neq \muStar$ for which $\tv{\coarseNormal{\muStar}{I}{\hyP}}{\coarseNormal{\mu}{I}{\hyP}} = 0$. 
        {Therefore, $\coarseNormal{\muStar}{I}{\hyP} = \coarseNormal{\mu}{I}{\hyP}$, which implies that $\cL(\mu) - \cL(\muStar) = 0$ because {$\kl{\coarseNormal{\muStar}{I}{\hyP}}{\coarseNormal{\mu}{I}{\hyP}}=0$. 
        Hence, $\mu\neq \muStar$ is also minimizes $\cL(\cdot)$.}}
    \end{proof}
    Another useful property of $\cL(\cdot)$ is that it is convex whenever the partition $\hyP$ is convex.
    This result is due to \citet{fotakis2021coarse}, who proved it using a variance reduction inequality of \citet{harge2004convex}.
    \begin{proposition}[Convexity; \citep{fotakis2021coarse}]
        \label{thm:convexity}
        If $\hyP$ is a convex partition, then $\cL(\cdot)$ is convex.
    \end{proposition}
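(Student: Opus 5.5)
We prove convexity of $\cL$ by writing it as a sum over the sets of the partition and showing that each summand is convex; the only real input is a variance comparison between conditioned and unconditioned Gaussians. First, expand the expectation:
\[
  \cL(\mu) = -\sum_{P\in\hyP} \coarseNormalMass{\muStar}{I}{\hyP}{P}\,\log \normalMass{\mu}{I}{P}.
\]
The weights $\coarseNormalMass{\muStar}{I}{\hyP}{P}$ are nonnegative and do not depend on $\mu$. Hence it suffices to show that, for every convex set $P$ of positive volume, the map $f_P(\mu)\coloneqq -\log \normalMass{\mu}{I}{P}$ is convex. Sets of zero volume contribute nothing.

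Write
\[
  \normalMass{\mu}{I}{P} = (2\pi)^{-d/2}\, e^{-\snorm{\mu}_2^2/2}\int_P e^{\inangle{\mu,x}-\snorm{x}_2^2/2}\,\d x .
\]
Then
\[
  f_P(\mu) = \tfrac12\snorm{\mu}_2^2 - \log Z_P(\mu) + \text{const},
  \qquad
  Z_P(\mu)=\int_P e^{\inangle{\mu,x}}\,\d\gamma(x),
\]
where $\gamma$ is the standard Gaussian measure. Now $\log Z_P$ is the log-partition function of an exponential family. Differentiating under the integral, which is justified by dominated convergence since the Gaussian tails dominate, gives
\[
  \nabla^2 \log Z_P(\mu) = \Cov_{x\sim \truncatedNormal{\mu}{I}{P}}[x],
\]
where $\truncatedNormal{\mu}{I}{P}$ is the Gaussian $\normal{\mu}{I}$ conditioned on $P$. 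Therefore
\[
  \nabla^2 f_P(\mu) = I - \Cov_{\truncatedNormal{\mu}{I}{P}}[x],
\]
and convexity of $f_P$ is equivalent to the covariance bound $\Cov_{\truncatedNormal{\mu}{I}{P}}[x]\preceq I$.

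The main step is this covariance bound, and it is exactly a variance reduction inequality for Gaussians restricted to convex sets. For any unit vector $u$, the linear function $g(x)=\inangle{u,x}$ is (trivially) convex. Harg\'e's inequality \citep{harge2004convex} says that if $\nu$ is $\normal{\mu}{I}$ conditioned on a convex set (more generally, any measure with a log-concave density relative to that Gaussian) and $g$ is convex, then $\Var_\nu(g)\le \Var_{\normal{\mu}{I}}(g)$. The right-hand side equals $1$. Applying this for every $u$ gives $u^\top \Cov[x]\,u\le 1$, which is $\Cov\preceq I$.

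Alternatively, one can use the Brascamp--Lieb inequality. The density of the conditioned measure is $e^{-V}$ with $V = \snorm{x-\mu}_2^2/2 + \iota_P$, where $\iota_P$ is the convex indicator of $P$. This potential is $1$-strongly convex, so Brascamp--Lieb gives $\Var(g)\le \E[\snorm{\nabla g}_2^2]=1$. The nonsmoothness of $\iota_P$ can be handled by approximating it by smooth convex potentials and passing to the limit.

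Summing over $P$ with nonnegative weights, each $\nabla^2 f_P\succeq 0$, so $\nabla^2\cL\succeq 0$ and $\cL$ is convex. The technical points to check are the differentiation under the integral and the exchange of the sum with the Hessian; both follow from uniform Gaussian-tail bounds on compact sets of $\mu$. If one prefers to avoid these, convexity of $\cL$ can be obtained as a pointwise limit of convex finite partial sums.
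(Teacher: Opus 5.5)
Your argument is correct and follows the paper's route: you write $\cL$ as a nonnegative combination of the per-cell terms $-\log\normalMass{\mu}{I}{P}$, compute their Hessians as $I-\Cov_{\truncatedNormal{\mu}{I}{P}}[x]$ (as in the paper's gradient/Hessian calculation), and conclude from the variance-reduction bound $\Cov_{\truncatedNormal{\mu}{I}{P}}[x]\preceq I$ for convex $P$, which the paper takes from Lemma~15 of \citet{fotakis2021coarse} via Brascamp--Lieb (\Cref{prop:convexPartitionVarianceReduction}). One correction: Harg\'e's inequality is not a variance comparison for arbitrary convex $g$ (that is false: for $g(x)=x^2$ and $\normal{0}{1}$ conditioned on $[a,\infty)$, the conditional variance of $g$ tends to $4$ as $a\to\infty$, versus $2$ unconditionally); it states $\E_\nu[g(x-\E_\nu x)]\le\E_{x\sim\normal{\mu}{I}}[g(x-\mu)]$, and taking $g(x)=\inangle{u,x}^2$ gives exactly the linear case you need, so your conclusion and your Brascamp--Lieb alternative both stand.
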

    {\Cref{thm:likelihood,thm:convexity} suggest a natural approach for both results:
    For the characterization, we will characterize partitions for which $\cL(\cdot)$ has a unique minimizer.
    For the algorithmic result, we will develop a computationally efficient algorithm to find $\cL(\cdot)$'s minimizer using its  convexity (\Cref{thm:convexity}).
    {Next, we explain how we implement this approach and the corresponding challenges.}
    }

\nspaceBeforeSection{}
\subsection{Characterization of Identifiability \texorpdfstring{(\Cref{thm:char})}{(Theorem)}}
\nspaceAfterSection{}
    
    {Now we are ready to give an overview of the proof of \Cref{thm:char}. This has two directions.}
    The harder direction is to show that when the coarse mean estimation problem is not identifiable {under} a convex partition $\hyP$, then the sets of $\hyP$ {must be} parallel slabs.
    
    {Since the instance is not identifiable, 
    \Cref{thm:likelihood} implies that $\cL(\cdot)$ has two distinct minimizers. 
    As we saw above, $\muStar$ is always a minimizer.
        Now,
        using the convexity of $\cL(\cdot)$ (\Cref{thm:convexity}), it can be shown that:}
        there exists a unit vector $u$ for which
    $
    u^\top \nabla^2 \cL(\mu^\star)\, u ~=~ 0\,.
    $
    {Substituting the formula for the Hessian (derived in \Cref{sec:coarseGaussianEstimation}) into the above implies:}
    \[
    1 ~=~ \E_{P\sim \coarseNormal{\muStar}{I}{\hyP}}\!\Big[ \var_{x\sim \normal{\muStar}{I}\,|\,x\in P}\!\insquare{u^\top x} \Big]\,.
    \]
    {We divide the rest of the overview into two parts.}
    
    \paragraph{Step I (Variance Reduction Inequality).}
    From the variance reduction properties of Gaussian distributions (see \eg{}, \citet{harge2004convex}), we know in addition that $\Var_{x \sim \normal{\mu^\star}{I} | x \in P}[u^\top x] \leq 1$.
    {But since the expectation $\Var_{x \sim \normal{\mu^\star}{I} | x \in P}[u^\top x]$ over the draw of $P$ is 1,} it must be the case that almost every set $P\in\hyP$
    satisfies \[\Var_{x\sim \normal{\muStar}{I}\,|\,x\in P}\insquare{\inangle{u,x}} ~=~ 1\,.\]
    
    \paragraph{Step II (Combination of Variance Reduction and Prékopa–Leindler Inequalities).}
    We now make use of the above property to show that almost all sets $P$ of the partition 
    must be slabs in the same direction.
    To do this, we 
    consider the projection of the points to $u$ and to $u^\perp$. 
    Let $y = u^\top x$ with $x \sim \normal{\mu^\star}{I}(P)$.
    First, using a variance reduction inequality~\citep{vempala2010learning}, we show that $y$ has a Gaussian distribution despite truncation (\Cref{lem:const-W}). 
    Then, using the Prékopa–Leindler Inequality, we show that this is only possible if $P$ is a slab in the direction of $u$; in particular, if $P$ is of the form $\inbrace{t\cdot u\colon t\in \R}\times C_{P}$ (\Cref{prop:const-sections}).
    
    {We believe that the combination of variance reduction and Prékopa--Leindler-type inequalities can have further applications in characterizing other {coarse} estimation problems.} 

\nspaceBeforeSection{}
\subsection{Main Algorithmic Ideas (\Cref{thm:convexPartitionMeanEstimation})}
\nspaceAfterSection{}

    Given an identifiable convex partition, our goal is to design an efficient algorithm to find $\wh{\mu}\approx \muStar$. 
    From \Cref{thm:likelihood,thm:convex body logconcave sampling}, we know that the log-likelihood is convex with unique minimizer $\muStar$.
    Hence, a natural approach is to use stochastic gradient descent (SGD) to find the minimizer.
    There are, however, several challenges in implementing this, as we discuss below.
    Before we proceed, we note that while \citet{fotakis2021coarse} also recognized that the log-likelihood is convex and has a unique minimizer, they used brute force search over 
    an $\exp(\nicefrac{d}\eps)$-sized $\eps$-net (see~\citet{har2011geometric}) of the domain,
    yielding a sample-efficient but computationally inefficient algorithm.

    \paragraph{Challenge and Idea I (Local Strong Convexity).} 
        Under some appropriate conditions on the second moment of gradients,
        SGD can find parameter estimates $\wh{\mu}$ with $\cL(\wh{\mu})\leq \cL(\muStar)+\eps$.
        However, {$\wh{\mu}$ can still be far from $\muStar$. 
        If $\cL(\cdot)$ was \textit{strongly} convex, then $\wh{\mu}$ would be close to $\muStar$, but, unfortunately, in this setting $\cL(\cdot)$ is \textit{not} strongly convex {everywhere}. Hence,} we need to exploit {some} additional structure of $\cL(\cdot)$ to translate the optimality in function value to closeness in parameter distance.
        {Toward} this, we show that the log-likelihood is \textit{locally} strongly convex around the true solution $\muStar$; this means that the function is strongly convex in a small ball around $\muStar$.\footnote{Note that since we do not know $\muStar$ and do not have a starting point within this small ball, we cannot simply run projected stochastic gradient descent.}
        To {prove} this, we make use of a natural connection between information preservation and the log-likelihood objective through the KL divergence, together with Pinsker's inequality.
        These yield the following (\Cref{lem:convexPartitionLocalGrowth}):
    \[
    \hyL(\mu)-
    \hyL(\mu^\star)
    =
    \E_{P \sim \cN_{\hyP}(\mu^\star)}
    \insquare{
        \log{\tfrac{\cN(\mu^\star;P)}{\cN(\mu;P)}}
    }
    = \kl{\cN_{\hyP}(\mu^\star)}{\cN_{\hyP}(\mu)}
    \geq 
    2 \tv{\cN_{\hyP}(\mu)}{\cN_{\hyP}(\mu^\star)}^2 
    \,.
    \]
    Hence, using information preservation (\cref{def:informationPreservation}) and lower bounds on the total variation distance between two Gaussians~\citep{arbas2023polynomial} implies that {(see \Cref{lem:convexPartitionLocalGrowth})}
    \[
    \hyL(\mu)-
    \hyL(\mu^\star)
    \geq  
    2 \alpha^2 \, \tv{\cN(\mu)}{\cN(\mu^\star)}^2 
    \geq 
    \min\!\inbrace{\Omega\sinparen{\alpha^2}, \Omega\sinparen{\alpha^2 \norm{\mu - \mu^\star}_2^2}} 
    \,. \qedhere
    \]
The above ensures that {$\cL(\cdot)$ is strongly convex} around a small region of the true parameter $\mu^\star$.

Having ensured (i) that approximate minimizers in function value are also good estimators, {we also need to (ii) bound the second moment of the gradient norm to implement SGD efficiently.}
To argue about (ii), consider the gradient of $\cL(\cdot)$ (see \Cref{sec:meanEstimation:likelihoodGradientHessianCalculations}):
\[ 
    \grad\cL\!\inparen{\mu}~~
    =~~ 
        \mu
        -
        \Ex_{P\sim \coarseNormal{\muStar}{I}{\hyP}}
        {
            \Ex_{\truncatedNormal{\mu}{I}{P}}{
                {[x]}
            }
        }\,.
            \yesnum\label{eq:gradient}
        \]

\paragraph{Challenge and Idea II (Bounding the Second Moment of the Gradients).}
Bounding the second moment of the stochastic gradients is not straightforward since the inner and outer expectations in \Cref{eq:gradient} are not over the same means; in general, these two means can be as far as the diameter of $P$. 
Because of this, the norm of the gradient can scale with $P$'s, and we must somehow control the gradient norm when the diameter gets large.
The key idea here is to use the strong concentration properties of the Gaussian measure in high dimensions. In particular, with extremely high probability, a collection of $m$ i.i.d. draws of the standard normal distribution in $d$ dimensions will fall inside a $L_\infty$-box of radius $O(1)$. This means that if we observe a set $P$ from the unknown coarse Gaussian, while we are agnostic to the specific Gaussian sample, we can be extremely confident that it belongs to $P \cap B_{\infty}(0, R)$ for some appropriate radius $R$. Now, since we will eventually use $m$ samples, we can take $R = D + O(\log (md/\delta))$ and make the confidence of the learner $1-\delta$. In more detail, conditioning on these high-probability events on the $m$ samples, we can replace each observed set $P$ with its intersection with the ball $B_{\infty}(0, R)$, and introduce a new partition of the space (see \Cref{sec:meanEstimation:projectionSetlocalPartitions}). For this new partition, we can obtain upper bounds on the moments of the stochastic gradients and derive an algorithm that recovers the mean.
Finally, we show that we can reduce any convex partition to this special case (see \Cref{sec:meanEstimation:PSGD}).

\nspaceBeforeSection{}
\section{Conclusion}
\nspaceAfterSection{}
In this work, we study the problem of Gaussian mean estimation from coarse samples.
Coarse samples arise in many domains (from healthcare analytics, recommendation systems, physical sciences, to economics) and due to many reasons (from rounding, sensor limitations, to response-lag).
We focus on the setting where all the coarse samples are convex, as otherwise, the problem is \NP-hard~\citep{fotakis2021coarse}. 
In this setting, we completely settle the computational complexity of the problem by characterizing all convex partitions that are identifiable (\Cref{thm:char}) and giving a polynomial-time algorithm for identifiable instances (\Cref{thm:convexPartitionMeanEstimation}).
Both of our results use several new techniques that we believe would be of broader interest to other estimation problems from coarse samples (see \Cref{sec:overview}). 
As a concrete application of our algorithmic ideas, we show that our computationally efficient algorithm can be adapted to give an efficient algorithm for linear regression with market friction (\Cref{thm:friction}), a problem dating back to \citet{rosett1959Friction}. 

Our work leaves various open questions for future work: 
While we study Gaussian estimation with a known covariance, understanding the computational complexity of estimation with unknown covariance is an interesting direction.
Developing efficient algorithms for this problem would require new ideas since the log-likelihood for this case can be non-convex.
Another interesting question is to extend our guarantees beyond Gaussians to other distribution families {(see \cref{sec:simulations} for preliminary results in this direction).}

\subsubsection*{Acknowledgments}
Alkis Kalavasis was supported by the Institute for Foundations of Data Science at Yale.
Felix Zhou acknowledges the support of the Natural Sciences and Engineering Research Council of Canada.

\clearpage

\begingroup
\sloppy
\printbibliography
\endgroup

\clearpage
\addtocontents{toc}{\protect\setcounter{tocdepth}{1}}
\appendix

\section{Proof of \texorpdfstring{\Cref{thm:char}}{Characterization Theorem} (Characterization of identifiability for convex partitions)}
\label{sec:proof:char}
    In this section, we prove \Cref{thm:char}.
    We will begin with a restatement of the result.
    
    \characterizationThm*
    Before outlining the proof,
    we remark that an equivalent definition of being a slab in direction $v$
    that is more convenient to work with 
    is that there is a rotation (orthogonal linear map) $R:\R^d\to \R^d$ with $Rv=e_1$ such that
    \[
        RP ~=~ \R \times S\qquad\text{for some }S\subseteq \R^{d-1}\,.
    \]
    \paragraph{Outline.}
    We prove \Cref{thm:char} by considering both directions of the implication.
    The first direction shows that
    if the mean is non‑identifiable, then, outside a null sub-family of sets, each $P\in \hyP$ must be a slab.
    This can be broken down into two steps.
    \begin{enumerate}[leftmargin=10pt]
        \item \textbf{Unit variance on partition $\hyP$.}
        First, we show that non‑identifiability forces the population negative log‑likelihood to have a ``flat direction'' -- along which the value of the log-likelihood does not change.
        Next, we show that along this direction, say $u$, the variance of the one‑dimensional projection $\inangle{u,x}$ under the truncated Gaussian $\truncatedNormal{\mu^\star}{I}{P}$ equals $1$ for almost every $P\in\hyP$ 
        (\ie{}, the union of sets $P\in \hyP$ violating this requirement has measure $0$).
        \item \textbf{From unit variance to slabs (a.e.).}
        Second, we show that the unit‑variance condition implies that almost every $P\in \hyP$ is a direct sum of the form $\R u\oplus C_P$, \ie{}, a slab.
        This step uses the equality case of the Prékopa--Leindler inequality established by \citet{dubuc1977}.
    \end{enumerate}
    The other direction shows the converse (and easy) direction:
    if almost every set is a slab along some $v$, then the coarse distribution is invariant under shifts by $t v$, so identifiability fails.

\subsection{Flatness \texorpdfstring{$\Rightarrow$}{=>} Unit-Variance Projections}
\label{subsec:flat-to-unitvar}
    First, we observe that non-identifiability implies that the population negative log-likelihood is ``flat'' in some direction $u$.
\begin{lemma}[Zero directional Hessian]
\label{lem:zero-hess}
Let $\cL(\mu)\coloneqq \E_{P\sim \coarseNormal{\muStar}{I}{\hyP}}\insquare{-\log \coarseNormalMass{\mu}{I}{\hyP}{P}}$. 
Suppose there exists $\mu_0\neq \mu^\star$ with $\cN_{\hyP}(\mu_0,I) 
~\equiv~
\coarseNormal{\muStar}{I}{\hyP}$.
Then for $u\coloneqq \frac{\mu_0-\mu^\star}{\norm{\mu_0-\mu^\star}}$,
\[
u^\top \nabla^2 \cL(\mu^\star)\, u ~=~ 0\,.
\]
\end{lemma}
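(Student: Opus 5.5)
The plan is to use the convexity of $\cL$ (\Cref{thm:convexity}) together with the fact that $\cL$ attains its minimum at both $\muStar$ and $\mu_0$. Since $\cL(\mu)-\cL(\muStar)=\kl{\coarseNormal{\muStar}{I}{\hyP}}{\coarseNormal{\mu}{I}{\hyP}}\ge 0$, the point $\muStar$ is a global minimizer. The hypothesis $\coarseNormal{\mu_0}{I}{\hyP}\equiv\coarseNormal{\muStar}{I}{\hyP}$ gives $\cL(\mu_0)=\cL(\muStar)$, so $\mu_0$ is also a minimizer.

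By convexity, the set of minimizers is convex. Hence $\cL$ is constant, equal to $\cL(\muStar)$, on the whole segment $\mu(t)=\muStar+t u$ for $t\in[0,\norm{\mu_0-\muStar}]$. Define $g(t)\coloneqq \cL(\muStar+tu)$ for $t$ in a neighborhood of $0$. Then $g$ is constant on $[0,\tau]$ with $\tau=\norm{\mu_0-\muStar}>0$.

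If $g$ is twice differentiable at $0$, its one-sided second derivative from the right is $0$. Because $g''$ is continuous, $g''(0)=0$. Since $g''(0)=u^\top\nabla^2\cL(\muStar)u$, the claim follows.

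The main obstacle is regularity: we need $\cL$ to be $C^2$ in a neighborhood of $\muStar$, so that the Hessian exists and equals the limit of second derivatives along the segment. To handle this, I would write
\[
\cL(\mu)=\sum_{P\in\hyP}\cN(\muStar,I;P)\,\bigl(-\log \cN(\mu,I;P)\bigr),
\]
and use the Hessian formula of \Cref{sec:coarseGaussianEstimation}:
\[
\nabla^2\cL(\mu)=I-\E_{P}\bigl[\Cov_{x\sim\cN(\mu,I,P)}[x]\bigr].
\]
This formula is justified by differentiating under the integral sign. The derivatives of $\log\cN(\mu,I;P)$ involve moments of the truncated Gaussian, which are uniformly dominated by Gaussian moments for $\mu$ in a compact set. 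The interchange of differentiation with the expectation over $P$ is then legitimate by dominated convergence, and continuity of the Hessian in $\mu$ follows the same way.

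Given this, along the segment $g''(t)=0$ for all $t\in(0,\tau)$. This holds because a convex function that is constant on an interval has zero second derivative in its interior. Letting $t\downarrow 0$ and using continuity gives $u^\top\nabla^2\cL(\muStar)u=0$. Alternatively, Taylor's theorem gives $0=g(t)-g(0)-g'(0)t=\tfrac12 g''(0)t^2+o(t^2)$ with $g'(0)=0$ at the minimizer, which forces $g''(0)=0$.
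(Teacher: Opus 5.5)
Your argument is the paper's own: convexity of $\cL$ (\Cref{thm:convexity}) plus the fact that both $\muStar$ and $\mu_0$ are global minimizers makes $g$ constant on the segment, whence $u^\top\nabla^2\cL(\muStar)u=0$. You are in fact more careful than the paper, which leaves implicit both that $\muStar$ is a minimizer (needed for ``$g(0)=g(1)\Rightarrow g$ constant'') and that two-sided twice differentiability at $0$ is what upgrades the one-sided vanishing to $g''(0)=0$.

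One correction to your regularity aside: truncated-Gaussian moments are not dominated by Gaussian moments cell by cell, since a cell far from $\mu$ has huge moments. The bound holds only after averaging over $P\sim\coarseNormal{\muStar}{I}{\hyP}$. For example, use the tower property at $\muStar$ together with $\Cov_{x\sim\truncatedNormal{\mu}{I}{P}}[x]\preceq I$ for convex cells (\Cref{prop:convexPartitionVarianceReduction}). This makes $\mu\mapsto\E_{\truncatedNormal{\mu}{I}{P}}[x]$ $1$-Lipschitz, which lets you pass to nearby $\mu$.
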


\begin{proof}
  Since $\hyP$ is a convex partition, $\cL$ is also convex \citep{fotakis2021coarse}.
  Hence, the restriction $g(t)\coloneqq \cL(\mu^\star+t(\mu_0-\mu^\star))$ is convex.
  Since $g(0) = g(1)$,
  $g$ is constant over $[0,1]$. 
  In particular,
  $g''(0)=0$, which is exactly $u^\top \nabla^2\cL(\mu^\star)u=0$.
\end{proof}
Next, we recall that the Hessian of the negative log-likelihood has the following expression \citep{fotakis2021coarse}
\begin{equation}
\label{eq:hess-identity}
\nabla^2 \cL(\mu)
~=~
I \;-\; \E_{P\sim \coarseNormal{\muStar}{I}{\hyP}}\!\insquare{\Cov_{x\sim \truncatedNormal{\mu}{I}{P}}\!\sinsquare{x}}\,.
\end{equation}
Combining \eqref{eq:hess-identity} at $\mu=\mu^\star$ with \Cref{lem:zero-hess} yields
\begin{equation}
\label{eq:unitvar-average}
1 ~=~ \E_{P\sim \coarseNormal{\muStar}{I}{\hyP}} \Var_{x\sim \truncatedNormal{\mu^\star}{I}{P}}\!\insquare{\inangle{u,x}}\,.
\end{equation}
For convex $P$, the one-dimensional variance reduction (\eg{}, via Brascamp--Lieb in 1D, see \Cref{lem:1d-vr}) gives
\begin{equation}
\label{eq:vr-ineq}
\Var_{x\sim \truncatedNormal{\mu^\star}{I}{P}}\!\insquare{\inangle{u,x}} \;\le\; 1\,.
\end{equation}
Since \eqref{eq:unitvar-average} is the expectation of a $[0,1]$-valued random variable, we conclude that equality in \eqref{eq:vr-ineq} holds for \emph{$\coarseNormal{\muStar}{I}{\hyP}$‑almost every} set $P$. Equivalently, the sub-collection of sets with strictly smaller variance has mass $0$. We record this in the following lemma.

\begin{lemma}[Unit variance for almost all sets]
\label{lem:unit-variance}
Suppose there exists $\mu_0\neq \mu^\star$ with $\cN_{\hyP}(\mu_0,I)=\coarseNormal{\muStar}{I}{\hyP}$. Then there exists $u\in \S^{d-1}$ such that
\[
\Var_{x\sim \truncatedNormal{\mu^\star}{I}{P}}\!\insquare{\inangle{u,x}} ~=~ 1
\quad{\text{for almost every $P\in\hyP$}}\,.
\]
Equivalently,
\[
\coarseNormal{\muStar}{I}{\hyP}(S)=0
    \quadtext{where}
    S = \inbrace{
        P\in\hyP:\, \Var_{x\sim \truncatedNormal{\mu^\star}{I}{P}}\!\insquare{\inangle{u,x}} < 1 
    }\,.
\]
\end{lemma}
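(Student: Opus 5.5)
The plan is to assemble the three ingredients stated just before the lemma. Take $u \coloneqq (\mu_0-\mu^\star)/\norm{\mu_0-\mu^\star}_2$, which is a unit vector since $\mu_0\neq\mu^\star$. By \Cref{lem:zero-hess}, $u^\top\nabla^2\cL(\mu^\star)u=0$. Substituting the Hessian identity \eqref{eq:hess-identity} at $\mu=\mu^\star$ gives $u^\top\nabla^2\cL(\mu^\star)u = 1-\E_{P}\Var_{x\sim\truncatedNormal{\mu^\star}{I}{P}}[\inangle{u,x}]$, where we use $u^\top\Cov[x]u=\Var[\inangle{u,x}]$ and linearity of expectation. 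This yields \eqref{eq:unitvar-average}.

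Next, define $f(P)\coloneqq 1-\Var_{x\sim\truncatedNormal{\mu^\star}{I}{P}}[\inangle{u,x}]$ for sets $P$ of positive Gaussian mass. By the one-dimensional variance reduction \eqref{eq:vr-ineq}, which applies because each $P$ is convex, we have $f(P)\ge 0$. Together with \eqref{eq:unitvar-average}, this gives $\E_{P\sim\coarseNormal{\muStar}{I}{\hyP}}[f(P)]=0$. A nonnegative random variable with zero mean vanishes almost surely, so $\coarseNormal{\muStar}{I}{\hyP}(\{P: f(P)>0\})=0$, and this set is exactly $S$.

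The only step needing care is measure-theoretic: making sense of the expectation over $P$ and of the conclusion. The distribution $\coarseNormal{\muStar}{I}{\hyP}$ gives positive mass only to those $P$ with $\normalMass{\mu^\star}{I}{P}>0$. Since the sets are disjoint, there are at most countably many such $P$, so the expectation is a countable sum $\sum_P \normalMass{\mu^\star}{I}{P}\, f(P)$ of nonnegative terms. Its vanishing forces $f(P)=0$ for every $P$ of positive mass. The sets with zero Gaussian mass have, in the sense of the footnote to \Cref{thm:char}, a union of zero Gaussian measure, and hence of zero volume. If the partition were uncountable, I would instead read the expectation as $\int \phi(x)\,\d x$ with $\phi(x)=f(P(x))$, where $P(x)$ is the cell containing $x$. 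Then $\phi\ge 0$ and $\int\phi=0$ give $\phi=0$ a.e., which is the same statement.

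I expect no real obstacle: \eqref{eq:vr-ineq} for convex sets is assumed, and the interchange of Hessian and expectation is covered by \eqref{eq:hess-identity}. The lemma is therefore a direct averaging argument.
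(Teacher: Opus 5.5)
Your proposal is correct and is the same argument as the paper's: take $u$ from \Cref{lem:zero-hess}, substitute \eqref{eq:hess-identity} at $\mu=\mu^\star$ to get $\E_P \Var[\inangle{u,x}]=1$, bound each variance by $1$ via \eqref{eq:vr-ineq}, and use that a nonnegative quantity with zero mean vanishes for almost every $P$. Two small slips appear only in your extra measure-theoretic remarks and do not affect the core argument, which the paper gives at the same level of rigor: first, zero-mass cells have a null union only when the partition is countable (the paper's vertical-line example shows this can fail otherwise), and second, your integral $\int\phi(x)\,\d x$ should carry the Gaussian density $\normalMass{\mu^\star}{I}{x}$.
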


\subsection{From Unit Variance to Slabs (Almost Everywhere)}
\label{subsec:unitvar-to-slabs}
Next, we use the observation from the previous step to show that almost all $P\in \hyP$ must be slabs.

Fix the unit vector $u$ furnished by \Cref{lem:unit-variance}. Rotate coordinates so that $u=e_1$ and write $x=(y,z)\in \R\times u^\perp$, $\mu^\star=(\mu^\star_1,\mu^\star_\perp)$. For a set $P\in\hyP$, define the set $C_P$ and its Gaussian weight $W_P$
\[
C_P(y)\;\coloneqq\;\{\,z\in u^\perp : (y,z)\in P\,\},
\qquad
W_P(y)\;\coloneqq\;\int_{u^\perp}\!\mathds{1}_{C_P(y)}(z)\,\varphi_{d-1}\!\inparen{z-\mu^\star_\perp}\,\d z,
\]
where $\varphi_{d-1}(w)\propto e^{-\frac12\norm{w}^2}$ is the $(d-1)$‑dimensional standard Gaussian density. The law of $y=\inangle{u,x}$ under $x\sim \truncatedNormal{\mu^\star}{I}{P}$ has density
\begin{equation}
\label{eq:y-density}
f_P(y)~=~\frac{1}{Z_P}\, W_P(y)\,\varphi_1\!\inparen{y-\mu^\star_1},
\qquad 
Z_P~=~\int_{\R} W_P(t)\,\varphi_1\!\inparen{t-\mu^\star_1}\,\d t.
\end{equation}
We observe that $W_P(\cdot)$ is  log-concave function.
\newcommand{\cNcustom}[3]{\mathcal{N}_{#1,#2}(#3)}
\newcommand{\phiG}[2]{\phi(#1; #2)} %
\newcommand{\slice}[2]{#1_{#2}} %
\begin{lemma}
    \label{lem:step4:WLogConcave}
    $W_P(\cdot)$ is   log-concave.
\end{lemma}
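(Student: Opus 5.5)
The plan is to write $W_P$ as the marginal of a log-concave function on $\R^d$ and then invoke the Prékopa--Leindler theorem, which says that marginals of log-concave functions are log-concave.

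First define $F:\R\times u^\perp\to[0,\infty)$ by
\[
F(y,z) \coloneqq \mathds{1}_P(y,z)\,\varphi_{d-1}\!\inparen{z-\mu^\star_\perp}.
\]
By the definition of $C_P(y)$, we have $W_P(y)=\int_{u^\perp}F(y,z)\,\d z$. Next, check that $F$ is log-concave on $\R^d$.
\begin{itemize}
    \item The indicator $\mathds{1}_P$ is log-concave because $P$ is convex. Its logarithm is $0$ on $P$ and $-\infty$ off $P$, which is a concave extended-real function.
    \item The Gaussian factor is $\exp(-\tfrac12\norm{z-\mu^\star_\perp}^2)$. Its logarithm is a concave quadratic in $z$, so it is concave in $(y,z)$ as well, since it does not depend on $y$.
\end{itemize}
Products of log-concave functions are log-concave, because the logarithms add. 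Hence $F$ is log-concave.

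Then apply Prékopa's theorem: if $F(y,z)$ is log-concave on $\R^{1}\times\R^{d-1}$, then $y\mapsto\int F(y,z)\,\d z$ is log-concave on $\R$. I would derive it directly from Prékopa--Leindler to be self-contained. Fix $y_0,y_1$ and $\lambda\in[0,1]$, and set $y_\lambda=(1-\lambda)y_0+\lambda y_1$. Define $h(z)=F(y_\lambda,z)$, $f(z)=F(y_0,z)$ and $g(z)=F(y_1,z)$. Log-concavity of $F$ gives
\[
h\inparen{(1-\lambda)z_0+\lambda z_1}\ \ge\ f(z_0)^{1-\lambda}g(z_1)^{\lambda}
\quad\text{for all } z_0,z_1 .
\]
Prékopa--Leindler then yields $\int h\ge(\int f)^{1-\lambda}(\int g)^{\lambda}$, that is, $W_P(y_\lambda)\ge W_P(y_0)^{1-\lambda}W_P(y_1)^\lambda$.

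The only subtle point is measurability and the handling of the value $0$. Convex sets are Lebesgue measurable (their boundary has measure zero), so each slice $C_P(y)$ is convex and measurable. If $W_P(y_0)=0$ or $W_P(y_1)=0$, the inequality is trivial. So $W_P$ is log-concave with convex support, namely the projection of $P$ onto $u$ up to the endpoints.

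I expect no real obstacle here. The main care is stating Prékopa--Leindler in the form that allows functions vanishing outside a convex set.
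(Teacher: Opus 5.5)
Your proposal is correct and matches the paper's proof: both write $W_P$ as the $y$-marginal of the log-concave function $\mathds{1}_P(y,z)\,\varphi_{d-1}(z-\mu^\star_\perp)$ and use that marginals of log-concave functions are log-concave, which you derive from Prékopa--Leindler while the paper takes it as known. One aside: identifying the support of $W_P$ with the projection of $P$ onto $u$ requires $P$ to have nonempty interior (for a hyperplane $P$ not orthogonal to $u$, $W_P\equiv 0$), but this plays no role in the lemma.
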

\vspace{-5mm}
\begin{proof}[Proof of \Cref{lem:step4:WLogConcave}]
    Consider the following density over $(y,z)$
    \[
        \propto \mathds{1}_P(y,z) \cdot \phiG{z}{\mu_{d-1}, I_{d-1}}
    \]
    This is log-concave because both $\mathds{1}_P(y,z)$ and $\phi_z(\cdot)$ are log-concave; the former because $P$ is a convex set.
    The log-concavity of $W_P$ follows since it is a marginal of the above density on $y$, and log-concavity is preserved upon taking marginals.
\end{proof}

We will need the following one-dimensional variance reduction inequality and its equality case.

    \begin{lemma}[Lemma 4.8 in \citep{vempala2010learning}]\label{lem:1d-vr}
        Let $Y\sim\normal{\mu}{\sigma^2}$ and $W:\R\to[0,\infty)$ be log-concave.
        Let $Y_W$ have density proportional to $W(y)\phiG{y}{\mu,\sigma^2}$.
        Then
        \[
            \var\inparen{Y_W}\le \sigma^2.
        \]
        Moreover, equality holds if and only if $\supp W = \R$ and $W$ is constant on $\R$.
    \end{lemma}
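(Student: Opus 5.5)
Without loss of generality I will take $\mu=0$ and $\sigma=1$: the affine change of variables $y\mapsto (y-\mu)/\sigma$ preserves log-concavity of $W$ and rescales the variance by $\sigma^2$. The ``if'' part of the equality clause is immediate. If $W$ is a positive constant on $\R$, then $Y_W\overset{d}{=}Y$, so $\var(Y_W)=1$.

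For the inequality I will use a log-partition (exponential tilting) argument. Set $\Lambda(t)\coloneqq\log\int_\R W(y)\,\phi(y;t,1)\,\d y=\log (W*\phi)(t)$. The function $W*\phi$ is a convolution of two log-concave functions, so by the Prékopa--Leindler inequality it is log-concave, and $\Lambda$ is concave. Next I compute $\Lambda$ directly:
\[
\Lambda(t)=-\tfrac{t^2}{2}+\log\int W(y)e^{-y^2/2}e^{ty}\,\d y+\text{const}.
\]
The middle term is the cumulant generating function of the tilted law $\propto W(y)e^{-y^2/2+ty}$, which is exactly the law of $Y_W$ when the Gaussian is centred at $t$. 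Differentiation under the integral is justified because the Gaussian factor gives all moments. This yields $\Lambda''(t)=-1+\var_t(Y_W)$. Concavity then gives $\var_t(Y_W)\le 1$ for every $t$, and $t=0$ is the claim.

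For the ``only if'' part of the equality clause I will argue by contrapositive: if $W$ is not a.e.\ constant on $\R$, then the variance is strictly below $1$. Write $W=e^{-V}$ with $V$ convex and taking values in $(-\infty,\infty]$. There are two cases.
\begin{itemize}
\item \emph{Support of $W$ is a proper interval $[a,b]\neq\R$.} I compare with the law truncated to a half-line. I would show directly that a Gaussian restricted to a half-line $[a,\infty)$ has variance $1-\lambda(\lambda-a)<1$, where $\lambda$ is the inverse Mills ratio term, and then reduce the general case to this one. That reduction is the delicate part.
\item \emph{Support is $\R$ but $V$ is not constant.} Then $V$ is convex, finite everywhere, and non-constant. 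I would use the Brascamp--Lieb inequality in its sharp one-dimensional form,
\[
\var(Y_W)\le \E\insquare{\frac{1}{1+V''(Y_W)}},
\]
first for smooth $V$ and then for general $V$ by mollification and a limiting argument.
\end{itemize}

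A cleaner route, which I would try first, unifies both cases by integration by parts. Let $m=\E Y_W$ and $g=-y^2/2-V$ be the log-density. Define $h(y)=\int_{-\infty}^y (m-s)\,p(s)\,\d s$, where $p$ is the density of $Y_W$. Then $h\ge 0$, $h$ vanishes at the endpoints of the support, and $\var(Y_W)=\int h$. Integrating by parts gives
\[
\int (y-m)\,(-g'(y))\,p(y)\,\d y = 1
\]
(the boundary terms vanish only when the support is all of $\R$), and hence
\[
1-\var(Y_W)=\int h(y)\,\d V'(y)\;+\;\text{boundary contributions}.
\]
Here $\d V'$ is the nonnegative second-derivative measure of the convex function $V$. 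Since $h>0$ on the interior of the support, this quantity is zero if and only if $\d V'=0$ there and there is no boundary, i.e.\ $W$ is constant with full support.

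The main obstacle is making this identity rigorous for nonsmooth $V$ with a possibly bounded support, and in particular showing that the boundary terms have the right sign. I expect to handle this with one-sided derivatives of $V$ and the fact that $V=+\infty$ outside $[a,b]$ acts as an infinite jump of $V'$, which contributes a positive mass weighted by $p$ at the endpoint.
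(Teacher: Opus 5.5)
\textbf{The inequality and the ``if'' direction.} Your proof of $\var(Y_W)\le\sigma^2$ is correct and self-contained: $\Lambda=\log(W*\phi)$ is concave by Prékopa--Leindler, and $\Lambda''(t)=-1+\var_t(Y_W)$. The paper, by contrast, only cites Lemma 4.8 of \citet{vempala2010learning}. The ``if'' half of the equality clause is also fine.

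\textbf{The ``only if'' direction is false as stated.} Take $W(y)=e^{cy}$ with $c\neq 0$. Then $\log W$ is linear, hence concave, and $\supp W=\R$. Completing the square gives $W(y)\,\phi(y;\mu,\sigma^2)\propto\phi(y;\mu+c\sigma^2,\sigma^2)$, so $\var(Y_W)=\sigma^2$ exactly, although $W$ is not constant.

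Your own integration-by-parts identity shows where the argument fails. With full support you get $1-\var(Y_W)=\int h\,\d V'$ with $h>0$. Equality therefore forces $\d V'=0$, i.e.\ $V'$ constant, i.e.\ $V$ \emph{affine}, not $V$ constant. The step ``$\d V'=0$ there \dots, i.e.\ $W$ is constant'' is the faulty one.

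The same problem defeats your Brascamp--Lieb bullet. For affine $V$ one has $V''\equiv 0$, so $\E[1/(1+V''(Y_W))]=1$ and you get no strictness. Separately, you left the reduction in the proper-support case undone. That strict inequality is true, but you have not proved it.

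\textbf{What your method actually gives, and how to repair the lemma.} Once the boundary terms are made rigorous, your argument shows that equality holds if and only if $\supp W=\R$ and $W(y)=e^{a+by}$ for some $a,b\in\R$. To conclude that $W$ is constant, you need an extra hypothesis that removes the tilt $b$. Two options work:
\begin{itemize}
\item[(i)] assume $W$ is bounded;
\item[(ii)] impose a normalization such as $\E[Y_W]=\mu$. Under log-affine equality $Y_W\sim\normal{\mu+b\sigma^2}{\sigma^2}$, so this forces $b=0$.
\end{itemize}

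The paper's remark that centering is without loss of generality does not preserve such a normalization. Shifting so that $Y_W$ is centered moves the Gaussian's mean to $\mu-\E[Y_W]$. That is equivalent to multiplying $W$ by an exponential, which is exactly the freedom the counterexample exploits.

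In the paper's only use of the lemma, \Cref{lem:const-W}, $W_P$ is the Gaussian measure of a set, so $W_P\le 1$. A bounded log-affine function is constant, so the downstream conclusion survives. You should, however, state and prove the lemma with the log-affine equality case and then add this boundedness step.
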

    We note that \citet{vempala2010learning} proved the case for $\sigma = 1$,
    but the proof holds for any $\sigma > 0$.
    Moreover,
    they stated \Cref{lem:1d-vr} for the case $Y_W$ is centered.
    This is without loss of generality since the LHS variance is shift-invariant
    so we can consider the shifted function $W(\cdot + \E[Y_W])$ and shifted Gaussian $Y'\sim \normal{\mu-\E[Y_W]}{\sigma^2}$.
We now apply \Cref{lem:1d-vr} to each set in the full‑mass sub-family given by \Cref{lem:unit-variance}.

\begin{lemma}[Constant slice weight for a.e.\ set]
\label{lem:const-W}
Under the assumptions of \Cref{lem:unit-variance}, 
for almost every $P\in\hyP$,
there exists $c_P\in(0,1]$ such that $W_P(y)\equiv c_P$ for all $y\in\R$.
\end{lemma}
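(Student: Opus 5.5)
The plan is to combine \Cref{lem:unit-variance} with the equality case of \Cref{lem:1d-vr}, applied to each set separately.

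First, fix the unit vector $u$ given by \Cref{lem:unit-variance} and rotate so that $u=e_1$. By \Cref{lem:unit-variance}, the sub-collection $S$ of sets $P$ with $\Var_{x\sim\truncatedNormal{\mu^\star}{I}{P}}[\inangle{u,x}]<1$ has $\coarseNormal{\muStar}{I}{\hyP}$-mass zero. The mass of a set $P$ equals its Gaussian measure, and the Gaussian density is positive everywhere. So the union of the sets in $S$ has Lebesgue volume zero, which is exactly the ``almost every'' notion of \Cref{thm:char}. We also discard the sets $P$ of Gaussian mass zero: on these the truncated Gaussian is undefined, and together they form a null union. These two exclusions are the only places measure-theoretic care is needed.

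Now fix a remaining set $P$, so $Z_P>0$ and the variance equals $1$. By \eqref{eq:y-density}, the law of $y=\inangle{u,x}$ under $\truncatedNormal{\mu^\star}{I}{P}$ has density proportional to $W_P(y)\,\varphi_1(y-\mu_1^\star)$. By \Cref{lem:step4:WLogConcave}, $W_P$ is log-concave and nonnegative. Hence \Cref{lem:1d-vr} applies with $\sigma=1$ and $\mu=\mu_1^\star$. Since the variance is exactly $1$, the equality case gives $\supp W_P=\R$ and $W_P\equiv c_P$ on $\R$ for some constant $c_P$.

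It remains to bound $c_P$. Since $W_P(y)$ is the $\normal{\mu^\star_\perp}{I_{d-1}}$-measure of the slice $C_P(y)$, we have $c_P\le 1$. Also $Z_P=c_P>0$ because $P$ has positive mass, so $c_P\in(0,1]$.

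The main obstacle is the equality case of \Cref{lem:1d-vr}, and specifically whether it holds for arbitrary log-concave $W$, which may be discontinuous at the boundary of its support or vanish on a half-line. I would rely on the lemma as stated. As a sanity check of why it holds: a non-constant log-concave $W$ makes the density strictly more log-concave than the Gaussian somewhere, or truncates it. The Brascamp--Lieb inequality then becomes strict, as seen by testing against the linear function $y$, whose equality case forces $\log W$ to be affine with zero slope on a support equal to $\R$. Any other affine $\log W$ would either be non-integrable against the Gaussian or merely shift the mean without changing the variance. This last point needs care: $W(y)=e^{ay}$ keeps the variance equal to $1$. Such a $W_P$ is log-concave and positive, yet it cannot occur here, because $W_P$ is bounded by $1$ on all of $\R$, and a bounded function of the form $e^{ay}$ on $\R$ must have $a=0$. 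So if the cited equality case only yields ``$\log W_P$ affine with full support'', boundedness of $W_P$ closes the gap, and I would include this remark explicitly.
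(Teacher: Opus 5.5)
Your argument is the paper's own: unit variance from \Cref{lem:unit-variance}, log-concavity of $W_P$ from \Cref{lem:step4:WLogConcave}, then the equality case of \Cref{lem:1d-vr}, with the null-set bookkeeping and the bound $c_P\in(0,1]$ made explicit. Your closing caveat is needed, not just a sanity check: the equality clause of \Cref{lem:1d-vr} as the paper states it is false, because $W(y)=e^{ay}$ is log-concave with full support and $e^{ay}\phi(y;\mu,\sigma^2)\propto\phi(y;\mu+a\sigma^2,\sigma^2)$ has variance exactly $\sigma^2$, so equality only yields $W_P\equiv c\,e^{ay}$, and it is your bound $W_P\le 1$ on all of $\R$ (not integrability, which every such tilt has) that forces $a=0$; so base the proof on this corrected equality case instead of the lemma as stated, justifying it for truncated, non-smooth $W$ by noting that the monotone map pushing $\mathcal{N}(\mu,\sigma^2)$ onto the law of $Y_W$ is $1$-Lipschitz (the target is more log-concave than the Gaussian), so equality in $\mathrm{Var}(Y_W)\le\sigma^2$ forces that map to be a translation.
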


\begin{proof}
    Fix such a $P$. 
    By \eqref{eq:y-density} and \Cref{lem:unit-variance}, $\Var_{f_P}(y)=1$. 
    Apply \Cref{lem:1d-vr} with $W=W_P$: since $W_P$ is log‑concave, equality in \Cref{lem:1d-vr} implies that $W_P$ is constant, and the support must be all of $\R$. 
    Denote the resulting constant by $c_P$.
\end{proof}

Next, we show that the fact that $W_P(\cdot)$ being a constant forces the set $P$ to be a slab.
\begin{proposition}[Constant sections $\Rightarrow$ slab]
\label{prop:const-sections}
Assume $P$ is convex and $W_P(\cdot)\equiv c_P>0$. Then there exists a convex $C_P\subseteq u^\perp$ such that
\[
C_P(y)\equiv C_P\quad\text{for all }y\in\R,
\qquad\text{and hence}\qquad
P ~=~ \R u \oplus C_P,
\]
\ie{}, $P$ is a slab along $u$.
\end{proposition}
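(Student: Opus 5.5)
The plan is to use the Prékopa--Leindler inequality on the sections of $P$, together with its equality case due to \citet{dubuc1977}, as the outline above suggests. Write $\gamma$ for the Gaussian measure on $u^\perp$ with density $\varphi_{d-1}(\cdot-\mu^\star_\perp)$, so that $W_P(y)=\gamma(C_P(y))$. Since $P$ is convex, for any $y_0,y_1\in\R$ and $\lambda\in[0,1]$ we have the inclusion
\[
(1-\lambda)C_P(y_0)+\lambda C_P(y_1)\subseteq C_P\bigl((1-\lambda)y_0+\lambda y_1\bigr).
\]
Because $\gamma$ is log-concave, Prékopa--Leindler applied to the three functions $\mathds{1}_{C_P(y_0)}\varphi_{d-1}$, $\mathds{1}_{C_P(y_1)}\varphi_{d-1}$ and $\mathds{1}_{C_P(y_\lambda)}\varphi_{d-1}$ gives
\[
W_P(y_\lambda)\ge W_P(y_0)^{1-\lambda}W_P(y_1)^{\lambda}.
\]
The hypothesis $W_P\equiv c_P$ means this holds with \emph{equality} for every triple of points.

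The main step is to extract rigidity from this equality. The equality case of Prékopa--Leindler \citep{dubuc1977} says the following: if $h\ge f^{1-\lambda}g^\lambda$ and $\int h=(\int f)^{1-\lambda}(\int g)^\lambda$ with $f,g,h$ log-concave, then, up to null sets, $f$ and $g$ are translates and rescalings of a common log-concave function along a specific affine relation. Here $f=\mathds{1}_{C_P(y_0)}\varphi_{d-1}$ and $g=\mathds{1}_{C_P(y_1)}\varphi_{d-1}$ share the same Gaussian factor. Two consequences follow:
\begin{itemize}
\item A nonzero translation would change the Gaussian factor non-multiplicatively, so the translation vector must vanish.
\item Equal masses force the rescaling constant to be $1$.
\end{itemize}
Hence $C_P(y_0)=C_P(y_1)$ up to a $\gamma$-null set. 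I expect this to be the main obstacle, namely pinning down precisely which form of Dubuc's characterization applies when $h$ is the section at $y_\lambda$ rather than the sup-convolution of $f$ and $g$.

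If invoking Dubuc turns out to be awkward, I would argue directly instead. Equality of $\gamma$-masses together with the inclusion $(1-\lambda)C_0+\lambda C_1\subseteq C_\lambda$ gives
\[
\gamma\bigl((1-\lambda)C_0+\lambda C_1\bigr)\le c_P=\gamma(C_0)^{1-\lambda}\gamma(C_1)^\lambda.
\]
By the Gaussian log-concavity (Ehrhard/Borell), $\gamma\bigl((1-\lambda)C_0+\lambda C_1\bigr)\ge\gamma(C_0)^{1-\lambda}\gamma(C_1)^\lambda$, so equality holds throughout. For convex bodies with nonempty interior, Gaussian equality forces $C_0=C_1$. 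To see this, pick $z\in C_0\setminus C_1$ and use the strict log-concavity of the Gaussian density: the Minkowski combination gains mass strictly unless the sets coincide. A cleaner way to say it is that the non-translation-invariance of $\gamma$ makes the map $t\mapsto\gamma(K+tw)$ strictly log-concave.

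Finally, I would upgrade from ``equal up to null sets'' to exact equality. Since $c_P>0$, every section $C_P(y)$ has nonempty relative interior in $u^\perp$. Two convex sets with nonempty interior that agree a.e.\ have the same interior and the same closure. Set $C_P$ to be the common interior. Then $\R u\oplus \operatorname{int}C_P\subseteq P\subseteq\R u\oplus\overline{C_P}$, and because $P$ is convex with nonempty interior, $P$ itself is a slab along $u$ up to boundary points. If the statement requires exact equality of every section, I would additionally use that each boundary slice of $P$ is itself constant in $y$. This follows by applying the same inclusion argument to the faces, or by noting that if a boundary point $(y_0,z)\in P$, then convexity with the interior slab forces $(y,z)\in P$ for all $y$, since the segments toward far-away interior points converge to the line through $(y_0,z)$ and $P$ is closed along lines by convexity. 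I would treat this boundary detail last and briefly, since membership of boundary points does not affect the measure-zero conclusions in \Cref{thm:char}.
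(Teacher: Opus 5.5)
Your core argument matches the paper's. It uses the inclusion $(1-\lambda)C_P(y_0)+\lambda C_P(y_1)\subseteq C_P(y_\lambda)$ and Prékopa--Leindler for $\mathds{1}_{C_P(\cdot)}\varphi_{d-1}$. It then applies the equality case of \citet{dubuc1977}: the translation must vanish because the Gaussian factor is not translation-invariant, and equal masses force the scaling constant to be $1$. The equality case, as the paper states it, applies to any measurable $h$ satisfying the pointwise hypothesis, so taking $h$ to be the section at $y_\lambda$ is not an obstacle. This correctly shows that any two sections agree up to a null set. Your interior/closure observation then gives $\R u\oplus\operatorname{int}C\subseteq P\subseteq\R u\oplus\overline{C}$.

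The genuine gap is the final upgrade to exact equality of all sections. Your claim that a boundary point $(y_0,z)\in P$ forces $(y,z)\in P$ for all $y$ is false. A convex set need not be closed, and its intersection with a line can be a half-open interval. Concretely, take $\R^2$ with $u=e_1$ and $x=(y,z)$, and let
\[
P=\inbrace{(y,z):0<z<1}\cup\inbrace{(y,0):y>0}\,.
\]
This set is convex, and $W_P(y)=\int_0^1\varphi_1(z-\mu^\star_\perp)\,\d z>0$ for every $y$. Yet $C_P(-1)=(0,1)\neq[0,1)=C_P(1)$, so $P$ is not a slab. The proposition as literally stated is therefore false. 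Neither your boundary-point argument nor the ``faces'' variant can rescue it; here the relevant face piece is the half-line $\inbrace{(y,0):y>0}$.

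The paper's proof has the same hole, only hidden. It reads $A=C+tw$ and $B=C-(1-t)w$ off an equality case that holds only almost everywhere, and then asserts $C_P(y_1)=C_P(y_2)$ exactly. You were right to flag this step, but the correct repair is to weaken the conclusion to ``$P$ is a slab along $u$ up to a Lebesgue-null set.'' That is exactly what your argument proves.

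Your aside that boundary membership does not affect \Cref{thm:char} is also mistaken for the theorem as written. Adjoin the convex cells $\inbrace{(y,z):z<0}\cup\inbrace{(y,0):y\le 0}$ and $\inbrace{(y,z):z\ge 1}$ to $P$. This gives a non-identifiable convex partition whose non-slab cells have positive volume, so ``slab'' there must also be read modulo null sets.

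Finally, your fallback ``direct'' route is not an independent proof. Asserting that the Minkowski combination gains mass strictly unless the sets coincide is precisely the equality statement you wanted to avoid. The claim that $t\mapsto\gamma(K+tw)$ is strictly log-concave also fails whenever $K+w=K$, e.g.\ for a half-space of $u^\perp$ whose boundary hyperplane is parallel to $w$.
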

To prove \Cref{prop:const-sections}, we use the following equality case of Prékopa–Leindler inequality due to \citep{boroczkykaroly2011stability,dubuc1977}.
\begin{theorem}[Prékopa--Leindler Inequality, \eg{}, Theorem 1.1 in \citet{corderoerausquin2016extensionsprekopaleindlerinequalityusing}]\label{thm:prekopa--Leindler}
Let $f,g,h\colon \R^d \to [0,\infty)$ be measurable. 
Suppose that for some $t \in (0,1)$ and for all $x,y \in \R^d$,
\[
    h((1-t)x + t y) 
    \,\geq\, f(x)^{1-t} \, g(y)^{t}\,.
\]
Then
\[
    \int_{\R^d} h(z) \,\d z
    \;\geq\;
    \inparen{\int_{\R^d} f(x) \,\d x}^{1-t}
    \inparen{\int_{\R^d} g(y) \,\d y}^{t}\,.
\]
Further, if equality holds above, then there exist $w\in\R^d$, $a>0$, and a log‑concave $\psi$ such that, a.e.,
\[
    f(x)=a^{-t}\psi(x-tw),\qquad g(y)=a^{1-t}\psi(y+(1-t)w),\qquadand h(z)=\psi(z)\,.
\]
\end{theorem}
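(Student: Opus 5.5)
Since this is a classical result, I would follow the standard route: prove the inequality in dimension one by a transport argument, lift it to $\R^d$ by induction on dimension via Fubini, and then obtain the equality case by tracing where each step is tight.

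\textbf{Reductions.} By monotone convergence and truncation we may assume $f,g$ are bounded with compact support and positive integrals; the cases $\int f=0$ or $\int g=0$ are trivial. Replacing $f,g$ by $f/\|f\|_\infty$, $g/\|g\|_\infty$ and $h$ by $h/(\|f\|_\infty^{1-t}\|g\|_\infty^{t})$ preserves the hypothesis. So we may assume $\sup f=\sup g=1$.

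\textbf{One dimension.} For $s\in(0,1)$ the superlevel sets satisfy
\[
\{h\ge s\}\supseteq(1-t)\{f\ge s\}+t\{g\ge s\},
\]
and both sets on the right are nonempty. The one-dimensional Brunn--Minkowski inequality $|A+B|\ge|A|+|B|$ for nonempty measurable $A,B\subseteq\R$ (proved by translating $A,B$ so they touch at a point) then gives
\[
|\{h\ge s\}|\ge(1-t)|\{f\ge s\}|+t|\{g\ge s\}|.
\]
Integrating over $s\in(0,1)$ via the layer-cake formula yields $\int h\ge(1-t)\int f+t\int g$. The weighted AM--GM inequality then gives $\int h\ge(\int f)^{1-t}(\int g)^t$.

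An alternative, which is better suited to the equality case, is to use the monotone transport maps $x(\tau),y(\tau)$ pushing the uniform measure on $[0,1]$ onto the normalized $f$ and $g$. One sets $z(\tau)=(1-t)x(\tau)+ty(\tau)$ and applies AM--GM to $z'$.

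\textbf{Induction on $d$.} Write $x=(x_1,x')$ and define
\[
F(x_1)=\int f(x_1,\cdot),\qquad G(y_1)=\int g(y_1,\cdot),\qquad H(z_1)=\int h(z_1,\cdot).
\]
For fixed $x_1,y_1$ with $z_1=(1-t)x_1+ty_1$, the sliced functions $h(z_1,\cdot)$, $f(x_1,\cdot)$, $g(y_1,\cdot)$ satisfy the hypothesis in dimension $d-1$. The induction hypothesis gives $H(z_1)\ge F(x_1)^{1-t}G(y_1)^t$, and the one-dimensional case applied to $F,G,H$ completes the proof. Measurability of the slices follows from Fubini.

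\textbf{Equality case (Dubuc).} This is the step I expect to be the main obstacle. In one dimension, equality in the transport proof forces equality in AM--GM almost everywhere. That means $x'(\tau)=y'(\tau)$ up to the normalizing constant, so $y(\tau)=x(\tau)+w$ for a constant shift $w$. Equality in the pointwise hypothesis along the transport curve then forces $f,g,h$ to be translates and rescalings of a common $\psi$. The hypothesis $h\ge f^{1-t}g^t$ evaluated along $h$ itself gives $\psi((1-t)a+tb)\ge\psi(a)^{1-t}\psi(b)^t$, so $\psi$ is log-concave.

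In dimension $d$, equality forces equality in both inductive steps for almost every slice. This gives, for each $x_1$, a shift $w'(x_1)$ and a profile $\psi_{x_1}$, and the delicate part is showing these are consistent. I would try first to show that $w'$ is constant by comparing equality at different pairs $(x_1,y_1)$ with the same $z_1$: the same $h$-slice must then equal the geometric combinations of different $f$- and $g$-slices. Consistency of the shifts follows from this, and it is exactly what Dubuc's argument handles. Given the citation, in the paper itself I would simply invoke \citet{dubuc1977} for the full equality statement.
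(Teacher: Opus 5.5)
The paper does not prove this statement. It quotes the inequality from \citet{corderoerausquin2016extensionsprekopaleindlerinequalityusing} and the equality case from \citet{dubuc1977} (see also \citet{boroczkykaroly2011stability}), so falling back on Dubuc is exactly what the paper does. Your proof of the inequality is the standard one and is fine: one-dimensional Brunn--Minkowski on superlevel sets, layer cake, AM--GM, then induction on $d$ via Fubini. The equality sketch, however, would not go through as written if it were meant to stand on its own.

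\textbf{The consistency step fails.} By the one-dimensional case, equality for $(F,G,H)$ gives $F=a^{-t}\Psi(\cdot-tw_1)$, $G=a^{1-t}\Psi(\cdot+(1-t)w_1)$ and $H=\Psi$, with $\Psi$ log-concave. So for a pair with $(1-t)x_1+ty_1=z_1$ you get $F(x_1)^{1-t}G(y_1)^t=\Psi(x_1-tw_1)^{1-t}\Psi(y_1+(1-t)w_1)^t$. This is strictly below $\Psi(z_1)=H(z_1)$ unless $\log\Psi$ is affine on the segment joining $x_1-tw_1$ and $y_1+(1-t)w_1$. For strictly log-concave $\Psi$ (as in this paper's Gaussian application), the slice inequality is tight only for the single matched pair $x_1-y_1=w_1$. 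Hence there are no ``different pairs with the same $z_1$'' at which the $(d-1)$-dimensional equality case applies, and nothing ties the shifts $w'(z_1)$ together.

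\textbf{A clean fix: slice in every direction, not only along $e_1$.} For each unit $\theta$, apply your $(d-1)$-dimensional inequality to the hyperplanes $\{\langle x,\theta\rangle=s\}$. This shows the marginals $F_\theta,G_\theta,H_\theta$ satisfy the one-dimensional hypothesis. Since $\int H_\theta=\int h=(\int F_\theta)^{1-t}(\int G_\theta)^t$, the one-dimensional equality case holds in every direction, with common constant $a=\int g/\int f$ and some shift $w_\theta$. Log-concavity of $H_\theta$ gives finite first moments for all marginals, hence for $f,g,h$. Comparing barycenters then gives $w_\theta=\langle b_f-b_g,\theta\rangle$, which is linear in $\theta$. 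So with $w=b_f-b_g$, the function $x\mapsto a^tf(x+tw)$ has the same one-dimensional marginal as $h$ in every direction. By injectivity of the Fourier transform it equals $h$ a.e.; likewise $a^{-(1-t)}g(\cdot-(1-t)w)=h$ a.e.

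\textbf{Two further points.} First, the equality clause needs $0<\int f,\int g<\infty$. Your reductions dismiss $\int f=0$ as trivial, which is true only for the inequality. With $f\equiv h\equiv 0$ and $g$ the indicator of a cube, the hypothesis and equality both hold, but the conclusion would force $g=0$ a.e.; $f\equiv h\equiv 1$ similarly breaks finiteness. This is harmless for the paper, where $\int f=\int g=c_P\in(0,1]$.

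Second, your one-dimensional sketch has two gaps:
\begin{itemize}
\item Deducing $y-x\equiv w$ from $x'=y'$ a.e.\ requires the quantile maps to be absolutely continuous, i.e., $f,g$ a.e.\ positive on an interval. That must itself be extracted from equality in the change-of-variables step.
\item After the a.e.\ identifications, the hypothesis yields $\psi((1-t)u+tv)\ge\psi(u)^{1-t}\psi(v)^t$ only for a.e.\ $(u,v)$ and one fixed $t$. Getting a genuinely log-concave version of $\psi$ from this needs an additional argument.
\end{itemize}
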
 

\begin{proof}[Proof of \Cref{prop:const-sections}]
Fix $y_1,y_2\in\R$, define $y_t\coloneqq (1-t)y_1+ty_2$, $A\coloneqq C_P(y_1)$, $B\coloneqq C_P(y_2)$, and $M_t\coloneqq (1-t)A+tB$. 

First, because $P$ is convex, we can show the following.

\begin{subenvironment}
    \begin{lemma}\label{lem:inclusion}
        $M_t\subseteq C_P(y_t)$.
    \end{lemma}
    \begin{proof}
        To see this, consider any $z_1\in C_P(y_1)$ and $z_2\in C_P(y_2)$, then $(y_1,z_1),(y_2,z_2)\in P$; by convexity, $(y_t,(1-t)z_1+t z_2)\in P$, \ie{}, $(1-t)z_1+t z_2\in C_P(y_t)$.
    \end{proof}
\end{subenvironment}
Next, because $\varphi_{d-1}$ is log‑concave, Prékopa–Leindler (\Cref{thm:prekopa--Leindler}) yields the following:

\begin{subenvironment}
    \begin{lemma}\label{lem:PL}
    \[
    \int_{M_t}\!\varphi_{d-1}\;\ge\; \insquare{\int_{A}\!\varphi_{d-1}}^{1-t}\insquare{\int_{B}\!\varphi_{d-1}}^{t}\,.
    \]
    \end{lemma}
    \begin{proof}
    To see this, let $f=\mathds{1}_A\varphi_{d-1}$, $g=\mathds{1}_B\varphi_{d-1}$ and
    $h=\mathds{1}_{M_t}\varphi_{d-1}$ on $u^\perp$. For $x\in A,y\in B$ we have $(1-t)x+ty\in M_t$, so 
    $\mathds{1}_{M_t}((1-t)x+ty)\ge \mathds{1}_A(x)^{1-t}\mathds{1}_B(y)^{t}$. Since $\varphi_{d-1}$ is log‑concave,
    $\varphi_{d-1}((1-t)x+ty)\ge \varphi_{d-1}(x)^{1-t}\varphi_{d-1}(y)^t$.
    Multiplying gives the premise of Prékopa–Leindler inequality for $(f,g,h)$ (see \Cref{thm:prekopa--Leindler}) and using the Prékopa–Leindler inequality yields the claim.
    \end{proof}
\end{subenvironment}
Combining \Cref{lem:inclusion,lem:PL} with $\varphi_{d-1}>0$ everywhere yields
\begin{equation} 
\underbrace{\int_{C_P(y_t)}\!\varphi_{d-1}}_{\;=\,W_P(y_t)\,=\,c_P}
~~~~~~\Stackrel{\textrm{\Cref{lem:inclusion}}}{\ge}~~~~~~
\int_{M_t}\!\varphi_{d-1}
~~~~~~\Stackrel{\textrm{\Cref{lem:PL}}}{\ge}~~~~~~
\underbrace{\Bigl(\int_{C_P(y_1)}\!\varphi_{d-1}\Bigr)^{1-t}\!
\Bigl(\int_{C_P(y_2)}\!\varphi_{d-1}\Bigr)^{t}}_{\;=\,c_P}.
\yesnum\label{eq:step5:equalities}
\end{equation}
(To see the last equality, note that $W_P(y)=\int_{C_P(y)}\varphi_{d-1}$ and $W_P(\cdot)\equiv c_P$.)

Since the leftmost and rightmost terms in \eqref{eq:step5:equalities} are equal, both intermediate inequalities are equalities. In particular, $C_P(y_t)\setminus M_t$ has Gaussian measure $0$ (with respect to $\varphi_{d-1}\,\d z$).

Now we use the equality case of Prékopa--Leindler to conclude.
By the equality case (\Cref{thm:prekopa--Leindler}), there exist $w\in u^\perp$, $a>0$, and a log‑concave $\psi$ such that a.e. on $u^\perp$
\[
    f=\mathds{1}_A\varphi_{d-1}=a^{-t}\psi(\,\cdot- t w)\,,\qquad
    g=\mathds{1}_B\varphi_{d-1}=a^{1-t}\psi(\,\cdot+(1-t)w)\,,\qquad
    h=\mathds{1}_{M_t}\varphi_{d-1}=\psi\,.
    \yesnum\label{eq:equalityCaseForm}
\]
This, in particular, implies that $A=C+tw$ and $B=C-(1-t)w$ for $C\coloneqq\supp(\psi)$ and, therefore, 
\[
    M_t=(1-t)A+tB=(1-t)(C+tw)+t(C-(1-t)w)=C\,.
\]
Hence, by \eqref{eq:equalityCaseForm}, $\psi=\varphi_{d-1}$ on $C$ (a.e.). Therefore, for any $z\in C+tw=A$,
\[
\varphi_{d-1}(z)=f(z)=a^{-t}\,\psi(z-tw)=a^{-t}\,\varphi_{d-1}(z-tw),
\]
\ie{}, $\varphi_{d-1}(z)\equiv \kappa\,\varphi_{d-1}(z-tw)$ on $C+tw$ for $\kappa=a^{-t}>0$.
Hence, the map $z\mapsto \log\varphi_{d-1}(z)-\log\varphi_{d-1}(z-tw)$ must be a constant on $C$.
However, by definition of $\varphi(\cdot)$
\[
z\longmapsto \log\varphi_{d-1}(z)-\log\varphi_{d-1}(z-tw)
=-\tfrac12\!\Bigl(\abs{z-\mu_\perp}^2-\abs{z-\mu_\perp-tw}^2\Bigr)
=-t\,\inangle{z-\mu_\perp,w}+\tfrac{t^2}{2}\abs{w}^2
\]
is affine with gradient $-t\,w$. Thus, if $w\neq 0$, it cannot be constant on any set with nonempty interior. 
But $C$ has nonempty interior: indeed $W_P(y_1)=\int_{C_P(y_1)}\varphi_{d-1}=c_P>0$ and $C_P(y_1)$ is convex, hence it has positive Lebesgue measure and nonempty interior, so $\mathrm{int}(C)\neq\varnothing$. Therefore $w=0$, and consequently
\[
C_P(y_1)=C+tw=C=C-(1-t)w=C_P(y_2).
\]

Since this holds for all $y_1,y_2$, there exists a convex $C_P\subseteq u^\perp$ with $C_P(y)\equiv C_P$ for all $y$.
This implies the desired claim:
\[
P=\{(y,z):y\in\R,\ z\in C_P\}=\R u\oplus C_P\,.\qedhere
\]
\end{proof}

Combining \Cref{lem:const-W,prop:const-sections} with \Cref{lem:unit-variance} shows that almost every set $P$ is a slab along $u$; equivalently, the sub-family of non‑slab sets has coarse mass $0$.

\subsection{Converse: Parallel Slabs (Almost Everywhere) Are Non‑Identifiable}
\label{subsec:easy-direction}
Assume there exists $v\in \S^{d-1}$ and a sub-family $\hyN\subseteq \hyP$ with
\[
\coarseNormal{\muStar}{I}{\hyP}(\hyN)=0
\qquad\text{such that}\qquad
P\ \text{is a slab along } v \text{ for every } P\in \hyP\setminus \hyN\,.
\]
We show that $\coarseNormal{\mu}{I}{\hyP}=\cN_{\hyP}(\mu+t v,I)$ for all $\mu\in\R^d$ and $t\in\R$, hence identifiability fails.

Because the Gaussian densities $\phi_d(\,\cdot-\mu)$ are strictly positive for all $\mu$, the statement $\coarseNormal{\muStar}{I}{\hyP}(\hyN)=0$ implies that $\bigcup_{P\in \hyN} P$ has Lebesgue measure 0.
In particular, $\coarseNormal{\mu}{I}{\hyP}(\hyN)=0$ for every $\mu$. 
Thus, intuitively, the sub-family $\hyN$ does not affect the population negative-log likelihood for any $\mu$.
Hence, roughly speaking, checking non-identifiability for $\hyP$ reduces to checking non-identifiability for the sub-partition $\hyS\coloneqq \hyP\setminus \hyN$ denoting the slab sub-family.
For this sub-family non-identifiability easily follows due to symmetry along the ``slab direction.''
In the remainder of the proof, we formally prove this.

Towards this, rotate coordinates so that $v=e_1$ and write $x=(y,z)\in \R\times v^\perp$ and $\mu=(\mu_v,\mu_\perp)$. For any slab cell $P=\R\times C_P$ with $C_P\subseteq v^\perp$ convex, the Gaussian factorizes:
\[
\phi_d(x-\mu)~=~\phi_1\!\inparen{y-\mu_v}\ \phi_{d-1}\!\inparen{z-\mu_\perp}.
\]
Hence
\[
\coarseNormalMass{\mu}{I}{\hyP}{P}
=\int_{\R}\!\phi_1\!\inparen{y-\mu_v}\,dy\ \cdot \int_{C_P}\!\phi_{d-1}\!\inparen{z-\mu_\perp}\,\d z
=\int_{C_P}\!\phi_{d-1}\!\inparen{z-\mu_\perp}\,\d z\,,
\]
which is independent of $\mu_v$. 
Therefore, for every $t\in\R$,
\[
\coarseNormalMass{\mu+tv}{I}{\hyP}{P}=\coarseNormalMass{\mu}{I}{\hyP}{P}\qquadtext{for all}P\in \hyP\setminus\hyN\,.
\]
For any (measurable) collection $\hyA\subseteq \hyP$, set
\[
    U_\hyA \coloneqq \bigcup\nolimits_{P\in \hyA\cap \hyS} P \subseteq \R^d
    \qquadand
    N_\hyA \coloneqq \bigcup\nolimits_{P\in \hyA\cap \hyN} P\,.
\]
Then the Lebesgue measure of $N_\hyA$ is 0 and, by the product form of the slabs in $\hyS$,
\[
    U_\hyA = \R\,v \ \oplus\ \inparen{\bigcup\nolimits_{P\in \hyA\cap\hyS}    C_P}\,.
\]
Let $\cup C_P$ denote $\bigcup\nolimits_{P\in \hyA\cap\hyS}    C_P.$
Thus, for any $\mu$ and $t\in\R$,
\begin{align*}
 \cN_{\hyP}(\mu+t v,I)(\hyA)
    &=    \int_{U_\hyA}\!\phi_d(x-(\mu+tv))\,dx\ +\ \underbrace{\int_{N_\hyA}\!\phi_d(x-(\mu+tv))\,\d x}_{=\,0}\\
    &= \int_{\R}\!\phi_1\!\insquare{y-(\mu_v+t)}\,\d y\ \int_{\cup C_P}\!\phi_{d-1}\!\insquare{z-\mu_\perp}\,\d z\\
    &= \coarseNormal{\mu}{I}{\hyP}(\hyA)\,,  
\end{align*}
since $\int_{\R}\phi_1(\cdot)\,dy = 1$. Because this holds for every $\hyA$, the pushforward measures coincide:
\[
\cN_{\hyP}(\mu+t v,I)~=~\coarseNormal{\mu}{I}{\hyP}\quad\text{for all }t\in\R\,.
\]
Therefore, shifting $\mu$ along $v$ leaves the coarse distribution and, hence, the negative log-likelihood unchanged, leading to the failure of identifiability.

\section{Proof of \texorpdfstring{\Cref{thm:convexPartitionMeanEstimation}}{(Convex Partition Mean Estimation Algorithm)} (Coarse Mean Estimation Algorithm)}
    \label{sec:coarseGaussianEstimation}
    In this section, we provide our efficient algorithm for coarse Gaussian mean estimation under convex partitions.
We begin by formalizing the input of the algorithm more generally as a sampling oracle,
which can be implemented in polynomial time given a set of linear inequalities.
\begin{assumption}[Sampling Oracle]\label{asmp:samplingOracle}
    \sloppy
    There is an efficient sampling oracle that, 
    given $R > 0$,
    $P\in \hyP$,
    and a parameter $\mu\in \R^d$ describing a Gaussian distribution,
    outputs an unbiased sample $y\sim \truncatedNormal{\mu}{I}{P\cap B_\infty(0, R)}$.
\end{assumption}
{In light of \Cref{asmp:samplingOracle},
we now understand the input size as a natural measure of the complexity of the coarse observations (polytopes):
the \emph{facet-complexity}.\footnote{The \emph{facet-complexity} \cite[Definition 6.2.2]{grotschel1988geometric} of a polytope is a natural measure of the complexity of the polytope.
See \Cref{apx:sampling-polytopes:removing-oracle-assumption} for the formal definition and more details.}
The running time of our algorithm will depend polynomially on the facet complexity of the observations.
We note that this recovers the case when the coarse observations are encoded as a set of inequalities
since the length of the encoding provides an upper bound on the facet complexity.
}

Next,
we restate \Cref{thm:convexPartitionMeanEstimation},
the desired result,
for convenience below.
\thmConvexPartitionMeanEstimation*
    \noindent A few remarks about the assumptions in \Cref{thm:convexPartitionMeanEstimation} are in order.
    \begin{remark}
        \Cref{asmp:samplingOracle} can be further relaxed
        and is only stated for the sake of simplifying our presentation.
        {For example, in the case where each $P\in \hyP$ is a polyhedron},
        we can remove \Cref{asmp:samplingOracle} by implementing a sampling oracle that terminates in polynomial time with respect to the complexity of the input polyhedron.
        See \Cref{apx:sampling-bodies:removing-oracle-assumption} about general convex bodies and \Cref{apx:sampling-polytopes:removing-oracle-assumption} for more polytope-specific details.
    
        In addition to information preservation of the partition of $\R^d$
        and the simplifying \Cref{asmp:samplingOracle},
        our algorithm assumes the given partition is convex
        and that we are provided a bound on the norm of the true mean parameter.
        The convexity of the sets are crucial in order to argue about the convexity of the negative log-likelihood objective,
        but our algorithm is able to handle the case $\muStar\in B(\mu^{(0)}, D)$ via a translation of the space.
        Moreover,
        our algorithm can tolerate $\muStar\in B_\infty(\mu^{(0)}, D)$,
        {\eg{}, when we are provided a single fine sample $\mu^{(0)}\sim \normal{\muStar}{I}$,}
        but for simplicity,
        we state our results using the Euclidean distance.
        
    \end{remark}

\paragraph{Proof Overview}
Our main workhorse is a variant of projected stochastic gradient descent (\Cref{thm:SGD-local-growth-convergence}).
In order to apply call on this general result,
we need a few ingredients.
\begin{itemize}[leftmargin=14pt]
    \item Analyzing the likelihood function over all $\R^d$ and for all partitions is challenging. 
    Thus, we impose a feasible projection set and consider a slightly restricted class of convex partitions in \Cref{sec:meanEstimation:projectionSetlocalPartitions}.
    The latter is used solely to facilitate analysis, and our final algorithm holds for general convex partitions.
    \item In order to show that the iterates of gradient descent converges in Euclidean distance and not just in function value,
    we show in \Cref{sec:meanEstimation:convexityLocalGrowth} that our function is convex and satisfies a local growth condition. This is a strictly weaker condition than strong convexity.
    \item Next, to bound the number of gradient steps, we design a stochastic gradient oracle and bound its second moment in \Cref{sec:meanEstimation:gradients}.
    Here, we crucially use properties of the local partition.
    \item \Cref{sec:meanEstimation:PSGD} collects the ingredients we discussed above to derive an algorithm for the aforementioned restricted convex partitions.
    \item Finally, \Cref{sec:meanEstimation:generalPartitions} reduces the case of general partitions to the case of restricted partitions.
    \item \Cref{sec:meanEstimation:likelihoodGradientHessianCalculations} fills in the gradient and Hessian calculations that were deferred in this section.
\end{itemize}

\subsection{Projection Set \& Local Partitions}
\label{sec:meanEstimation:projectionSetlocalPartitions}
To be able to run PSGD on the negative log-likelihood, 
we need to obtain stochastic gradients for $\negLL_\hyP$
as well as bound the second moment of the stochastic gradients.
This is not straightforward since the inner and outer expectations in the gradient expressions (\Cref{sec:meanEstimation:likelihoodGradientHessianCalculations})
are over different mean parameters.

To facilitate the analysis,
we first impose a projection set $K$,
over which we can ensure a trivial bound on the distance of the current iterate to the true solution.
Our projection set $K$ is defined to be
\[
    K\coloneqq \inbrace{\mu\in \R^d\colon \norm{\mu}_2\leq D}\,,
    \yesnum\label{eq:mean:projectionSet}
\]
where $D$ is the assumed bound on the norm of the true mean.
We certainly have $\mu^\star\in K$
and we can efficiently project onto $K$. 

Furthermore, 
we first analyze an idealized class of partitions and derive an algorithm that recovers the mean
under this ideal situation.
Then we show that we can implement this algorithm using samples from the actual class of observed partitions.

\begin{definition}[$R$-Local Partition]\label{def:local partition}
    Let $R > 0$.
    We say a partition $\hyP$ of $\R^d$ is $R$-local if for any $P\in \hyP$ that is not a singleton,
    $P\sset B_\infty(0, R)$.
\end{definition}

\subsection{Convexity and Local Growth of Log-Likelihood Under Convex Partitions}\label{sec:meanEstimation:convexityLocalGrowth}
It can be verified (\Cref{sec:meanEstimation:likelihoodGradientHessianCalculations}) that the coarse negative log-likelihood of the mean under canonical parameterization
is given by the following
\[
    \negLL_{\hyP}(\mu) = \sum_{P\in \hyP} \normalMass{\muStar}{I}{P} \negLL_{P}(\mu)\,,
    \quadwhere
    \negLL_{P}(\mu)\coloneqq -\log\inparen{
        \normalMass{\mu}{I}{P}
    }\,.
    \yesnum\label{eq:convexPartitionNLL}
\]
We also have the following expressions for the gradient and Hessian of $\negLL_\hyP$.
\begin{align*}
    \grad\negLL_\hyP\inparen{\mu}~~
    &=~~ 
        \mu
        -
        \Ex_{P\sim \coarseNormal{\muStar}{I}{\hyP}}
        {
            \Ex_{\truncatedNormal{\mu}{I}{P}}{
                \begin{bmatrix}
                    x
                \end{bmatrix}
            }
        }\,,\\
    \grad^2\negLL_\hyP\inparen{\mu}~~
    &=~~ 
        I
        - 
        \Ex_{P\sim \coarseNormal{\muStar}{I}{\hyP}}
        \cov_{\truncatedNormal{\mu}{I}{P}}{
            \begin{bmatrix}
                x
            \end{bmatrix}
        }
    \,.
\end{align*}
\cite{fotakis2021coarse} show that
$\nabla^2 \negLL_\hyP(\mu) \succeq 0$
when each $P\in \hyP$ is convex
by leveraging a variance reduction inequality, which is implied by the Brascamp--Lieb inequality (see \eg{}, \cite{guionnet2009large,harge2004convex})
and showing that each $\negLL_P$ is convex.
\begin{proposition}[Lemma 15 in \cite{fotakis2021coarse}]\label{prop:convexPartitionVarianceReduction}
    Let $P\sset \R^d$ be convex.
    Then
    $
        \cov_{\truncatedNormal{\mu}{I}{P}}\insquare{
            x
        }
        \preceq I,
    $
    and in consequence,
    $\negLL_P(\mu)$ is convex as a function of $\mu$.
\end{proposition}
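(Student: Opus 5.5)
The plan is to prove the covariance bound via the Brascamp--Lieb inequality applied to the log-concave truncated Gaussian, and then read off convexity of $\negLL_P$ from the Hessian formula. Fix a convex $P$ with positive Lebesgue measure; if $P$ has measure zero, then $\negLL_P$ is infinite or irrelevant and the statement is vacuous. The truncated law $\truncatedNormal{\mu}{I}{P}$ has density $\propto e^{-V(x)}$ with $V(x)=\tfrac12\norm{x-\mu}_2^2+\iota_P(x)$, where $\iota_P$ is the convex indicator ($0$ on $P$, $+\infty$ off $P$). Thus $V$ is $1$-strongly convex in the sense that $V-\tfrac12\norm{x}_2^2$ is convex.

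First, for a fixed unit vector $u$, apply Brascamp--Lieb with test function $f(x)=\inangle{u,x}$:
\[
\Var[f]\le \E\insquare{\nabla f^\top (\nabla^2 V)^{-1}\nabla f}\le \norm{u}_2^2=1 .
\]
Since $V$ is not smooth because of the indicator, I would handle this by approximation. Replace $\iota_P$ by $k\cdot\mathrm{dist}(x,P)^2$, which is convex and $C^{1,1}$. Mollify if needed, so that the potential $V_k$ satisfies $\nabla^2 V_k\succeq I$. Then $\Var_{V_k}[\inangle{u,x}]\le 1$, and letting $k\to\infty$ the measures $e^{-V_k}$ converge to the truncated Gaussian, with dominated convergence controlling the second moments through the Gaussian tails. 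Alternatively, one can sidestep all of this by citing Hargé's result that Gaussian restrictions to convex sets have covariance at most $I$. Taking all unit $u$ gives $\cov\preceq I$.

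Second, for convexity, compute directly that
\[
\grad \negLL_P(\mu)=\mu-\E_{\truncatedNormal{\mu}{I}{P}}[x], \qquad \grad^2\negLL_P(\mu)=I-\cov_{\truncatedNormal{\mu}{I}{P}}[x].
\]
This follows by differentiating $\log\int_P e^{\inangle{\mu,x}-\norm{x}^2/2}\,\d x$, a log-partition function, with differentiation under the integral justified by Gaussian tails. The first part then gives $\grad^2\negLL_P\succeq 0$ for every $\mu$, so $\negLL_P$ is convex.

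The main obstacle is the approximation step that justifies Brascamp--Lieb for the nonsmooth potential. It requires that the variances converge and that $P$ has nonempty interior, which holds for convex sets of positive measure. I would try the distance-squared penalty first, since it preserves convexity exactly and keeps the Hessian lower bound $I$.
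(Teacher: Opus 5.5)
Your proposal is correct and follows essentially the same route the paper indicates: variance reduction for a Gaussian restricted to a convex set via Brascamp--Lieb (or Hargé's theorem), combined with the identity $\nabla^2 \negLL_P(\mu) = I - \cov_{\truncatedNormal{\mu}{I}{P}}[x]$ that the paper derives in its gradient/Hessian computation. One small correction: measure-zero $P$ is not quite vacuous for the covariance claim, since the paper's local partitions contain singletons. There the truncated law is a point mass (or, in general, a lower-dimensional Gaussian on the affine hull), so the bound $\preceq I$ holds trivially or by the same argument.
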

To recover $\muStar$,
one sufficient condition is to check that $\negLL_\hyP$ satisfies a local growth condition (\Cref{def:local-growth}). We show this in the next lemma via a connection between the log-likelihood objective and information-preservation of $\hyP.$

\begin{lemma}\label{lem:convexPartitionLocalGrowth}
    Let $\hyP$ be a convex $\alpha$-information preserving partition of $\R^d$.
    The negative log-likelihood function $\negLL_\hyP(\mu)$ (\Cref{eq:convexPartitionNLL}) satisfies a $\inparen{\frac{\sqrt2 \alpha}{200}, \frac{2\alpha^2}{600^2}}$-local growth condition.
\end{lemma}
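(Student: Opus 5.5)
The plan is to chain the KL identity for the log-likelihood, Pinsker's inequality, $\alpha$-information preservation, and an explicit lower bound on the total variation distance between two identity-covariance Gaussians. This gives a lower bound on $\negLL_\hyP(\mu)-\negLL_\hyP(\muStar)$ in terms of $\norm{\mu-\muStar}_2$. I would then match the resulting bound to the form required by the local growth condition (\Cref{def:local-growth}). I read that condition as: quadratic growth $\negLL_\hyP(\mu)-\negLL_\hyP(\muStar)\geq \gamma\norm{\mu-\muStar}_2^2$ inside a ball of radius $\rho$ around $\muStar$, and growth of order $\gamma\rho\norm{\mu-\muStar}_2$ (or at least a constant gap) outside it. Here $\rho$ and $\gamma$ would be the two stated constants.

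\emph{Step 1 (KL and Pinsker).} As in the proof of \Cref{thm:likelihood}, use \eqref{eq:convexPartitionNLL} to write
\[
\negLL_\hyP(\mu)-\negLL_\hyP(\muStar)=\kl{\coarseNormal{\muStar}{I}{\hyP}}{\coarseNormal{\mu}{I}{\hyP}} .
\]
By Pinsker's inequality this is at least $2\,\tv{\coarseNormal{\muStar}{I}{\hyP}}{\coarseNormal{\mu}{I}{\hyP}}^2$. By \Cref{def:informationPreservation}, that in turn is at least $2\alpha^2\,\tv{\normal{\muStar}{I}}{\normal{\mu}{I}}^2$.

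\emph{Step 2 (Gaussian TV).} For identity covariances the TV distance is exactly one-dimensional: with $\Delta=\norm{\mu-\muStar}_2$, it equals $2\Phi(\Delta/2)-1$. Since $\Phi$ is concave on $[0,\infty)$, this is at least $c\min\{1,\Delta\}$ for an explicit absolute constant $c$. One may alternatively cite the lower bound of \citet{arbas2023polynomial}, which gives $\tv{\cdot}{\cdot}\geq \min\{1/200,\Delta/600\}$-type constants; that is what the $200$ and $600$ in the statement suggest. Hence
\[
\negLL_\hyP(\mu)-\negLL_\hyP(\muStar)\geq 2\alpha^2\min\{1/200^2,\ \Delta^2/600^2\}.
\]
So within the ball where $\Delta/600\leq 1/200$, we get quadratic growth with $\gamma=2\alpha^2/600^2$. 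Outside that ball, the gap is at least a constant multiple of $\alpha^2$.

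\emph{Step 3 (extending beyond the ball via convexity).} If the definition demands linear growth outside the radius-$\rho$ ball, I would use convexity of $\negLL_\hyP$ (\Cref{prop:convexPartitionVarianceReduction}, summed over $P$). Restrict $\negLL_\hyP$ to the segment from $\muStar$ to $\mu$. Convexity gives that the gap at $\mu$ is at least $(\Delta/r)$ times the gap at the boundary point at distance $r$. With the boundary gap at least $\gamma r^2$, this yields $\gamma r\Delta$.

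\emph{Main obstacle.} I expect the main difficulty to be bookkeeping rather than substance. The radius $\rho=\sqrt2\alpha/200$ has to be reconciled with the raw threshold $\Delta\leq 3$ coming from Step 2. Since $\rho$ scales with $\alpha$, I suspect the definition is phrased as growth of the form $\negLL_\hyP(\mu)-\negLL_\hyP(\muStar)\geq\gamma\Delta^2$ whenever $\negLL_\hyP(\mu)-\negLL_\hyP(\muStar)\leq\rho^2$, or similar. My approach is to first pin down \Cref{def:local-growth} exactly, and then pick the threshold in Step 2 so that both branches produce the stated constants. If the definition is instead phrased in terms of the function gap, note that $2\alpha^2/200^2=\rho^2$. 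So any $\mu$ with gap below $\rho^2$ must lie in the quadratic regime, which would directly give the claimed pair.
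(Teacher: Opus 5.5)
Your chain (KL identity, Pinsker, $\alpha$-information preservation, Gaussian TV lower bound) is the paper's, but as written it does not give the stated constants. There are two problems: the parameters are matched to \Cref{def:local-growth} the wrong way round, and the Gaussian TV constant you use is too weak.

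In the definition, $(\eta,\rho)$-local growth means $\norm{\mu-\muStar}_2\le(\negLL_\hyP(\mu)-\negLL_\hyP(\muStar))^{1/2}/\eta$ for every $\mu$ with $\negLL_\hyP(\mu)-\negLL_\hyP(\muStar)\le\rho$. So $\eta=\sqrt2\alpha/200$ is the growth rate and $\rho=2\alpha^2/600^2$ is the sublevel threshold. What you must show is
\[
\negLL_\hyP(\mu)-\negLL_\hyP(\muStar)\ \ge\ \frac{2\alpha^2}{200^2}\norm{\mu-\muStar}_2^2\quad\text{whenever}\quad \negLL_\hyP(\mu)-\negLL_\hyP(\muStar)\le\frac{2\alpha^2}{600^2}.
\]
Your Step 2 bound $2\alpha^2\min\{1/200^2,\Delta^2/600^2\}$ gives only $\frac{2\alpha^2}{600^2}\Delta^2$ on this sublevel set. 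That is growth rate $\sqrt2\alpha/600$, a factor $3$ short. Your closing remark that $2\alpha^2/200^2=\rho^2$ ``directly gives the claimed pair'' treats the quadratic coefficient $\eta^2$ as if it were the threshold. The root cause is the misremembered bound of \citet{arbas2023polynomial}. Their Theorem 1.8 says that if $\tv{\normal{\mu_1}{I}}{\normal{\mu_2}{I}}<1/600$, then this TV is at least $\norm{\mu_1-\mu_2}_2/200$. So $600$ is a TV threshold and $200$ is the linear constant.

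The fix, which is the paper's argument, is to use the sublevel condition first to force the Gaussian TV to be small. By information preservation and Pinsker, $\tv{\normal{\mu}{I}}{\normal{\muStar}{I}}\le\frac1\alpha\sqrt{\tfrac12(\negLL_\hyP(\mu)-\negLL_\hyP(\muStar))}\le\frac1{600}$. Only then apply the linear lower bound $\mathrm{TV}\ge\Delta/200$. Feeding this into your Step 1 chain gives $\negLL_\hyP(\mu)-\negLL_\hyP(\muStar)\ge 2\alpha^2\Delta^2/200^2$.

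Your explicit formula also works, and it avoids the strict inequality in Arbas et al. The map $\Delta\mapsto 2\Phi(\Delta/2)-1$ is increasing and is about $0.38$ at $\Delta=1$, so $\mathrm{TV}\le 1/600$ forces $\Delta<1$. On that range concavity gives $\mathrm{TV}\ge(2\Phi(1/2)-1)\Delta\ge\Delta/200$.

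Your Step 3 (convexity extension beyond a ball) is unnecessary. The condition only concerns the sublevel set $S_\rho$.
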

{We note that a similar statement is proven by \citet{kalavasis2025can}.
However,
their definition of information preservation (\Cref{def:informationPreservation}) is slightly different in that they directly assume the coarse total-variation distance is lower bounded by the parameter distance.
Thus, their analysis skips the need to relate non-coarse Gaussian total-variation to parameter distances
which is necessary for us.}

\begin{proof}
    Assume that $\alpha$-information preservation (\Cref{def:informationPreservation}) holds.
    The proof relies on a tight characterization between the TV distance of a Gaussian and its parameters due to \citet[Theorem 1.8]{arbas2023polynomial}:
    For any $\mu_1, \mu_2\in \R^d$ such that $\tv{\normal{\mu_1}{I}}{\normal{\mu_2}{I}}<\nicefrac1{600}$,
    \begin{align*}
        \frac1{200}\norm{\mu_1-\mu_2}_2
        &\leq \tv{\normal{\mu_1}{I}}{\normal{\mu_2}{I}}
        \leq \frac1{\sqrt2}\norm{\mu_1-\mu_2}_2\,.
    \end{align*}
    For any $\mu\in \R^d$ such that $\hyL(\mu) - \hyL(\muStar) < \frac{2\alpha^2}{600^2}$,
    we have
    \begin{align*}
        \tv{\normal{\mu}{I}}{\normal{\muStar}{I}}
        &\leq \frac1\alpha \tv{\coarseNormal{\mu}{I}{\hyP}}{\coarseNormal{\muStar}{I}{\hyP}} \\
        &\leq \frac1\alpha \sqrt{\frac12 \kl{\coarseNormal{\mu}{I}{\hyP}}{\coarseNormal{\muStar}{I}{\hyP}}} \tag{Pinsker's inequality} \\
        &= \frac1\alpha \sqrt{\frac12 \insquare{\hyL(\mu) - \hyL(\muStar)}} \\
        &< \frac1{600}\,.
    \end{align*}
    It follows that the characterization above applies and we can lower bound the KL-divergence using the parameter distance:
    \begin{align*}
        \hyL(\mu) - \hyL(\muStar)
        &= \kl{\coarseNormal{\mu}{I}{\hyP}}{\coarseNormal{\muStar}{I}{\hyP}} \tag{Pinsker's inequality} \\
        &\geq 2\cdot \tv{\coarseNormal{\mu}{I}{\hyP}}{\coarseNormal{\muStar}{I}{\hyP}}^2 \\
        &\geq 2\alpha^2\cdot \tv{\normal{\mu}{I}}{\normal{\muStar}{I}}^2 \\
        &\geq \frac{2\alpha^2}{200^2} \norm{\mu-\muStar}_2^2\,. \qedhere
    \end{align*}
\end{proof}

\subsection{Bounding the Second Moment of Stochastic Gradients for Local Partitions} 
    \label{sec:meanEstimation:gradients}
    
    We will prove the bound on the second moment of the stochastic gradient for an $R$-local partition. We will then show how to deal with a general partition.
    \begin{lemma}\label{lem:convexPartitionGradientOracle}
        Suppose $\hyP$ is an $R$-local partition of $\R^d$ and $\norm{\muStar}_2\leq D$.
        Given $v\in K$ in the projection set (\Cref{eq:mean:projectionSet}),
        there is an algorithm that consumes a sample $P\sim \coarseNormal{\muStar}{I}{\hyP}$
        and makes a single call to the sampling oracle (\Cref{asmp:samplingOracle})
        to compute an unbiased estimate $g(\mu)$ of $\grad \negLL_\hyP(\mu)$ (\Cref{eq:convexPartitionNLL})
        such that $\E\insquare{\norm{g(\mu)}_2^2}= O(D^2 + dR^2)$.
    \end{lemma}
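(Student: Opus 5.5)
The plan is to use the gradient formula from \Cref{sec:meanEstimation:convexityLocalGrowth},
\[
\grad\negLL_\hyP(\mu) = \mu - \Ex_{P\sim \coarseNormal{\muStar}{I}{\hyP}}\Ex_{x\sim\truncatedNormal{\mu}{I}{P}}[x]\,,
\]
and build the stochastic estimator in the obvious way. Draw one coarse sample $P\sim\coarseNormal{\muStar}{I}{\hyP}$. Call the sampling oracle (\Cref{asmp:samplingOracle}) once with $P$, the current parameter $\mu$, and the locality radius $R$; this returns $y\sim\truncatedNormal{\mu}{I}{P\cap B_\infty(0,R)}$. Output $g(\mu)\coloneqq \mu - y$.

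For unbiasedness, first use $R$-locality (\Cref{def:local partition}): every non-singleton $P$ satisfies $P\sset B_\infty(0,R)$, so $P\cap B_\infty(0,R)=P$. The oracle therefore samples exactly from $\truncatedNormal{\mu}{I}{P}$. Singleton cells have Gaussian mass zero, so almost surely we never observe one. By the tower property, $\E[y]=\Ex_P\Ex_{\truncatedNormal{\mu}{I}{P}}[x]$, and hence $\E[g(\mu)]=\grad\negLL_\hyP(\mu)$. The one point needing care is that an observed singleton could lie outside the box. I would argue this is a null event, or handle singletons by outputting the point itself, which matches the truncated Gaussian degenerate at that point.

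For the second moment, apply $\norm{a-b}_2^2\le 2\norm{a}_2^2+2\norm{b}_2^2$:
\[
\E\norm{g(\mu)}_2^2 \le 2\norm{\mu}_2^2 + 2\E\norm{y}_2^2\,.
\]
The first term is at most $2D^2$, since $\mu$ lies in the projection set $K$. For the second term, $y\in P\sset B_\infty(0,R)$ almost surely, so $\norm{y}_2^2\le d\norm{y}_\infty^2\le dR^2$. Together these give $O(D^2+dR^2)$.

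I expect no genuine obstacle: locality is what makes the bound trivial. The only subtle step is the measure-zero singleton issue just discussed.
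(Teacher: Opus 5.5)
Your estimator $g(\mu)=\mu-y$, the tower-property argument for unbiasedness, the split $\E\norm{g(\mu)}_2^2\le 2\norm{\mu}_2^2+2\E\norm{y}_2^2$, and the $dR^2$ bound for cells inside the box all match the paper. The gap is in how you treat singletons, and your second-moment bound depends on it.

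Under \Cref{def:local partition}, every point outside $B_\infty(0,R)$ forms its own singleton cell. Each such cell has Gaussian mass zero, but there are uncountably many of them. A coarse sample is produced by drawing $x\sim\normal{\muStar}{I}$ and revealing the cell containing $x$. So you observe a singleton outside the box with probability $\Pr_{x\sim\normal{\muStar}{I}}[x\notin B_\infty(0,R)]>0$. Uncountably many null cells need not form a null event. Hence ``almost surely we never observe one'' is false, and so is your almost-sure bound $\norm{y}_2^2\le dR^2$. These singletons are not an artifact: in the reduction of \Cref{sec:meanEstimation:generalPartitions}, they are exactly the cells that carry the Gaussian tail outside the box.

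Your fallback, outputting the point itself when $P=\{x\}$, is the right estimator and keeps $g(\mu)$ unbiased. But then $y$ can lie arbitrarily far outside the box, and you need a separate bound for that part of $\E\norm{y}_2^2$. This is the one non-trivial step of the paper's proof, and you are missing it. On a singleton, the truncated Gaussian is the same point mass whether its mean is $\mu$ or $\muStar$, and the revealed point is the latent sample itself. By locality, $P\not\subseteq B_\infty(0,R)$ happens exactly when the latent $x\notin B_\infty(0,R)$, and then $y=x$. Therefore
\[
\E\bigl[\norm{y}_2^2\,\mathds{1}\{P\not\subseteq B_\infty(0,R)\}\bigr]
=\E_{x\sim\normal{\muStar}{I}}\bigl[\norm{x}_2^2\,\mathds{1}\{x\notin B_\infty(0,R)\}\bigr]
\le d+\norm{\muStar}_2^2\le d+D^2\,.
\]
Add this to the $dR^2$ in-box bound and the $2D^2$ from $\mu$ to get $O(D^2+dR^2+d)$. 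This is $O(D^2+dR^2)$ once $R\ge 1$, which holds in the application, where $R=D+O(\log(md/\delta))$.
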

    Recall that the gradient of the log-likelihood function is of the form
    \[ 
        \mu
        -
        \Ex_{P\sim \coarseNormal{\muStar}{I}{\hyP}}
        {
            \Ex_{\truncatedNormal{\mu}{I}{P}}\insquare{
                x
            }
        }\,.
    \] 

    \begin{proof}
        The first term in the gradient expression is simply our current iterate for the mean
        and we can obtain an unbiased estimate of the second term by sampling $y\sim \truncatedNormal{\mu}{I}{P}$
        where $P$ is a fresh observation.

        The second moment of the gradient oracle can be upper bounded by
        \[
            2\norm{\mu}_2^2 
            +
            2\Ex_{P\sim \coarseNormal{\muStar}{I}{\hyP}}
            {
                \Ex_{\truncatedNormal{\mu}{I}{P}}\insquare{
                    \norm{x}_2^2
                }
            }\,.
        \]
        The first term is at most $D^2$
        and the second term can be upper bounded as follows.
        \begin{align*}
            &\Ex_{P\sim \coarseNormal{\muStar}{I}{\hyP}}
            {
                \Ex_{\truncatedNormal{\mu}{I}{P}}\insquare{
                    \norm{x}_2^2
                }
            } \\
            &\qquad= \E_{P\sim \coarseNormal{\muStar}{I}{\hyP}} \E_{x\sim \truncatedNormal{\mu}{I}{P}} \insquare{\norm{x}_2^2\cdot \ones_{B_\infty(0, R)}}
            + \E_{P\sim \coarseNormal{\muStar}{I}{\hyP}} \E_{x\sim \truncatedNormal{\mu}{I}{P}} \insquare{\norm{x}_2^2\cdot \ones_{B_\infty(0, R)^c}} \\
            &\qquad= \E_{P\sim \coarseNormal{\muStar}{I}{\hyP}} \E_{x\sim \truncatedNormal{\mu}{I}{P}} \insquare{\norm{x}_2^2\cdot \ones_{B_\infty(0, R)}}
            + \E_{P\sim \coarseNormal{\muStar}{I}{\hyP}} \E_{x\sim \truncatedNormal{\muStar}{I}{P}} \insquare{\norm{x}_2^2\cdot \ones_{B_\infty(0, R)^c}}\,.
        \end{align*}
        {In the last step,
        we use the fact that $\hyP$ is an $R$-local partition so that every set 
        such that $P\cap B_\infty(0, R)^c\neq \varnothing$ is a singleton.
        This allows us to replace the expectation over each $\truncatedNormal{\mu}{I}{P}$
        in the second term
        with an expectation over $\truncatedNormal{\muStar}{I}{P}$,
        as the distribution consists of a single point.}
        We can then bound the first term using a deterministic bound
        and the second term using the second moment of $\normal{\muStar}{I}$.
        \begin{align*}
            &\E_{P\sim \coarseNormal{\muStar}{I}{\hyP}} \E_{x\sim \truncatedNormal{\mu}{I}{P}} \insquare{\norm{x}_2^2\cdot \ones_{B_\infty(0, R)}}
            + \E_{P\sim \coarseNormal{\muStar}{I}{\hyP}} \E_{x\sim \truncatedNormal{\muStar}{I}{P}} \insquare{\norm{x}_2^2\cdot \ones_{B_\infty(0, R)^c}}  \\
            &\qquad\qquad\leq\quad  dR^2 + \E_{x\sim \normal{\muStar}{I}}\insquare{\norm{x}_2^2} \\
            &\qquad\qquad=\quad  dR^2 + d + \norm{\muStar}_2^2 \\
            &\qquad\qquad=\quad  O(dR^2 + D^2)\,.
        \end{align*}
    \end{proof}

\subsection{Projected Stochastic Gradient Descent for Local Partitions}\label{sec:meanEstimation:PSGD}

Having ensured a bound on the second moment for a local partition, we can now run the iterative PSGD algorithm for functions satisfying local growth (\Cref{thm:SGD-local-growth-convergence}).
Applying its analysis with the ingredients above yields the following result.

\begin{proposition}\label{thm:convexLocalPartitionMeanEstimation}
    Let $\eps\in (0, 1)$.
    Suppose $\hyP$ is a convex $R$-local $\alpha$-information preserving partition of $\R^d$ and $\norm{\muStar}_2\leq D$.
    There is an algorithm that outputs an estimate $\tilde \mu$ satisfying
    \[
        \norm{\tilde \mu - \muStar}_2 \leq \eps
    \]
    with probability $1-\delta$. Moreover,
    the algorithm requires
    \[
        m = O\inparen{\frac{dR^2 + D^2}{\alpha^4 \eps^2} \log^3\inparen{\frac{dRD}{\eps \delta}}}
    \]
    samples from $\coarseNormal{\muStar}{I}{\hyP}$ and $\poly(m, T_s)$ time,
    where $T_s$ is the time complexity of sampling from a Gaussian distribution truncated to $P\cap B_\infty(0, R)$
    for some $P\in \hyP$.
\end{proposition}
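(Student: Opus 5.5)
We plan to run projected SGD over the ball $K$ of \eqref{eq:mean:projectionSet} on $\negLL_\hyP$, using the gradient oracle of \Cref{lem:convexPartitionGradientOracle}, and then invoke the PSGD-with-local-growth guarantee (\Cref{thm:SGD-local-growth-convergence}). The hypotheses of that theorem are assembled from earlier results:
\begin{itemize}[leftmargin=*]
    \item $\negLL_\hyP$ is convex by \Cref{prop:convexPartitionVarianceReduction}, summing the convex $\negLL_P$ with nonnegative weights.
    \item $\negLL_\hyP$ has unique minimizer $\muStar\in K$ by \Cref{thm:likelihood}, since $\alpha$-information preservation implies identifiability.
    \item $\negLL_\hyP$ satisfies the $(\frac{\sqrt2\alpha}{200},\frac{2\alpha^2}{600^2})$-local growth condition of \Cref{lem:convexPartitionLocalGrowth}.
    \item The stochastic gradients are unbiased with second moment $G^2=O(D^2+dR^2)$, and each costs one fresh coarse sample plus one oracle call of time $T_s$. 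Because $\hyP$ is $R$-local, the truncation to $P\cap B_\infty(0,R)$ coincides with truncation to $P$ for non-singleton cells, and singletons are handled trivially.
\end{itemize}
The diameter of $K$ is $2D$.

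The key quantitative step converts function-value suboptimality into parameter distance. Local growth gives $\negLL_\hyP(\mu)-\negLL_\hyP(\muStar)\ge \Omega(\alpha^2)\norm{\mu-\muStar}_2^2$ whenever the gap is below the threshold $c_0=2\alpha^2/600^2$. It therefore suffices to reach suboptimality $\eta\coloneqq \min\{c_0/2,\ \Theta(\alpha^2\eps^2)\}$, which forces $\norm{\mu-\muStar}_2\le\eps$. Standard convex PSGD with averaging guarantees expected suboptimality $O(DG/\sqrt{T})$, giving $T=O(D^2G^2/(\alpha^4\eps^4))$. This is worse than claimed, so I would not use plain averaging. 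Instead I would use the restarting scheme that \Cref{thm:SGD-local-growth-convergence} presumably encodes.

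Local growth combined with convexity extends to a global sharpness-type inequality. Along the segment from $\muStar$, convexity propagates growth outside the sublevel set: for $\norm{\mu-\muStar}\ge r_0$ one has $\negLL_\hyP(\mu)-\negLL_\hyP(\muStar)\gtrsim \alpha^2 r_0\norm{\mu-\muStar}$. This is what makes epoch-based restarts work. In epoch $k$, we run PSGD on the ball $B(\mu_{k-1},r_{k-1})\cap K$, with step size tuned to radius $r_{k-1}$, and halve the radius each epoch. In the quadratic regime, reaching distance $r_k$ requires gap $\alpha^2 r_k^2$. An epoch of length $T_k\approx G^2 r_{k-1}^2/(\alpha^4 r_k^4)=O(G^2/(\alpha^4 r_k^2))$ achieves this. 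Summing over $\log(D/\eps)$ epochs, the total is dominated by the last epoch: $O(G^2/(\alpha^4\eps^2))$, matching $m=O((dR^2+D^2)/(\alpha^4\eps^2))$ up to logs. Since each step consumes one fresh sample, $m$ equals the total number of SGD steps.

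The high-probability $1-\delta$ guarantee is obtained by boosting each epoch. We would run $O(\log(\log(D/\eps)/\delta))$ independent copies, each succeeding with constant probability by Markov's inequality. We then select a point via a median-of-means / geometric-median trick on the outputs, using that a majority lie within $r_k$ of $\muStar$. Selecting a point within $3r_k$ of that majority suffices after adjusting constants. Alternatively one can use a martingale concentration bound (Freedman) on the SGD error; with heavy-tailed gradients whose second moment is only bounded, the median trick is cleaner. Together with union bounds, this accounts for the $\log^3(dRD/(\eps\delta))$ factor. The running time is $\poly(m)$ arithmetic for projections onto balls plus $m$ oracle calls, i.e.\ $\poly(m,T_s)$.

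The main obstacle is the phase before local growth kicks in, i.e.\ when the gap exceeds $c_0$. There local growth says nothing directly, so I would use the convexity-based linear-growth extension above to show that the early epochs with $r_k\gtrsim\alpha$ still contract at a constant rate. That extension should give $\negLL(\mu)-\negLL(\muStar)\ge c_0\norm{\mu-\muStar}/r_0$ with $r_0=\Theta(1/\alpha\cdot\sqrt{c_0})$. Its cost is only $O(G^2D^2/\alpha^4)$-type terms, which are absorbed into the bound when $\eps<1$.
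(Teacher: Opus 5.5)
Your proposal is correct and is essentially the paper's argument. The paper combines convexity, \Cref{lem:convexPartitionLocalGrowth}, and the bound $G^2=O(D^2+dR^2)$, then invokes \Cref{thm:SGD-local-growth-convergence} as a black box with $\eps_0=DG$ and target function accuracy $\Theta(\alpha^2\eps^2)$, boosting by repetition plus the clustering trick; you unpack that theorem into explicit halving epochs and boost per epoch instead.

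One bookkeeping slip: the early (linear-growth) phase does not cost $O(G^2D^2/\alpha^4)$, and that figure would not be absorbed when $D>1/\eps$. With sharpness $\Theta(\alpha^2)$ and threshold radius $r_0=\sqrt{c_0}/\Theta(\alpha)=\Theta(1)$ (not $\alpha$), each halving epoch costs $O(G^2/\alpha^4)$ and there are $O(\log D)$ of them, so this phase costs $O(G^2\log D/\alpha^4)$, which is genuinely absorbed.
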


\begin{proof}
    We call on \Cref{thm:SGD-local-growth-convergence}.
    The initial function value bound can be taken as $\eps_0 = DG$ with any initial value $\mu_0\in B(0, D)$,
    where $G^2 = O(D^2 + dR^2)$ is an upper bound on the second moment of the gradient oracle (\Cref{lem:convexPartitionGradientOracle}).
    In order to recover $\muStar$ up to $\eps$-Euclidean distance,
    we would like an $O(\alpha^2\eps^2)$-optimal solution in function value.
    In order to obtain the high probability bound,
    we repeat \Cref{thm:SGD-local-growth-convergence} a few times and apply the clustering trick described in \Cref{sec:SGD:clusteringBoosting}.
\end{proof}

\subsection{Reducing General Partitions to Local Partitions}\label{sec:meanEstimation:generalPartitions}
We now describe how to remove the assumption of the partition being $R$-local
and prove \Cref{thm:convexPartitionMeanEstimation}.

\begin{proof}[Proof of \Cref{thm:convexPartitionMeanEstimation}]
    Consider a general $\alpha$-information preserving convex partition $\hyP$ and $P\sim \coarseNormal{\muStar}{I}{\hyP}$.
    Since $\norm{\muStar}_\infty\leq \norm{\muStar}_2\leq D$,
    setting $R = D + O(\log\nfrac{md}{\delta})$ means that any $m$-sample algorithm will not observe 
    a sample $P$ such that $P\cap B_\infty(0, R) = \varnothing$ with probability $1-\delta$.
    Define the partition $\hyP(R)$ of $\R^d$ given by
    \[
        \hyP(R)\coloneqq 
        \inbrace{P\cap B_\infty(0, R): P\in \hyP, P\cap B_\infty(0, R)\neq \varnothing}
        \cup \inbrace{\set{x}\colon x\notin B_\infty(0, R)}\,.
    \]
    Since $\hyP(R)$ is a refinement of $\hyP$,
    it must also be $\alpha$-information preserving.
    Consider the set-valued algorithm $F\colon  \hyP\to \hyP(R)$ given  by
    \[
        P \mapsto 
        \begin{cases}
            P\cap B_\infty(0, R), &P\cap B_\infty(0, R)\neq \varnothing\,, \\
            \text{$\inbrace{x}$ for an arbitrary $x\in P$}, &P\cap B_\infty(0, R) = \varnothing\,.
        \end{cases}
    \]
    By the choice of $R$,
    with probability $1-\delta$,
    any $m$-sample algorithm will not distinguish between 
    samples from $\coarseNormal{\muStar}{I}{\hyP(R)}$ and $F(P)$ for $P\sim \coarseNormal{\muStar}{I}{\hyP}$.
    Moreover,
    we can sample from $P\cap B_\infty(0, R)$ as per \Cref{asmp:samplingOracle}.
    Thus we can run the algorithm from \Cref{thm:convexLocalPartitionMeanEstimation}
    with input samples $F(P)$ for $P\sim \coarseNormal{\muStar}{I}{\hyP}$.
\end{proof}
We can further improve the sample complexity by running the algorithm in two stages.
The first stage aims to obtain an $O(1)$-distance warm start
and the second stage aims to recover the mean up to accuracy $\eps$.
This yields the proof of \Cref{thm:convexPartitionMeanEstimation}.
Indeed,
the only missing detail is an efficient implementation of a sampling oracle (\Cref{asmp:samplingOracle}).
As mentioned,
we defer these details to \Cref{apx:sampling-polytopes:removing-oracle-assumption}.

    \subsection{Gradient \& Hessian Computations}\label{sec:meanEstimation:likelihoodGradientHessianCalculations} 
    In this section,
    we fill in the gradient and Hessian calculations for the log-likelihood function
    for coarse mean estimation under a general convex partition.
    Let $\negLL\colon \R^{d}\to \R_{\geq 0}$ denote the negative log-likelihood function for an instance of the coarse Gaussian mean estimation problem.
    Similar calculations are presented by \citet{fotakis2021coarse,kalavasis2025can}, but we include a self-contained proof for completeness.
    $\negLL$ is defined as follows
    \[
        \negLL\inparen{\mu}
        ~\coloneqq~
        \Ex_{P\sim \coarseNormal{\muStar}{I}{\hyP}}
        \insquare{
            -\log\inparen{\normalMass{\mu}{I}{P}}
        }
        \,.
    \]
    We first compute the function derivatives.
    \begin{fact}\label{fact:gradientsNLL}
        It holds that 
        \begin{align*}
            \grad\negLL{}\inparen{\mu}~~
            &=~~ 
                \Ex_{\normal{\mu}{I}}{
                    [x]
                }
                -
                \Ex_{P\sim \coarseNormal{\muStar}{I}{\hyP}}
                {
                    \Ex_{\truncatedNormal{\mu}{I}{P}}{
                        [x]
                    }
                }\,,\\
            \grad^2\negLL{}\inparen{\mu}~~
            &=~~ 
                \cov_{\normal{\mu}{I}}{
                    [x]
                }
                - 
                \Ex_{P\sim \coarseNormal{\muStar}{I}{\hyP}}
                \cov_{\truncatedNormal{\mu}{I}{P}}{
                    [x]
                }
            \,,
        \end{align*}
    \end{fact}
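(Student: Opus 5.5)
The plan is to compute the derivatives of each summand $\negLL_P(\mu) = -\log \normalMass{\mu}{I}{P}$ separately, and then take the expectation over $P\sim\coarseNormal{\muStar}{I}{\hyP}$, as in \eqref{eq:convexPartitionNLL}. Write
\[
\normalMass{\mu}{I}{P}=\int_P \phi_d(x-\mu)\,\d x,
\qquad
\grad_\mu \phi_d(x-\mu)=(x-\mu)\,\phi_d(x-\mu).
\]
Differentiating under the integral sign then gives
\[
\grad_\mu \normalMass{\mu}{I}{P}=\int_P (x-\mu)\,\phi_d(x-\mu)\,\d x .
\]
Dividing by $\normalMass{\mu}{I}{P}$ yields
\[
\grad \negLL_P(\mu) = \mu - \E_{\truncatedNormal{\mu}{I}{P}}[x].
\]
Since $\mu=\E_{\normal{\mu}{I}}[x]$, this is exactly the first formula for a single cell.

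For the Hessian, I differentiate $m_P(\mu)\coloneqq \E_{\truncatedNormal{\mu}{I}{P}}[x]$. Writing $m_P(\mu) = \int_P x\,\phi_d(x-\mu)\,\d x \big/ \int_P \phi_d(x-\mu)\,\d x$ and using the quotient rule with the same derivative of $\phi_d$, I expect
\[
\grad m_P(\mu) = \E[x(x-\mu)^\top] - \E[x]\,\E[(x-\mu)^\top],
\]
with both expectations taken under $\truncatedNormal{\mu}{I}{P}$. This simplifies to $\cov_{\truncatedNormal{\mu}{I}{P}}[x]$. Hence
\[
\grad^2\negLL_P(\mu) = I - \cov_{\truncatedNormal{\mu}{I}{P}}[x],
\]
and $I = \cov_{\normal{\mu}{I}}[x]$. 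Summing with weights $\normalMass{\muStar}{I}{P}$ gives both claimed identities. For a general, possibly uncountable partition, the same argument applies with the sum read as an integral against the pushforward measure.

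The main obstacle is justifying the interchange of differentiation with the integral over $x$ and with the outer expectation over $P$.

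For the inner integral, the integrands $(x-\mu)\phi_d(x-\mu)$ and $(x-\mu)(x-\mu)^\top\phi_d(x-\mu)$ are dominated uniformly for $\mu$ in a compact neighborhood by an integrable Gaussian-type envelope, such as $(1+\norm{x}^2)e^{-\norm{x}^2/4}\cdot C$. Dominated convergence therefore applies.

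For the outer expectation, I need integrable domination of $\norm{\grad\negLL_P(\mu)}$ over $P$, locally uniformly in $\mu$. I would bound $\norm{\E_{\truncatedNormal{\mu}{I}{P}}[x]}$ by changing measure from $\mu$ to $\muStar$. On a neighborhood of $\mu$, the density ratio $\phi_d(x-\mu)/\phi_d(x-\muStar)$ is $e^{\langle x,\mu-\muStar\rangle + c}$. Jensen/Cauchy–Schwarz then bounds the outer expectation by Gaussian moments of $e^{\norm{x}\,\norm{\mu-\muStar}}\norm{x}$, which are finite. Because the bound is uniform over a small ball of $\mu$, differentiation under the outer expectation is justified.

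Null cells ($\normalMass{\mu}{I}{P}=0$) have total mass zero and can be discarded. This matches the convention already used in the paper.
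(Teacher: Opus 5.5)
Your proposal is correct and takes essentially the same route as the paper: differentiate each $\negLL_P(\mu)=-\log\normalMass{\mu}{I}{P}$ under the integral, apply the quotient rule, and average over $P$. The paper uses the exponential-family form $f(x;\mu)=e^{-\frac12\norm{x}_2^2+x^\top\mu}$ normalized by $\int_{\R^d} f$, so $\E_{\normal{\mu}{I}}[x]$ and $\cov_{\normal{\mu}{I}}[x]$ arise as derivatives of the log-partition term, whereas you differentiate $\phi_d(x-\mu)$ directly and then substitute $\mu=\E_{\normal{\mu}{I}}[x]$ and $I=\cov_{\normal{\mu}{I}}[x]$. Your dominated-convergence justification of both interchanges is extra rigor the paper omits; the one caveat is that discarding cells with $\normalMass{\mu}{I}{P}=0$ is only harmless when those cells have zero total mass collectively, as in countable partitions, where the paper's $\sum_{P\in\hyP}$ lives, and not for, e.g., a partition into lines, where every cell is null but their union has full mass.
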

    \begin{proof}
        We can write the log-likelihood as follows
        \[
            \negLL(\mu) = \sum_{P\in \hyP} \normalMass{\muStar}{I}{P} \negLL_{P}(\mu)\,,
            \quadwhere
            \negLL_{P}(\mu)\coloneqq -\log\inparen{
                \normalMass{\mu}{I}{P}
            }\,.
            \yesnum\label{eq:gradOfNLL:expansion}
        \]
        Due to the linearity of gradients, it suffices to compute $\grad\negLL_P(v, T)$ and $\grad^2\negLL_P(\mu)$ for each $P\in \hyP$
        to obtain the gradient and Hessian of $\negLL(\cdot)$.
        Toward this, fix any $P\in \hyP$.
        Observe that 
        \[
            \negLL_{P}(\mu) = 
            \log\frac{
                \int_{x\in \R^d} e^{
                    -\frac{1}{2}(x-\mu)^\top(x-\mu)
                }\d x
            }{
                \int_{x\in P} e^{
                    -\frac{1}{2}(x-\mu)^\top(x-\mu)
                }\d x
            }
            = 
            \log\frac{
                \int_{x\in \R^d}
                e^{
                    -\frac12 \norm{x}_2^2 + x^\top \mu
                }\d x
            }{
                \int_{x\in P}
                e^{
                    -\frac12 \norm{x}_2^2 + x^\top \mu
                }\d x
            }
            \,.
        \]
        Write $f(x; \mu) \coloneqq \exp(-\frac12 \norm{x}_2^2 + x^\top \mu)$. 
        It follows that 
        \begin{align*}
            \grad_\mu \negLL_P(\mu)
            &= \frac{\int_P f}{\int_{\R^d} f}\cdot \left[ \frac{(\int_{\R^d} \grad f) (\int_P f) - (\int_{\R^d} f)(\int_P \grad f)}{(\int_P f)^2} \right] \\
            &= \frac{\int_{\R^d} \grad f}{\int_{\R^d} f} - \frac{\int_P \grad f}{\int_P f}\,.
            \yesnum\label{eq:gradOfNLL:compressed}
        \end{align*}
        Simplifying the expression and substituting the values of $f, \grad f$ gives
        \[
            \grad \negLL_{P}(\mu) 
            = 
            \Ex_{\normal{\mu}{I}}\insquare{
                    x
            }
            - 
            \Ex_{\truncatedNormal{P}{\mu}{I}}\insquare{
                    x
            }\,.
        \]
        Substituting this in \Cref{eq:gradOfNLL:expansion} gives the desired expression for $\grad \negLL(v,T)$.

        To compute $\grad^2 \negLL_{P}(\mu)$,
        we differentiate \Cref{eq:gradOfNLL:compressed}.
        \begin{align*}
            \grad_{\mu}^2 \negLL_P(\mu)
            &= \frac{(\int_{\R^d} \grad^2 f)(\int_{\R^d} f) - (\int_{\R^d} \grad f)(\int_{\R^d} \grad f)^\top}{(\int_{\R^d} f)^2}
            - \frac{(\int_{P} \grad^2 f)(\int_{P} f) - (\int_{P} \grad f)(\int_{P} \grad f)^\top}{(\int_{P} f)^2} \\
            &= \frac{\int_{\R^d} \grad^2 f}{\int_{\R^d} f} - \frac{(\int_{\R^d} \grad f)(\int_{\R^d} \grad f)^\top}{(\int_{\R^d} f)^2}
            \qquad\quad~~~ - \left[ \frac{\int_P \grad^2 f}{\int_P f} - \frac{(\int_P \grad f)(\int_P \grad f)^\top}{(\int_P f)^2} \right]\,.
        \end{align*}
        This simplifies to the following 
        \begin{align*}
            \grad^2 \negLL_{P}(\mu) 
            &= 
            \Ex_{\normal{\mu}{I}}\insquare{
                    x
                    x^\top
            }
            - \Ex_{\normal{\mu}{I}}\insquare{
                    x
            }
            \Ex_{\normal{\mu}{I}}\insquare{
                    x
            }^\top
            - 
            \Ex_{\truncatedNormal{P}{\mu}{I}}\insquare{
                    x
                    x^\top
            }
            + \Ex_{\truncatedNormal{P}{\mu}{I}}\insquare{
                    x
            }
            \Ex_{\truncatedNormal{P}{\mu}{I}}\insquare{
                    x
            }^\top 
            \,.
        \end{align*}
        A final simplification gives 
        \[
            \grad^2 \negLL_{P}(\mu) 
            =
            \cov_{\normal{\mu}{I}}\insquare{
                    x
            }
            -
            \cov_{\truncatedNormal{P}{\mu}{I}}\insquare{
                    x
            }
        \]
        Substituting this in \Cref{eq:gradOfNLL:expansion} gives the desired expression of $\grad^2 \negLL(\mu)$.
    \end{proof}

\section{PSGD Convergence for Convex Functions Satisfying Local Growth}
\label{sec:sgd-analysis}
In this section,
we state the variant of gradient descent we use to optimize the various likelihood functions in our work.
{In order to establish the sample complexity and computational efficiency of our main algorithms (\Cref{thm:convexPartitionMeanEstimation} and \Cref{thm:friction}), 
we require convergence guarantees for projected stochastic gradient descent (PSGD) over functions that lack global strong convexity.
To address this, we leverage a \emph{local growth condition} (\Cref{def:local-growth}), 
which our coarse negative log-likelihood functions satisfy under appropriate information-preservation assumptions.
We subsequently present an iterative PSGD convergence statement (\Cref{thm:SGD-local-growth-convergence}) for convex functions endowed with this local growth property.
}

{Before formally doing so,}
we state some useful definitions.
Consider a convex function $F\colon  K\to \R$ with a global minimizer $w^\star$
on a convex subset $K\sset \R^d$.
We write
\[
    S_\rho \coloneqq \inbrace{w\in K\colon F(w) - F(w^\star)\leq \rho}
\]
to denote the \emph{$\eps$-sublevel set} of a function $F$
where $F$ is clear from context.
\begin{definition}[$\eta$-Local Growth Condition]\label{def:local-growth}
    We say that $F\colon K\to \R$ satisfies an \emph{$(\eta, \rho)$-local growth condition} if
    \[
        \norm{w - w^\star}_2
        \leq \frac{(F(w) - F(w^\star))^\frac12}{\eta}
    \]
    for every $w\in S_{\rho}$.
\end{definition}
We remark that a function satisfying a $(\eta_0, \rho_0)$-local growth condition
also satisfies $(\eta, \rho)$-local growth for every $\eta\in (0, \eta_0], \rho\in (0, \rho_0]$.
We also note that our log-likelihood function for $\alpha$-information preserving partitions satisfy a $(\frac{\sqrt2 \alpha}{200}, \frac{2\alpha^2}{600^2})$-local growth condition (\Cref{lem:convexPartitionLocalGrowth}).

{Our efficient coarse mean estimation algorithm uses the following result due to \citet{kalavasis2025can}.}
\begin{restatable}{theorem}{sgdLocalGrowth}\label{thm:SGD-local-growth-convergence}
    Let $F\colon S\sset \R^d\to \R$ be convex and satisfy a $(\eta, \eps)$-local growth condition (\Cref{def:local-growth}).
    Suppose we have acccess to an unbiased gradient oracle $g(w)$ such that $\E\insquare{\norm{g(w)}_2^2}\leq G^2$.
    Suppose further that we have access to an $\eps_0$-optimal solution $w^{(0)}\in S$.
    {There is an algorithm that} queries the gradient oracle
    \[
        O\inparen{\frac{G^2}{\eta^2 \eps} \cdot \log^3\inparen{\frac{\eps_0}{\eps}}}
    \]
    times and outputs a $2\eps$-optimal solution with probability $0.99$.
\end{restatable}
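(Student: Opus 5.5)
The plan is to combine a constant-probability PSGD analysis with a restart scheme. Run projected SGD in epochs; in each epoch the sublevel-set radius shrinks geometrically. This is the standard "restarted SGD under error bounds" technique.

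First, fix the epoch structure. Let $\rho_0=\eps_0$ and $\rho_k=\rho_{k-1}/2$ until $\rho_K\le\eps$, so $K=O(\log(\eps_0/\eps))$ epochs suffice. At the start of epoch $k$, assume the current point $w_{k-1}$ is $\rho_{k-1}$-optimal. Take $\rho_{k-1}\le\eps$ for the epochs where local growth applies; for the initial epochs where $\rho_{k-1}>\eps$, use convexity. Indeed, for $w\in S_{\rho}$ with $\rho\ge\eps$, the segment from $w^\star$ to $w$ exits $S_\eps$ at a point $w'$ with $F(w')-F(w^\star)=\eps$. By convexity along the segment, $(F(w)-F^\star)/\norm{w-w^\star}\ge \eps/\norm{w'-w^\star}\ge \eta\sqrt{\eps}$. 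This extends the local growth to the global bound $\norm{w-w^\star}\le \rho/(\eta\sqrt\eps)$ on $S_\rho$.

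Thus in epoch $k$ we know $\norm{w_{k-1}-w^\star}\le r_k:=\max\{\sqrt{\rho_{k-1}}/\eta,\ \rho_{k-1}/(\eta\sqrt\eps)\}$. We run projected SGD onto $S\cap B(w_{k-1},r_k)$, which contains $w^\star$, with step size $r_k/(G\sqrt{T_k})$ and averaged iterate. The standard bound gives expected suboptimality at most $r_kG/\sqrt{T_k}$. Choosing $T_k=O(r_k^2G^2/\rho_k^2)$ makes this at most $\rho_k/C$. Markov then yields $\rho_k$-optimality with probability $1-1/C$. Because $\rho_k\ge\eps$, in both regimes $r_k^2/\rho_k^2=O(1/(\eta^2\eps))$. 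So each epoch costs $O(G^2/(\eta^2\eps))$ oracle calls.

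Second, handle failure probability across the $K$ epochs. Taking per-epoch failure $1/(100K)$ via Markov alone is too expensive. Instead, boost each epoch: run $O(\log K)$ independent copies and select a good one. We cannot evaluate $F$, but local growth (and its convex extension) converts function-value optimality into distance to $w^\star$. Most copies lie in a ball of radius roughly $r_{k+1}$ around $w^\star$, so the clustering trick applies: output a candidate with more than half the others within $2r_{k+1}$. This costs only constant-factor radius inflation, absorbed by adjusting constants. It contributes a $\log K$ factor. Together with the $K$ epochs and possibly a $\log$ from the boosting confidence, this gives the stated $\log^3(\eps_0/\eps)$ overhead, and overall success probability $0.99$ by a union bound.

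The main obstacle is the boosting step. Ensuring that the clustered output is still $\rho_k$-optimal, not just close in distance, requires care. Distance closeness alone does not bound the function value without Lipschitzness, and we only have a second-moment bound on gradients. My first attempt is to carry the invariant in terms of distance rather than function value: require $\norm{w_k-w^\star}\le r_{k+1}$ and use that as the projection radius for the next epoch. This needs only the SGD bound for a start point at known distance. Only in the final epoch do we invoke a function-value guarantee, through Markov with a constant number of repetitions. There the target $2\eps$ rather than $\eps$ leaves slack for the selection error.
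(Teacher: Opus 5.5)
Your argument works, but it is not the paper's route, and your reason for leaving the simple route is mistaken. The paper takes this theorem from \citet{kalavasis2025can}. The argument it spells out for the one-pass analogue (\Cref{alg:one-pass-SGD-local-growth}, proof of \Cref{thm:one-passs-SGD-local-growth-convergence}) runs as follows:
\begin{itemize}
\item It uses $\tau=\lceil\log_2(\eps_0/\eps)\rceil$ stages.
\item Each stage projects onto $S\cap B(w^{(\ell-1)},C_\ell)$ with $C_\ell=\eps_{\ell-1}/(\eta\sqrt\eps)$. This radius is justified by \Cref{prop:level-set-dist}, which is your convexity-extension lemma stated relative to the projection of $w^{(\ell-1)}$ onto $S_\eps$ instead of $w^\star$.
\item It carries the function-value invariant $F(w^{(\ell)})-F^\star\le\eps_\ell+\eps$, which is where the $2\eps$ in the conclusion comes from.
\item Each stage uses plain Markov, targeting expected error $\eps_\ell/(100\tau)$. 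This costs $T=O(G^2\tau^2/(\eta^2\eps))$ steps, and a union bound over the $\tau$ stages gives failure at most $0.01$.
\end{itemize}
The total cost $\tau\cdot\tau^2$ is exactly the $\log^3(\eps_0/\eps)$ in the statement. So per-epoch Markov at failure $1/(100K)$ is not ``too expensive''; it is precisely where the cube comes from.

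Your boosted route is also correct, and in fact proves more. The PSGD bound depends only on the start point's distance to $w^\star$ (which lies in the projection set). Hence your distance invariant $\norm{w_k-w^\star}\le 3r_{k+1}$ sustains itself. In every epoch the good copies regain $\rho_k$-optimality from Markov, hence distance $r_{k+1}$ by your extended growth bound. The factor $3$ lost in clustering is therefore paid once per epoch and never compounds. With $O(\log K)$ copies per epoch this gives $O\!\left(\frac{G^2}{\eta^2\eps}K\log K\right)$ oracle calls, which beats the stated bound. Your accounting (``gives the stated $\log^3$'') undersells this.

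The one step to fix is the last epoch. You cannot choose among repetitions by function value, since there is no evaluation oracle, and clustering returns only distance information. Instead, run a single copy from the clustered start point, which is at distance $O(\sqrt\eps/\eta)$, with a constant-factor larger $T$ so that Markov's failure is at most $1/200$. Because you compare against $w^\star$ directly, this even yields $\eps$- rather than $2\eps$-optimality.

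The paper's route buys simplicity: no inner boosting, a function-value invariant throughout, and clustering used only once at the outer level (\Cref{sec:SGD:clusteringBoosting}) to lift $0.99$ to $1-\delta$. Yours buys the improved $K\log K$ dependence at the price of per-epoch clustering and the switch to a distance invariant.
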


\begin{remark}[Boosting without Function Evaluation via Clustering]
    \label{sec:SGD:clusteringBoosting}
    For a function satisfying a $(\eta, \eps)$ local growth condition,
    it suffices to apply \Cref{thm:SGD-local-growth-convergence} and compute a $O(\eta^2\eps^2)$-optimal solution $w$ in function value
    to ensures that $\norm{w-\wstar}_2\leq \eps$.
    Note that this procedure succeeds with probability $0.99$.
    
    In order to boost the probability of successfully recovering $\wstar$,
    a standard trick is to repeat the algorithm $O(\log(\nfrac1\delta))$ times and output the solution with smallest objective value.
    However,
    we do not have exact evaluation access to the log-likelihood functions we wish to optimize.
    \citet[Section 3.4.5]{daskalakis2018efficient} demonstrate a ``clustering'' trick to avoid function evaluation.
    Suppose we repeat the algorithm in \Cref{thm:SGD-local-growth-convergence} $O(\log(\nfrac1\delta))$ times.
    A Chernoff bound yields that with high probability,
    at least $\nfrac23$ of the outputted points are $\eps$-close to $\wstar$
    and thus are $2\eps$-close to each other.
    Thus,
    outputting any point which is at most $2\eps$-close to at least 50\% of the points must be at most $3\eps$-close to $\wstar$.
\end{remark}

\section{Log-Concave Sampling over Convex Bodies}\label{apx:sampling-polytopes}

In this section, we review well-known results about sampling from a log-concave density $\propto e^{-f}$ constrained to a convex body $K$, under mild assumptions. 
We use these results to implement the sampling oracle (\Cref{asmp:samplingOracle}) in \Cref{sec:coarseGaussianEstimation} and the friction sampling oracle (\Cref{asmp:frictionSamplingOracle}) in \Cref{sec:friction}.

{
Our stochastic gradient descent algorithms for coarse mean estimation (\Cref{thm:convexPartitionMeanEstimation}) and linear regression with friction (\Cref{thm:friction}) rely on samples from truncated Gaussian distributions to compute stochastic gradients (\Cref{sec:meanEstimation:gradients} and \Cref{sec:friction:gradientSecondMoment}).
Here, we provide the technical statements that justify these algorithmic sampling oracles.
Specifically, we first state a general hit-and-run sampling result for arbitrary convex bodies (\Cref{thm:convex body logconcave sampling}).
We then show how to instantiate this general result to remove our idealized sampling assumptions for closed convex sets (\Cref{apx:sampling-bodies:removing-oracle-assumption}) and polyhedra (\Cref{apx:sampling-polytopes:removing-oracle-assumption}), thereby fully validating the computational efficiency of our primary estimation algorithms.
}

{Concretely,} we have a convex function $f: K\to \R$ that is bounded below, say there is some $x^\star\in K$ such that {$f(x) \geq f(x^\star)$} for all $x\in K$.
{Note that $x^\star$ exists and belongs in $K$ since $K$ is closed.}

    {One option for this task is the class of Langevin Monte Carlo methods~\citep{brosse2017sampling,bubeck2018plmc,dalalyan2019user,lamperski2021projected,ahn2021efficient}, which requires first-order gradient access to the log-density function.}
    A more general tool for this purpose is the ``Hit-And-Run'' Markov Chain Monte Carlo (MCMC) algorithm due to \citet{lovasz2006fast}, which only requires minimal zero-order access.
    \begin{theorem}[Mixing-Time of Hit-And-Run Markov Chains; \cite{lovasz2006simulated,lovasz2006fast}]\label{thm:convex body logconcave sampling}
        Consider a logconcave distribution $\pi_f\propto e^{-f}$ over a convex body $K$.
        Suppose we are provided the following.
        \begin{enumerate}[label=(S\arabic*)]
            \item Zero-order access to $f$.
            \label[assumption]{assum:evaluation-oracle}
            \item Membership access to $K$.
            \label[assumption]{assum:membership-oracle}
            \item A point $x^{(0)}$ {and constants $R\geq r > 0$} such that $B_2(x^{(0)}, r)\sset K\sset B_2(x^0, R)$.
            \label[assumption]{assum:interior-point-oracle}
            \item A bound $M\geq \max_{x\in K} f(x) - f(x^\star)$.
            \label[assumption]{assum:range-oracle}
        \end{enumerate}
        Then, there is an algorithm that makes
        \[
            O\left( d^{4.5}\cdot \polylog(d, M, \nicefrac{R}r, \nicefrac1\delta) \right)
        \]
        membership oracle calls to produce a random vector within $\delta$-TV distance of $\pi_f$.
    \end{theorem}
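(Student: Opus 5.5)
This mixing-time statement is the classical result of \citet{lovasz2006fast,lovasz2006simulated}. I would not reprove it from scratch; I would reduce to their setting and track the parameters. The algorithm has three stages: put the body in near-isotropic position, run a simulated-annealing warm start, then run hit-and-run on the target $\pi_f$. Throughout, only membership queries to $K$ (\Cref{assum:membership-oracle}) and evaluations of $f$ (\Cref{assum:evaluation-oracle}) are used. The hit-and-run step picks a uniformly random direction through the current point, restricts $\pi_f$ to that chord, and samples the one-dimensional log-concave restriction. Locating the chord endpoints costs $O(\log(R/r))$ membership calls by binary search. Sampling the restricted density costs polylogarithmically many evaluations via a one-dimensional rejection or bisection scheme. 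This uses \Cref{assum:range-oracle}: the density ratio along the chord is at most $e^{M}$, so $O(\log M)$ levels suffice.

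The core ingredient is the conductance bound of \citet{lovasz2006fast}. Suppose the chain starts from a distribution that is $H$-warm with respect to $\pi_f$, meaning its density ratio is at most $H$. Then hit-and-run reaches $\delta$-TV distance after $O(d^2 (R'/r')^2 \log^{O(1)}(dH/\delta))$ steps, where $R'/r'$ is the roundedness of the level sets of $\pi_f$. The proof of this bound has two parts. The first is an isoperimetric inequality for log-concave measures in the cross-ratio metric. The second shows that two points close in that metric have overlapping one-step distributions. I would cite it rather than redo it.

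\Cref{assum:interior-point-oracle} by itself only gives $R/r$ roundedness, which could be polynomially large. I would therefore first apply the rounding procedure of \citet{lovasz2006simulated}. This procedure estimates the covariance from samples of intermediate distributions and applies the resulting affine map, bringing $\pi_f$ into near-isotropic position so that the effective roundedness is $O(\sqrt d)$. This is where the $\log(R/r)$ dependence enters, and only polylogarithmically.

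The warm start comes from simulated annealing. Consider the sequence $\pi_i \propto e^{-a_i f}$ restricted to $K$, with $a_0$ tiny so that $\pi_0$ is essentially uniform, which I can sample from the ball $B_2(x^{(0)},r)$ by a standard rounding-plus-hit-and-run routine. Let $a_{i+1}=a_i(1+1/\sqrt d)$ and continue until $a_i=1$, which takes $O(\sqrt d\log(dM))$ phases. Each consecutive pair $(\pi_i,\pi_{i+1})$ is $O(1)$-warm. This follows from log-concavity of $a\mapsto a^d\int e^{-af}$, which is the key lemma of \citet{lovasz2006simulated}.

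Multiplying the costs gives the total. There are $O(\sqrt d)$ phases, each running $\tilde O(d^3)$ hit-and-run steps: $d^2$ from conductance times an extra factor $d$ from keeping the rounding near-isotropic. Each step uses polylog membership calls. This yields $O(d^{4.5}\polylog(d,M,R/r,1/\delta))$ calls. I expect the main obstacle to be this accounting of the rounding and annealing costs. The conductance lower bound itself is hard but can be imported as a black box. The delicate part is verifying that the TV errors accumulated across the $O(\sqrt d)$ phases, together with the approximate rounding, stay below $\delta$. I would handle this by a union bound, allocating error $\delta/\mathrm{poly}(d)$ to each phase, which only affects the polylog factor.
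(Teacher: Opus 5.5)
Your approach is the same as the paper's. The paper gives no proof and imports the bound from Lovász--Vempala, and your outline (rounding, annealing on $e^{-af}$ with warm starts from log-concavity of $a\mapsto a^d\int_K e^{-af}$, then hit-and-run) is the right way to unpack that citation.

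However, your final tally does not support the exponent you state. $O(\sqrt d)$ phases times $\tilde O(d^3)$ steps per phase is $\tilde O(d^{3.5})$, not $\tilde O(d^{4.5})$. The discrepancy comes from conflating two different factors of $d$:
\begin{itemize}
\item The $d$ that turns $d^2(R'/r')^2$ into $d^3$ is only the squared roundedness $(R'/r')^2=O(d)$ of a near-isotropic distribution.
\item The cost of \emph{maintaining} near-isotropic position is never counted. Each re-rounding requires estimating a $d\times d$ covariance up to a constant factor, hence $\tilde O(d)$ independent samples. So you must carry $\tilde O(d)$ walkers through the annealing schedule, each costing $\tilde O(d^3)$ steps per phase.
\end{itemize}
The correct count is $O(\sqrt d)\cdot\tilde O(d)\cdot\tilde O(d^3)=\tilde O(d^{4.5})$ walk steps, each using polylogarithmically many oracle calls. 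The initial rounding of $K$ needed for your near-uniform start also fits within this budget. This is the natural source of the exponent $4.5$. As written, your accounting omits the rounding cost entirely, so the conclusion is right but not derived.

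A smaller point: the log-concavity lemma bounds $\E_{\pi_i}[{\rm d}\pi_i/{\rm d}\pi_{i+1}]$ by a constant; it does not give a pointwise $O(1)$-warm start. This is enough, because the Lovász--Vempala mixing theorem only needs the density ratio bounded outside a set of small measure. Markov's inequality provides that, at the cost of a $\log(1/\delta)$ factor.
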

    To implement the desired Markov chain,
    we need \Crefrange{assum:evaluation-oracle}{assum:range-oracle} stated in \Cref{thm:convex body logconcave sampling}.
    We remark that \Cref{assum:membership-oracle} and \Cref{assum:interior-point-oracle}
    can both be implemented given a separation oracle to $K$.
    If we also know that $f$ is $L$-Lipschitz or $\beta$-smooth,
    we can take $M = LR$ or $M=\beta R^2$,
    respectively,
    for \Cref{assum:range-oracle}.
    The smallest ratio $\nfrac{R}r$ of values $r, R$ from \Cref{assum:interior-point-oracle} depends on the structure of $K$.
    For the simple case of 1-dimensional intervals,
    we trivially have $\nfrac{R}r = 1$.
    For all $d$-dimensional $L_p$-balls,
    we have $\nfrac{R}r = \poly(d)$.

    \subsection{Removing \texorpdfstring{\Cref{asmp:samplingOracle}}{Assumption} for Closed Convex Sets}\label{apx:sampling-bodies:removing-oracle-assumption}
    For partitions containing general closed convex sets,
    the sampling oracle assumption (\Cref{asmp:samplingOracle}) from our coarse Gaussian mean estimation algorithm (\Cref{sec:coarseGaussianEstimation}) can be relaxed to \Cref{asmp:wellBoundedSet} below.
    \begin{assumption}[Well-Bounded Set]\label{asmp:wellBoundedSet}
        Each set $P\in \hyP$ is a closed convex set 
        and is encoded as:
        \begin{enumerate}[(i)]
            \item the dimension $k = \dim(P)$ of the affine hull of $P$,
            \item a separation oracle,
            \item numbers $R, r > 0$ with the promise that if $P\cap B_\infty^{(k)}(0, R)\neq \varnothing$,
            then $P$ contains an inner ball $P\supseteq B_2^{(k)}(x, r)$ for some $x\in B_\infty^{(k)}(0, R)$.
        \end{enumerate}
        Here $B_p^{(k)}(\cdot, \cdot)$ denotes the $k$-dimensional $L_p$-ball with given center and radius.
    \end{assumption}    
    \Cref{asmp:wellBoundedSet} allows for unbounded convex sets
    while ensuring that we can call on \Cref{thm:convex body logconcave sampling} with the bounded convex body $K = P\cap B_\infty^{(k)}(0, R + r\sqrt{d})$,
    which contains most of the mass of $P$ for sufficiently large $R$.
    This is because we are guaranteed the existence of an (unknown) inner ball $B_2(x^{(0)}, r)\sset K$.
    We can further approximately find such a ball using the ellipsoid method~\citep{grotschel1988geometric}.
    The overall running time to obtain an $\delta$-approximate sample in total variation distance is thus
    \[
        \poly\inparen{k, \log(\nicefrac{R}r), \log(\nicefrac1\delta)}\,.
    \]
    In our mean estimation algorithm (\Cref{thm:convexPartitionMeanEstimation}),
    we may wish to sample from the set $P\cap B_\infty(0, R')$ for some other $R' < R$ sufficiently large.
    This can be implemented via rejection sampling from $P\cap B_\infty(0, R)$ assuming the mass of $B_\infty(0, R')$ is sufficiently large.

    \subsection{Removing \texorpdfstring{\Cref{asmp:samplingOracle}}{Assumption} for Polyhedra} \label{apx:sampling-polytopes:removing-oracle-assumption}
    {In the case the partition consists of polyhedra,}
    the sampling oracle assumption (\Cref{asmp:samplingOracle}) from our coarse Gaussian mean estimation algorithm (\Cref{sec:coarseGaussianEstimation}) can be relaxed to \Cref{asmp:wellDescribedPolyhedron} below.
    Before stating the exact assumption,
    we state a natural notion of the complexity of a polytope.
    \begin{definition}[Facet-Complexity]
        We say that a polytope $P\sset \R^d$ has \emph{facet-complexity} at most $\varphi_P$
        if there exists a system of inequalities with rational coefficients that has solution set $P$ and such that the bit-encoding length of each inequality of the system is at most $\varphi_P$.
        In case $P=\R^d$,
        we require $\varphi_P\geq d+1$.
    \end{definition}
    We are now ready to replace \Cref{asmp:samplingOracle}.
    \begin{assumption}[Well-Described Polyhedron]\label{asmp:wellDescribedPolyhedron}
        Each observation $P\in \hyP$ is a polyhedron and is provided in the form of a separation oracle and an upper bound $\varphi_P > 0$
        on the facet complexity of $P$.
    \end{assumption}
    The running time of our sampling algorithm then depends polynomially on the running time of the separation oracle
    as well as the facet-complexity $\varphi_P$ of the observed samples $P$.
    Indeed, let us check that \Cref{asmp:wellDescribedPolyhedron} allows us to apply \Cref{thm:convex body logconcave sampling}.
    It is clear that \Cref{asmp:wellDescribedPolyhedron} immediately satisfies \Cref{assum:evaluation-oracle,assum:membership-oracle}.
    We sketch how to address \Cref{assum:interior-point-oracle,assum:range-oracle}.

    {First,
    note that we wish to sample from the truncated standard Gaussian $\truncatedNormal{\mu}{I}{P\cap B_\infty(0, R)}$.
    In particular,
    it has density that is $O(1)$-smooth.
    Since $R = \poly(d, D, \log(\nfrac1\delta))$ is a polynomial of the dimension $d$,
    the warm-start radius $D$,
    and logarithmic in the inverse failure probability $\delta$.
    we can take $M = {O(R^2)}$ so that \Cref{assum:range-oracle} is satisfied.

    Next,
    we can again use the fact that we sample from polytopes contained in $B_\infty(0, R)$
    to deduce that $P\cap B_\infty(0, R)$ is contained in an $L_2$ ball of radius $R\sqrt{d}$.
    Moreover,
    the facet-complexity of $P\cap B_\infty(0, R)$ is at most $\varphi = \varphi_P + \log_2(R)$.
    On the other hand,
    to handle the inner ball,
    we draw on \cite[Lemma 6.2.5]{grotschel1988geometric} which states that a full-dimensional polytope with facet-complexity $\varphi$ must contain a ball of radius $2^{-7d^3\varphi}$.
    In the case that $P\cap B_\infty(0, R)$ is full-dimensional,
    this suffices to run the Markov chain from \Cref{thm:convex body logconcave sampling} in $\poly(d, \varphi)$ time.
    If $P$ is not full-dimensional,
    we can exactly compute the affine hull in polynomial time using the ellipsoid algorithm~\cite[(6.1.2)]{grotschel1988geometric}
    and then apply the full-dimensional argument on this affine subspace.}

\section{Algorithm for Linear Regression with Friction}\label{sec:friction}
In this section,
we prove the main result on linear regression with friction (\Cref{cor:friction:linearEstimation}).
Before presenting the result,
we define the formal model and state our assumptions.

\begin{definition}[Linear Regression with Friction]\label{def:friction}
    Let $c: \R\to \R$ be a given friction function
    and $\wstar\in \R^d$ be the unknown linear regressor.
    $n$ observations $(x_i, z_i)$ are produced as follows:
    \begin{enumerate}[itemsep=1pt]
        \item The features $x_i\in \R^d$ are arbitrary.
        \item Each (unobserved) frictionless output satisfies $y_i = x_i^\top \wstar + \xi_i$ where $\xi_i\sim_{i.i.d.} \normal{0}{1}$.
        \item Observe the outputs with friction $(x_i, z_i=c(y_i))$.
    \end{enumerate}
\end{definition}
The goal is to use observations $(x_i, z_i), i\in [n]$ from \Cref{def:friction}
and estimate $\wstar \in \R^d$ up to $L_2$-error $\epsilon$ 
(the number of observations $n$ will depend on $1/\epsilon$). 

Similar to the case of coarse Gaussian mean estimation,
some instances of linear regression with friction are information theoretically impossible to solve.
Hence, we define the following notion of information preservation that quantifies identifiability.
\begin{assumption}[Information Preservation]\label{asmp:friction:informationPreservation}
    Let $\hyP\coloneqq \inbrace{c^{-1}(z): z\in \R}$ denote the partition induced by the friction function $c$.
    $\hyP$
    is $\alpha$-information preserving 
    with respect to $w^\star, x_1, \dots, x_n$
    for some $\alpha > 0$ if 
    for any $w\in B(0, C)$,
    \[
        \tv{\coarseNormal{x_i^\top w}{1}{\hyP}}{\coarseNormal{x_i^\top w^\star}{1}{\hyP}}
        \geq \alpha\cdot \tv{\normal{x_i^\top w}{1}}{\normal{x_i^\top w^\star}{1}}\,.
    \]
\end{assumption}
In order to obtain computationally efficient algorithms,
we focus on the case where the friction function induces a convex partition of $\R$.
This is analogous to the case where we require the partition for coarse mean estimation to consist only of convex sets.
\begin{assumption}[Friction Function with Convex Pre-Image]\label{asmp:friction:convex level sets}
    The friction function $c: \R\to \R$ has a convex (interval) pre-image $c^{-1}(z)\sset \R$
    for any $z\in \R$
    {and we have access to $c$ via a pre-image oracle that outputs the boundaries $a\leq b\in \R\cup \inbrace{-\infty, +\infty}$ of the interval pre-image $c^{-1}(z)$
    given some $z\in \R$}.
\end{assumption}
We also impose the following standard assumptions for fixed-design linear regression,
which appears even in the absence of friction.
\begin{assumption}[Bounded Parameters]\label{asmp:friction:boundedParameter}
    There are constants $b, C, D > 0$ such that
    \mbox{$\frac1n \sum_{i\in [n]} x_ix_i^\top \succeq b^2 I$},
    \mbox{$\norm{w^\star}_2\leq C$},
    and \mbox{$\norm{x_i}_\infty\leq D$} for all $i\in [n]$.
\end{assumption}
For the sake of simplifying the proof,
we further assume without loss of generality in \Cref{sec:friction:projectionSetLocalPartition,sec:friction:convexityLocalGrowth,sec:friction:gradientSecondMoment,sec:friction:localEstimation} that $\norm{x_i}_2\leq D\sqrt{d}\leq 1$,
by downscaling the covariates $x_i\gets \frac{x_i}{D\sqrt{d}}$
and upscaling the input $w$ within the loss function $\ell_i(w)\gets \ell_i(w\cdot D\sqrt{d})$.
We undo this assumption in \Cref{sec:friction:together}.

With the assumptions in place,
we now state the main result in this section.
\begin{restatable}{theorem}{frictionEstimation}\label{cor:friction:linearEstimation}
    Let $\eps\in (0, \nicefrac1{300})$.
    Consider an instance of linear regression with friction
    and suppose \Cref{asmp:friction:convex level sets,asmp:friction:boundedParameter,asmp:friction:informationPreservation} holds with constants $b, C, D > 0$.
    There is an algorithm that outputs an estimate $\tilde w$ satisfying
    $
        \norm{\tilde w - \wstar}_2 \leq \eps
    $
    with probability $0.98$.
    Moreover,
    the algorithm requires
    \[
        n = O\inparen{\frac{C^2 D^2 d}{\alpha^4 b^4 \eps^2} \log^4\inparen{\frac{CDd}{\eps}}}
    \]
    observations 
    and terminates in $\poly(n, T_s)$ time,
    where $T_s$ is the time complexity of sampling from a 1-dimensional Gaussian distribution truncated to an interval (\Cref{asmp:frictionSamplingOracle}).
\end{restatable}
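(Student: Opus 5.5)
We mirror the proof of \Cref{thm:convexPartitionMeanEstimation}: run the PSGD procedure of \Cref{thm:SGD-local-growth-convergence} on the fixed-design negative log-likelihood
\[
    \negLL(w) \coloneqq \frac1n\sum_{i\in[n]} \ell_i(w),\qquad \ell_i(w)\coloneqq -\log \normalMass{x_i^\top w}{1}{c^{-1}(z_i)}\,,
\]
or, more precisely, on its population version $\bar\negLL(w)=\frac1n\sum_i \E_{z_i}[\ell_i(w)]$. We take the projection set $K=B(0,C)$ (after the rescaling $\norm{x_i}_2\le 1$ it becomes a ball of radius $CD\sqrt d$). Each stochastic gradient draws a uniformly random index $i$ and uses its observation. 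Each $\ell_i$ is the one-dimensional coarse Gaussian likelihood composed with the linear map $w\mapsto x_i^\top w$. By \Cref{asmp:friction:convex level sets} every pre-image is an interval, so \Cref{prop:convexPartitionVarianceReduction} (in dimension one) gives $\ell_i''\ge 0$, and hence $\bar\negLL$ is convex. The gradient is
\[
    \grad\ell_i(w)=\bigl(x_i^\top w-\E_{y\sim\truncatedNormal{x_i^\top w}{1}{c^{-1}(z_i)}}[y]\bigr)\,x_i\,,
\]
and one sample from the one-dimensional truncated Gaussian (\Cref{asmp:frictionSamplingOracle}) yields an unbiased estimate.

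Local growth follows as in \Cref{lem:convexPartitionLocalGrowth}. We have
\[
    \bar\negLL(w)-\bar\negLL(\wstar)=\frac1n\sum_i \kl{\coarseNormal{x_i^\top\wstar}{1}{\hyP}}{\coarseNormal{x_i^\top w}{1}{\hyP}}\,.
\]
Pinsker's inequality, \Cref{asmp:friction:informationPreservation}, and the one-dimensional Gaussian TV bound lower-bound each term by $\Omega(\alpha^2\min\{1,(x_i^\top(w-\wstar))^2\})$. Since $\norm{x_i}_2\le1$ and $w,\wstar\in K$, every $|x_i^\top(w-\wstar)|$ is bounded, so the $\min$ costs at most a factor depending on $CD\sqrt d$. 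To avoid that loss, we instead argue on the sublevel set, as in the paper's lemma: in the small-excess regime every term with non-negligible weight is in the quadratic regime. Averaging gives $\Omega(\alpha^2)\,(w-\wstar)^\top\bigl(\frac1n\sum_i x_ix_i^\top\bigr)(w-\wstar)\ge \Omega(\alpha^2 b^2)\norm{w-\wstar}_2^2$, which is $(\eta,\rho)$-local growth with $\eta\asymp \alpha b$ (in the rescaled units $b\to b/(D\sqrt d)$). The $b^4$ and $D^2d$ factors in the sample bound come from exactly this rescaling. I expect this step to be the main obstacle: an individual term may sit in the saturated (TV $\ge 1/600$) regime while the average excess is small. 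I would handle it by splitting indices into saturated and unsaturated sets. The saturated indices each contribute at least $\Omega(\alpha^2)$, so there are few of them when the excess is below $\rho$, and their loss is absorbed by $\norm{x_i}\le 1$ together with the eigenvalue bound.

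Second moment: $|x_i^\top w|\le CD\sqrt d$ after scaling, and $\norm{x_i}_2\le1$. The truncated mean can be far from $x_i^\top w$ when the interval is unbounded or far away. As in \Cref{sec:meanEstimation:generalPartitions}, we intersect each interval with $[-R,R]$ for $R=CD\sqrt d+O(\log(n/\delta))$, which is invisible to the $n$ samples with probability $0.99$. This gives $G^2=O(R^2)$. Plugging $G$, $\eta$, the target accuracy $\rho=\Theta(\eta^2\eps^2)$ and $\eps_0=GR$ into \Cref{thm:SGD-local-growth-convergence} gives $O\bigl(\frac{R^2}{\alpha^4b^4\eps^2}\log^3\bigr)$ oracle calls. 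The samples are fixed-design, so each step reuses a random index and we bound the empirical-versus-population gap by the same uniform-index unbiasedness. Undoing the scaling gives the stated $n$, and the failure probability is $0.99\cdot 0.99\ge 0.98$ after a union bound with the truncation event.
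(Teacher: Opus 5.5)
Your convexity argument, the local-growth argument (splitting indices into saturated and unsaturated ones), the second-moment bound and the truncation to $[-R,R]$ all match the paper. The gap is in your final optimization step. In the model of \Cref{def:friction}, each covariate $x_i$ comes with exactly one coarse observation $S_i=c^{-1}(z_i)$. However, \Cref{thm:SGD-local-growth-convergence} needs an oracle that, at the current iterate $w$ and conditionally on the past, is unbiased for $\nabla\overline{\negLL}(w)$. Your gradient built from a uniformly random index $i$ has this property only the first time $i$ is drawn. Once $S_i$ has been used, the iterate depends on $S_i$, so $\E[g\mid\text{past}]\neq\nabla\overline{\negLL}(w)$.

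This reuse is not rare. The number of oracle calls you derive, $\tilde O(G^2/(\eta^4\eps^2))$, is of the same order as $n$. Sampling $T$ indices with replacement from $[n]$ therefore produces on the order of $T^2/n$ repetitions, and making them negligible would need $n\gtrsim T^2$, i.e., an $\eps^{-4}$ dependence. What your procedure actually minimizes is the empirical objective $\widehat{\negLL}(w)=\frac1n\sum_i-\log\normalMass{x_i^\top w}{1}{S_i}$ for the realized data. ``Uniform-index unbiasedness'' only certifies that the oracle is unbiased for $\nabla\widehat{\negLL}$; it says nothing about how close approximate minimizers of $\widehat{\negLL}$ are to $\wstar$. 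Your local-growth lemma is for $\overline{\negLL}$. Transferring it to $\widehat{\negLL}$ would need a separate M-estimation argument, for instance concentration of $\nabla\widehat{\negLL}(\wstar)$ together with a uniform lower bound on the random Hessian $\frac1n\sum_i\bigl(1-\Var_{u\sim\truncatedNormal{x_i^\top w}{1}{S_i}}[u]\bigr)x_ix_i^\top$ over a neighborhood of $\wstar$. The proposal does not supply this.

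The paper avoids the issue by using each observation exactly once. It runs PSGD in a single pass over a uniformly random permutation $\sigma$ of $[n]$ (\Cref{alg:one-pass-SGD-local-growth}). Each step then uses a fresh $S_{\sigma(t)}$ and is conditionally unbiased for $\nabla f_{\sigma(t)}(x_{\sigma(t)}^\top w)$. It is still not unbiased for $\nabla\overline{\negLL}$, because indices are drawn without replacement. That bias is handled by the without-replacement guarantee \Cref{thm:friction:sgdWithoutReplacement}. It costs an extra $O(LC/\sqrt n)$ term and requires each $f_i$ to be $L$-Lipschitz with $L=O(R+C)$, not merely a second-moment bound. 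This bound is plugged into the restart scheme with shrinking radii $C_\ell$ and step sizes $\gamma_\ell$. Later stages depend on earlier ones through the shared permutation, so the paper analyzes stage by stage, conditioning away the blocks already consumed, which requires $n\gtrsim\tau^2T$. This is the source of the extra logarithm ($\log^4$ rather than your $\log^3$) in \Cref{thm:one-passs-SGD-local-growth-convergence}. Your optimization step must be replaced by such a one-pass analysis, or supplemented by an explicit empirical-process argument for $\widehat{\negLL}$.
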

We remark that \Cref{cor:friction:linearEstimation} recovers the required number of observations for ordinary least squares (OLS) without friction given a well-conditioned design matrix with smallest eigenvalue $\alpha^2b^2$.
Moreover,
note that when the covariates are sampled from some well-behaved distribution,
e.g., isotropic Gaussian,
we have $D, b = \tilde\Theta(1)$ with high probability,
which recovers the tight $O(\nicefrac{d}{\eps^2})$ sample complexity for Gaussian linear regression.

To simply our presentation,
we use the following simplifying assumption.
\begin{assumption}[1-Dimensional Sampling Oracle]\label{asmp:frictionSamplingOracle}
    There is an efficient sampling oracle that, 
    given an interval $[a, b]\sset \R$,
    and a parameter $\mu\in \R$ describing a Gaussian distribution,
    outputs an unbiased sample $y\sim \truncatedNormal{\mu}{1}{[a, b]}$.
\end{assumption}
We emphasize that \Cref{asmp:frictionSamplingOracle} can be approximately implemented in polynomial time using Markov chains
and defer the details to \Cref{apx:sampling-polytopes}.

\paragraph{Technical Overview}
Our main algorithm is a variant of gradient descent.
This requires a few ingredients, which we collect in the following subsections.
\begin{itemize}
    \item The iteration complexity of gradient descent depends on the second moment of our stochastic gradient oracle,
    which is difficult to analyze for general friction functions.
    In \Cref{sec:friction:projectionSetLocalPartition},
    we define a feasible projection set containing $\wstar$
    and consider the slightly restricted class of local friction functions.
    \item To ensure the iterates of gradient descent converge in Euclidean distance and not just in function value,
    we very that our likelihood function is convex and satisfies a local growth condition
    that is a strictly weaker form of strong convexity.
    \item \Cref{sec:friction:gradientSecondMoment} describes a stochastic gradient oracle
    and develops a bound on its second moment
    for local friction functions.
    \item \Cref{sec:friction:localEstimation} specifies the one-pass variant of gradient descent we employ for sums of functions
    and proves its guarantees.
    \item \Cref{sec:friction:together} combines the ingredients collected throughout this section to derive an algorith for local frictions functions.
    Finally,
    the case of general friction functions is reduced to the local case.
\end{itemize}

\subsection{Projection Set \& Local Partitions}\label{sec:friction:projectionSetLocalPartition}
We take the projection set to be
\[
    K \coloneqq \inbrace{w\in \R^d: \norm{w}_2 \leq C}
    \yesnum\label{eq:friction:projectionSet}
\]
where $C\geq 1$ is the known constant in \Cref{asmp:friction:boundedParameter}.
We can certainly efficiently project onto $K$
and this set contains $\wstar$ by assumption.

As in the previous sections, 
we analyze our algorithm under the special case of low-probability sets being singletons.
Then we reduce the general case to this special case.
We say that a friction function $c$ is \emph{$R$-local} for some $R>0$
if $\card{c^{-1}(z)} > 1$ implies that $c^{-1}(z)\sset [-R, R]$.
That is,
$c$ induces a partition of $\R$ that is $R$-local in the sense of \Cref{def:local partition}.

\subsection{Convexity \& Local Growth}\label{sec:friction:convexityLocalGrowth}
Given $n$ samples $(x_1,z_1), (x_2,z_2), \dots, (x_n,z_n)$, for each $1\leq i\leq n$, define 
\[
        S_i = S(z_i) = c^{-1}(z_i)\,.
\]
Recall that the negative log-likelihood of $w$ given an observation $(x,z)$ is
\begin{align*}
    \negLL(w; x, z)
    = -\log{
        \normalMass{S(z)}{x^\top w}{1}
    }\,.
\end{align*}
Consequently, the log-likelihood of $w$ over all samples is 
\[
    \negLL_\hyP(w)
    =
    \frac{1}{n}     
        \sum_i 
        \Ex_{S_i\sim \coarseNormal{x_i^\top \wstar}{1}{\hyP}}\insquare{
            -\log{
            \normalMass{S_i}{\inangle{x_i, w}}{1}
        }
    }
    =
    \frac{1}{n}     
        \sum_i 
        \ell_i(x_i^\top w)\,.
\]
Write $Q(i) \coloneqq \coarseNormal{x_i^\top \wstar}{1}{\hyP}$.
One can verify that its gradient and hessian are as follows.
\begin{align*}
    \grad~\negLL_\hyP(w)
    &= \frac1n \sum_i \grad~\ell_i(x_i^\top w)\cdot x_i
    = \frac{1}{n}\sum_i 
        \Ex_{S_i\sim \cQ(i)}
        \insquare{
            \inangle{x_i, w}
            -
            \Ex_{u\sim 
            \truncatedNormal{x_i^\top w}{1}{S_i}
            }[
                u
            ]
        } \cdot x_i 
    \,,\\ 
    \grad^2~\negLL_\hyP(w) 
    &= \frac1n \sum_i \grad^2~\ell_i(x_i^\top w)\cdot x_ix_i^\top
    =
        \frac{1}{n}\sum_i 
        \Ex_{S_i\sim \cQ(i)}
        \insquare{
            1
            -
            \var_{u\sim 
            \truncatedNormal{x_i^\top w}{1}{S_i}
            }[
                u
            ]
        } \cdot x_ix_i^\top \,.
\end{align*}
These are sufficient to show that each $\ell_i$ and thus $\negLL(\cdot)$ is convex.
Indeed,
conditioning on a convex set $S_i\in \hyP(R)$ reduces the variance of the Gaussian distribution (\Cref{prop:convexPartitionVarianceReduction}).
It thus follows that 
$
    \var_{u\sim \truncatedNormal{x_i^\top w}{1}{S}}[u]
    \leq 1
$  
and hence, 
$\grad^2\negLL(\cdot)\succeq 0$.
\begin{lemma}
    Suppose \Cref{asmp:friction:convex level sets} holds.
    Then each $\ell_i$ and thus $\negLL_{\hyP}(\cdot)$ is convex.
\end{lemma}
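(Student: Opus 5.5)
The plan is to reduce convexity of $\negLL_\hyP$ to convexity of each one-dimensional function $\ell_i\colon \R\to\R$, $\ell_i(t)\coloneqq \Ex_{S\sim \cQ(i)}\insquare{-\log \normalMass{S}{t}{1}}$. Composition with the linear map $w\mapsto x_i^\top w$ preserves convexity, and a nonnegative average of convex functions is convex. Hence it suffices to show $\ell_i''(t)\ge 0$ for every $t\in\R$. Equivalently, the Hessian formula displayed above gives $\grad^2\negLL_\hyP(w)=\frac1n\sum_i \ell_i''(x_i^\top w)\,x_ix_i^\top$, and each $x_ix_i^\top\succeq 0$.

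For each interval $S=c^{-1}(z)$ (convex by \Cref{asmp:friction:convex level sets}), I would apply the one-dimensional case of the gradient and Hessian computation in \Cref{fact:gradientsNLL}, with $d=1$ and $P=S$. This gives
\[
\frac{\d^2}{\d t^2}\inparen{-\log\normalMass{S}{t}{1}} \;=\; 1-\var_{u\sim\truncatedNormal{t}{1}{S}}[u]\,.
\]
Next I would invoke \Cref{prop:convexPartitionVarianceReduction} with $d=1$: for a convex $S$, $\var_{u\sim\truncatedNormal{t}{1}{S}}[u]\le 1$. The same conclusion also follows from \Cref{lem:1d-vr} with $W=\mathds{1}_S$, which is log-concave because $S$ is an interval. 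So each summand $t\mapsto -\log\normalMass{S}{t}{1}$ is convex, and so is its $\cQ(i)$-average $\ell_i$.

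For singleton or measure-zero preimages, the term $-\log\normalMass{S}{t}{1}$ is not finite. These sets carry zero mass under $\cQ(i)$, however, so they can be dropped from the expectation without changing $\ell_i$.

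The only delicate point is justifying the exchange of differentiation with the expectation over $S$. When the partition of $\R$ is countable (as it is for any partition of $\R$ into intervals with positive length), $\ell_i$ is a countable sum $\sum_S \normalMass{S}{x_i^\top\wstar}{1}\,(-\log \normalMass{S}{t}{1})$. To avoid the exchange, I would argue convexity directly: a pointwise limit of convex partial sums is convex, and each partial sum is convex by the previous step. Local finiteness of $\ell_i$ follows from Gaussian tail bounds. This bypasses any dominated-convergence argument for second derivatives.
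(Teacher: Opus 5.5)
Your core argument is the paper's own proof. You write $\grad^2\negLL_\hyP(w)=\frac1n\sum_i \ell_i''(x_i^\top w)\,x_ix_i^\top$ with $\ell_i''(t)=\Ex_{S\sim\cQ(i)}\bigl[1-\var_{u\sim\truncatedNormal{t}{1}{S}}[u]\bigr]$, and then use one-dimensional variance reduction for intervals to get each bracket nonnegative. Your partial-sum argument is a sensible way to avoid the differentiation-under-the-expectation step, which the paper simply asserts.

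The step that is wrong as written is the dismissal of degenerate cells. Each singleton has zero mass, but their union need not. \Cref{asmp:friction:convex level sets} allows $c$ to be injective on a whole interval. Moreover, in the partition $\hyP(R)$ to which the paper applies this lemma, every point outside $[-R,R]$ is its own cell, so singleton cells carry total mass at least $\normalMass{x_i^\top\wstar}{1}{\R\setminus[-R,R]}>0$. Those cells cannot be dropped without changing $\ell_i$. With the literal formula they force $\ell_i\equiv+\infty$, and the countable-sum representation no longer describes $\ell_i$. The repair is the convention the paper uses implicitly in \Cref{lem:friction:gradientOracle}: truncation to $\{s\}$ is the point mass at $s$. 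A singleton observation then contributes the density term $\tfrac12(s-t)^2+\tfrac12\log(2\pi)$, whose second derivative is $1-0=1\ge 0$. With this convention, $\ell_i$ is a countable sum over the positive-length cells plus an integral of nonnegative convex quadratics over the singleton region, and your argument goes through. Separately, finiteness of $\ell_i$ does not follow from Gaussian tail bounds alone, since the entropy of $\cQ(i)$ can be infinite for countable interval partitions. You do not need finiteness, though: the summands are nonnegative, so the partial sums increase and the convexity inequality passes to the pointwise limit in $[0,\infty]$.
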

Since $\negLL{}(\cdot)$ is convex, we can hope to use SGD to find a parameter $w$ that minimizes $\negLL(\cdot)$ in function value.
Our next result shows that any $w$ maximizing $\negLL{}(\cdot)$ is also close to $\wstar$.
\begin{lemma}[Quadratic Local Growth]\label{lem:frictionLocalGrowth}
    Suppose \Cref{asmp:friction:convex level sets,asmp:friction:boundedParameter} hold with constants $b, C > 0, D = \frac1{\sqrt{d}}$  
    and that $c(\cdot)$ is $\alpha$-information-preserving 
    with respect to $w^\star$ and covariates $x_1,x_2,\dots,x_n$ (\Cref{asmp:friction:informationPreservation}).
    Then $\negLL$ satisfies an $(\nfrac{\alpha b}{200}, \nfrac{\alpha^2 b^2}{600^2})$-local growth condition (\Cref{def:local-growth}).
\end{lemma}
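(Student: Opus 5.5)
We mirror the proof of \Cref{lem:convexPartitionLocalGrowth}, applied sample by sample and then averaged over $i\in[n]$. For each $i$, write $\ell_i(x_i^\top w)-\ell_i(x_i^\top \wstar)$. By the standard likelihood identity this equals
\[
\kl{\coarseNormal{x_i^\top \wstar}{1}{\hyP}}{\coarseNormal{x_i^\top w}{1}{\hyP}}\ge 0 .
\]
Hence
\[
\negLL_\hyP(w)-\negLL_\hyP(\wstar)=\frac1n\sum_i \mathrm{KL}_i ,
\]
where every term is nonnegative.

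\textbf{Step 1: small total gap forces each one-dimensional TV distance to be small.} Suppose $w\in K$ satisfies
\[
\negLL_\hyP(w)-\negLL_\hyP(\wstar)<\frac{\alpha^2b^2}{600^2}.
\]
Combining Pinsker's inequality with \Cref{asmp:friction:informationPreservation} gives
\[
\tv{\normal{x_i^\top w}{1}}{\normal{x_i^\top\wstar}{1}}\le \frac1\alpha\sqrt{\tfrac12\,\mathrm{KL}_i}.
\]
To apply the \citet{arbas2023polynomial} lower bound in the regime $\mathrm{TV}<1/600$, we need each individual $\mathrm{KL}_i$ to be small. The average alone only gives $\mathrm{KL}_i\le n\cdot(\text{gap})$, which is too weak.

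Our first attempt avoids this regime condition entirely with an elementary one-dimensional bound. Since $\norm{x_i}_2\le 1$ and $w,\wstar\in K$, we have
\[
|x_i^\top(w-\wstar)|\le 2C.
\]
For $|\Delta|\le 2C$, the one-dimensional TV distance $2\Phi(|\Delta|/2)-1$ is concave in $|\Delta|$, so it is at least $c_C|\Delta|$. Then, for every $i$,
\[
\mathrm{KL}_i\ge 2\alpha^2\,\mathrm{TV}_i^2\ge 2\alpha^2 c_C^2\,(x_i^\top(w-\wstar))^2 .
\]

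\textbf{Step 2: average and use the design assumption.} Averaging over $i$ and using $\frac1n\sum_i x_ix_i^\top\succeq b^2 I$ (after rescaling) yields
\[
\negLL_\hyP(w)-\negLL_\hyP(\wstar)\ge 2\alpha^2c_C^2\, b^2\norm{w-\wstar}_2^2 .
\]
This is the local growth condition of \Cref{def:local-growth}.

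\textbf{Main obstacle: matching the stated constant.} The statement claims $\eta=\alpha b/200$ independently of $C$, so the bound from Steps 1--2 must be reconciled with it. The first thing I would try is to exploit the sublevel restriction, mirroring the structure of \Cref{lem:convexPartitionLocalGrowth}. The function $\mathrm{TV}_i$ is increasing in $|\Delta_i|$ with slope at least $1/200$ only while $\mathrm{TV}_i<1/600$; beyond that it saturates.

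One option is to split the indices into those with $\mathrm{TV}_i<1/600$ and those without. The large-$\mathrm{TV}$ indices each contribute at least $2\alpha^2/600^2$ to the sum. The catch is that a single such index contributes only $1/n$ of that to the average, so it does not by itself contradict the sublevel bound.

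The cleaner alternative is to invoke the $\mathrm{TV}$ lower bound only at the minimal-$\mathrm{TV}$ regime via concavity. We have $\mathrm{TV}(\Delta)\ge \min\{|\Delta|/200,\,1/600\}$, hence
\[
\mathrm{TV}(\Delta)^2\ge \min\{\Delta^2/200^2,\;1/600^2\}.
\]
Then, on the sublevel set, either the quadratic terms dominate and we get $\frac{2\alpha^2}{200^2}\cdot\frac1n\sum_i\Delta_i^2\ge \frac{2\alpha^2b^2}{200^2}\norm{w-\wstar}^2$, or the saturated terms are present. The saturated terms are the part I expect to need the most care.

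If this case cannot be closed with $C$-free constants, I would fall back on the Step 1 constant $c_C$, which still gives local growth with $\eta=\Omega(\alpha b\, c_C)$.
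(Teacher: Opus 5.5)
Your Steps 1--2 are correct, but they prove a weaker lemma than the one stated. The chord bound for the concave map $\Delta\mapsto 2\Phi(|\Delta|/2)-1$ on $[0,2C]$ gives quadratic growth on all of $K$ with $\eta=\sqrt{2}\,\alpha b\,c_C$, where $c_C=(2\Phi(C)-1)/(2C)\approx 1/(2C)$ for large $C$. This reaches the claimed $\eta=\alpha b/200$ only when $C$ is below an absolute constant (roughly $100\sqrt{2}$). The statement asserts a $C$-free rate, and your proposal stops exactly at the step that would deliver it. Write $\mathrm{TV}_i=\tv{\normal{x_i^\top w}{1}}{\normal{x_i^\top \wstar}{1}}$ and call index $i$ saturated if $\mathrm{TV}_i\ge 1/600$. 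You correctly note that each saturated index contributes at least $2\alpha^2/(600^2 n)$ to the gap, and that a single one gives no contradiction. But you then leave the saturated case open.

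The missing idea is to \emph{count} the saturated indices and \emph{discard} them from the design matrix, rather than trying to rule them out.
\begin{itemize}
\item Let $T'$ be the set of saturated indices. On the sublevel set, $\frac{\alpha^2b^2}{600^2}\ge \negLL(w)-\negLL(\wstar)\ge \frac{2\alpha^2}{n}\cdot\frac{|T'|}{600^2}$, so $|T'|\le b^2n/2$. This is why $\rho$ carries the factor $b^2$.
\item Since $\norm{x_i}_2\le 1$ gives $x_ix_i^\top\preceq I$, dropping these indices lowers the minimum eigenvalue by at most $|T'|/n\le b^2/2$. Hence $\frac1n\sum_{i\notin T'}x_ix_i^\top\succeq \frac{b^2}{2}I$.
\item Every remaining index has $\mathrm{TV}_i<1/600$, so the lower bound of \citet{arbas2023polynomial} applies to it, giving
\[
\negLL(w)-\negLL(\wstar)\;\ge\;\frac{2\alpha^2}{200^2}\cdot\frac1n\sum_{i\notin T'}\bigl(x_i^\top(w-\wstar)\bigr)^2\;\ge\;\frac{\alpha^2b^2}{200^2}\,\norm{w-\wstar}_2^2 .
\]
\end{itemize}
This is exactly $\eta=\alpha b/200$. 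It is the paper's argument, namely its claim about the index set $T$. Your $\min\{\Delta^2/200^2,\,1/600^2\}$ split is one step away from it once $|T'|$ is bounded this way.
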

\begin{proof}
    From the definition of the log-likelihood, it can be verified that 
    \begin{align*}
        \negLL{}(w) - \negLL{}(\wstar)
        &= \frac{1}{n}\sum_i \kl{
            \coarseNormal{\inangle{x_i, \wstar}}{1}{\hyP}
        }{
            \coarseNormal{\inangle{x_i, w}}{1}{\hyP}
        } \\
        &\geq 
        \frac{2}{n}\sum_i \tv{
            \coarseNormal{\inangle{x_i, \wstar}}{1}{\hyP}
        }{
            \coarseNormal{\inangle{x_i, w}}{1}{\hyP}
        }^2 \tag{Pinsker's inequality} \\
        &\geq
        \frac{2\alpha^2}{n}\sum_i \tv{
            \coarseNormal{\inangle{x_i, \wstar}}{1}
        }{
            \normal{\inangle{x_i, w}}{1}
        }^2\,. \tag{$\alpha$-information preserving}
    \end{align*}
    If $\tv{
            \normal{\inangle{x_i, \wstar}}{1}
        }{
            \normal{\inangle{x_i, w}}{1}
        }$ is close to 0 for each $i$, then we can lower bound it in terms of the distance between the means $\abs{\inangle{x_i, \wstar} - \inangle{x_i, w}}$ which, in turn, will enable us to get the required lower bound with respect to $\norm{w-\wstar}_2$ using \Cref{asmp:friction:boundedParameter}.
    However, we cannot guarantee that $\tv{
            \normal{\inangle{x_i, \wstar}}{1}
        }{
            \normal{\inangle{x_i, w}}{1}
        }$ is close to 0 for all $i$.
    Instead, we show the following claim.
    \begin{adjustwidth}{1em}{1em}
    \begin{claim}\label{lem:friction:most_indices_are_nice}
        Suppose \Cref{asmp:friction:boundedParameter} holds with constant $b>0$. 
        Fix any $w$ satisfying $\negLL(w) - \negLL(\wstar) \leq \frac{\alpha^2 b^2}{600^2}$.
        There is a set $T\subseteq [n]$ of size at least $\inparen{1 - \nfrac{b^2}{2}}\cdot n$ such that for each $i\in T$, 
        \[
            \tv{
                \normal{\inangle{x_i, \wstar}}{1}
            }{
                \normal{\inangle{x_i, w}}{1}
            } 
            \leq \frac{1}{600}\,.
        \]
    \end{claim}
    \begin{proof}
        Let $T'\subseteq [n]$ be the set of all indices $i\in [n]$ such that 
        \[
            \tv{
                \normal{\inangle{x_i, \wstar}}{1}
            }{
                \normal{\inangle{x_i, w}}{1}
            } 
            \geq \frac{1}{600}\,.
        \]
        Toward a contradiction suppose that $\abs{T'}\geq \sfrac{nb^2}{2}$.
        Our calculations above imply that 
        \[
            \negLL(w) - \negLL(\wstar)
            \geq 
            \frac{2\alpha^2}{n}
            \cdot \abs{T'}
            \cdot \frac{1}{600^2}
            ~~~\quad\Stackrel{(\abs{T'}\geq \sfrac{nb^2}{2})}{\geq}\quad~~~ 
            \frac{\alpha^2 b^2}{600^2} \,.
        \]
        This is a contradiction and, hence, it must be the case that $\abs{T'}\leq \sfrac{n}{(2C)}$.
    \end{proof}
    \end{adjustwidth}
    Since $w$ satisfies that $\negLL(w) - \negLL(\wstar) < \frac{\alpha^2}{600 C}$, \Cref{lem:friction:most_indices_are_nice} is applicable.
    Fix the set $T$ in \Cref{lem:friction:most_indices_are_nice}.
    We have
    \[
        \negLL{}(w) - \negLL{}(\wstar)
        \geq 
        \frac{2\alpha^2}{n}\sum_{i\in T} \tv{
            \normal{\inangle{x_i, \wstar}}{1}
        }{
            \normal{\inangle{x_i, w}}{1}
        }^2\,.
    \]
    Since for each $i\in T$, $\tv{
        \normal{\inangle{x_i, \wstar}}{1}
    }{
        \normal{\inangle{x_i, w}}{1}
    } 
    \leq \frac{1}{600}$, Theorem 1.8 of \cite{arbas2023polynomial} guarantees that every $i\in [T]$ satisfies
    \[
        \tv{
            \normal{\inangle{x_i, \wstar}}{1}
        }{
            \normal{\inangle{x_i, w}}{1}
        }^2 
        \geq \frac{\inparen{x_i^\top \inparen{w-\wstar}}^2}{200^2}\,.
    \]
    Hence,
    whenever $\negLL(w) - \negLL(\wstar)\leq \nfrac{\alpha^2}{600 C}$,
    we have
    \[
        \negLL(w) - \negLL(\wstar)
        \geq 
        \frac{2\alpha^2}{200^2n}
        \sum_{i\in T} \inparen{x_i^\top \inparen{w-\wstar}}^2
        = 
        \inparen{w-\wstar}^\top
        \inparen{
            \frac{2\alpha^2}{200^2n}
            \sum_{i\in T} x_ix_i^\top
        }
        \inparen{w-\wstar}
        \,.
    \]
    Finally, the desired result follows because
    \begin{align*}
        \frac{1}{n}\sum_{i\in T} x_ix_i^\top 
        &= 
        \frac{1}{n}\sum_{i\in [n]} x_ix_i^\top 
        - \frac{1}{n}\sum_{i\notin T}x_ix_i^\top \\
        &\succeq 
        \inparen{
           b^2
            - \frac{n-\abs{T}}{n}
        }\cdot I 
            \tag{as $\max_{i}\norm{x_i}_2\leq 1$, $x_ix_i^\top \preceq \norm{x_i}_2^2 \cdot I$, and $\sum_{i\in [n]} x_ix_i^\top \succeq nb^2 \cdot I$}\\
        &=\frac{b^2}{2}I  \tag{as $\abs{T}=\inparen{1-\frac{b^2}{2}}\cdot n$}
        \,.
    \end{align*}
\end{proof}

\subsection{Bounding the Second Moment of Stochastic Gradients}\label{sec:friction:gradientSecondMoment}
The objective function $\negLL$ is of the form
\[
    \frac1n \sum_{i\in [n]} \ell_i(x_i^\top w)
    = \frac1n \sum_{i\in [n]} \E_{S_i}\insquare{\negLL(x_i^\top w; S_i)},
\]
where the gradient of $\ell_i(x_i^\top w) = \E_{S_i}\insquare{\negLL(x_i^\top w; S_i)}$ is of the form
\begin{align*}
    \grad_w \ell_i(x_i^\top w)\cdot x_i
    = \inparen{
        \Ex_{S_i\sim \cQ(i)}
        \insquare{
            \inangle{x_i, w}
            -
            \Ex_{u\sim 
            \truncatedNormal{x_i^\top w}{1}{S_i}
            }[
                u
            ]
        } 
    }\cdot x_i 
    \,.
\end{align*}
Here $\cQ(i) = \coarseNormal{x_i^\top \wstar}{1}{\hyP}$ where $\hyP$ is the partition of $\R$ induced by the friction function $c$.

\begin{lemma}\label{lem:friction:gradientOracle}
    Suppose \Cref{asmp:friction:convex level sets,asmp:friction:boundedParameter} hold with constants $b, C> 0, D=\frac1{\sqrt{d}}$
    and that $c$ is an $R$-local friction function.
    Given $w\in K$ in the projection set (\Cref{eq:friction:projectionSet}) and a fixed covariate $x_i$,
    there is an algorithm that consumes a sample $S_i\sim \coarseNormal{x_i^\top \wstar}{1}{\hyP}$
    and makes a single call to the sampling oracle (\Cref{asmp:frictionSamplingOracle}) 
    to output a random variable $\tilde g_i$ such that
    \begin{enumerate}[(i)]
        \item $\tilde g_i$ is an unbiased estimate of $\ell_i'(\cdot)$ with $\E[\tilde g_i^2] = O(R^2 + C^2)$.
        \item $g_i\coloneqq \tilde g_i\cdot x_i$ is an unbiased estimate of $\grad_w \E_{S_i\sim \cQ(i)}\insquare{\negLL(x_i^\top w; S_i)}$
        with $\E\insquare{\norm{g_i}_2^2} = O(R^2 + C^2)$.
    \end{enumerate}
\end{lemma}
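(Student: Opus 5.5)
The plan mirrors the proof of \Cref{lem:convexPartitionGradientOracle}, now in one dimension. Given $w\in K$ and $x_i$, draw $S_i\sim\cQ(i)$ and call the oracle of \Cref{asmp:frictionSamplingOracle} once to get $u\sim\truncatedNormal{x_i^\top w}{1}{S_i}$. If $S_i$ is a singleton $\{s\}$, set $u=s$ deterministically with no oracle call; otherwise $S_i\sset[-R,R]$ is a bounded interval, so the oracle applies. Output
\[
\tilde g_i\coloneqq \inangle{x_i,w}-u \qquadand g_i\coloneqq \tilde g_i\cdot x_i\,.
\]
Unbiasedness follows from the tower rule: averaging over $u$ given $S_i$, and then over $S_i\sim\cQ(i)$, gives exactly the expression for $\ell_i'$ displayed before the lemma. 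Multiplying by the fixed vector $x_i$ then gives the formula for $\grad_w\ell_i(x_i^\top w)$, so (ii)'s unbiasedness follows from (i).

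For the second moment of $\tilde g_i$, use $(a-b)^2\le 2a^2+2b^2$:
\[
\E[\tilde g_i^2]\le 2\inangle{x_i,w}^2+2\E[u^2]\,.
\]
The first term is at most $2\norm{x_i}_2^2\norm{w}_2^2\le 2C^2$, using the normalization $\norm{x_i}_2\le 1$ and $w\in K$.

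For $\E[u^2]$, split according to whether $S_i$ is a singleton, exactly as in the local-partition argument.
\begin{enumerate}
\item If $S_i$ is not a singleton, $R$-locality gives $S_i\sset[-R,R]$, so $u^2\le R^2$ deterministically.
\item If $S_i=\{s\}$ is a singleton, then $u=s=y_i$, the latent frictionless output. The distribution of $u$ on these cells therefore coincides with that of $y_i\sim\normal{x_i^\top\wstar}{1}$ restricted to that event.
\end{enumerate}
Hence
\[
\E[u^2]\le R^2+\E_{y\sim\normal{x_i^\top\wstar}{1}}[y^2]=R^2+1+(x_i^\top\wstar)^2\le R^2+1+C^2\,.
\]
This gives $\E[\tilde g_i^2]=O(R^2+C^2)$, using $C\ge1$ to absorb the constant.

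Finally, $\norm{g_i}_2^2=\tilde g_i^2\norm{x_i}_2^2\le\tilde g_i^2$, which gives (ii).

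The main obstacle is the singleton case. There the inner and outer Gaussians have different means, but the truncated law is degenerate, so the inner mean is irrelevant. Two measurability points need to be checked. First, the singleton cells may carry positive total mass, since they are uncountably many points outside $[-R,R]$. Second, the expectation over them must be rewritten as an integral of $y^2$ against $\normal{x_i^\top\wstar}{1}$ on their union. Both are handled as in \Cref{lem:convexPartitionGradientOracle}, by writing the expectation as an integral over the latent $y$.
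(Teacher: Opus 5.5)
Your proposal is correct and follows essentially the same route as the paper: sample $u$ once from the truncated normal and output $\inangle{x_i,w}-u$. You then bound its square by $2\inangle{x_i,w}^2+2u^2$, and use $R$-locality so that non-singleton cells contribute at most $R^2$ while singleton cells contribute at most $\E_{y\sim\normal{x_i^\top\wstar}{1}}[y^2]\le 1+C^2$. Your version is marginally cleaner, since you bound $\E[u^2]$ directly (the paper inserts an unnecessary Jensen step) and split on singleton versus non-singleton cells, but the argument is the same.
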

Before stating the proof,
we note an application of Jensen's inequality yields as a corollary that
$\abs{\ell_i'}^2 = \abs{\E\insquare{\tilde g_i^2}}^2 = O(R^2 + C^2)$.
In other words, 
$\ell_i$ is $L$-Lipschitz for $L=O(R+C)$.

\begin{proof}
    Given $S_i\sim \cQ(i)$,
    we can obtain a stochastic gradient for $\ell_i'$
    by sampling $u\sim \truncatedNormal{x_i^\top w}{1}{S_i}$,
    and outputting $\tilde g_i = \inparen{x_i^\top w - u}$.
    Moreover, $g_i\coloneqq \tilde g_i\cdot x_i$ is a stochastic gradient for
    $\E_{S_i\sim \cQ(i)}\insquare{\negLL(x_i^\top w; S_i)}$.

    To bound the second moment of $\tilde g$,
    we first note that
    \[
        \tilde g^2
        \leq 2\abs{x_i^\top w}^2 + 2\abs{u}^2.
    \]
    The first term can be deterministically bounded.
    \begin{align*}
        \abs{x_i^\top w}^2
        \leq \norm{w}_2^2\cdot \norm{x_i}_2^2
        \leq C^2\,. \tag{$\norm{x_i}_2\leq 1$}
    \end{align*}
    The second term can be bounded using $R$-locality as follows.
    \begin{align*}
        &\norm{\Ex_{S_i\sim \cQ(i)}
            \Ex_{u\sim 
            \truncatedNormal{x_i^\top w}{1}{S_i}
            }[
                u
            ]
        }^2 \\
        &\leq 
        \Ex_{S_i\sim \cQ(i)}
            \Ex_{u\sim 
            \truncatedNormal{x_i^\top w}{1}{S_i}
            }\insquare{
                \abs{u}^2
        }
        &\tag{Jensen's inquality} \\
        &= 
        \Ex_{S_i\sim \cQ(i)}
            \Ex_{u\sim 
            \truncatedNormal{x_i^\top w}{1}{S_i}
            }\insquare{
                \abs{u}^2\cdot \ones_{S_i\sset [-R, R]}
        }
        + \Ex_{S_i\sim \cQ(i)}
            \Ex_{u\sim 
            \truncatedNormal{x_i^\top w}{1}{S_i}
            }\insquare{
                \abs{u}^2\cdot \ones_{S_i\cap [-R, R] = \varnothing}
        } \\
        &= 
        \Ex_{S_i\sim \cQ(i)}
            \Ex_{u\sim 
            \truncatedNormal{x_i^\top w}{1}{S_i}
            }\insquare{
                \abs{u}^2\cdot \ones_{S_i\sset [-R, R]}
        }
        + \Ex_{S_i\sim \cQ(i)}
            \Ex_{u\sim 
            \truncatedNormal{x_i^\top \wstar}{1}{S_i}
            }\insquare{
                \abs{u}^2\cdot \ones_{S_i\cap [-R, R] = \varnothing}
        }
        \tag{$R$-locality} \\
        &\leq R^2 
        + \Ex_{u\sim \normal{x_i^\top \wstar}{1}
            }\insquare{
                \abs{u}^2
        } \\
        &= R^2 + 1 + \abs{x_i^\top \wstar}^2 &\tag{second moment of Gaussian} \\
        &\leq R^2 + 1 + C^2.
    \end{align*}
    Hence, the second term can be bounded by
    $
        O(R^2 + C^2).
    $
    Combining the two bounds yields a bound on $\E[\tilde g^2]$.

    The bound on the second moment of $g_i\coloneqq \tilde g_i\cdot x_i$ follows since $\norm{x_i}_2\leq 1$.
\end{proof}

\subsection{One-Pass Projected Stochastic Gradient Descent for Functions satisfying Quadratic Growth}\label{sec:friction:localEstimation}
One subtlety is that we are attempting to minimize an average of functions
but we can only use each sample once to compute gradients.
{This is because we only have access to a single coarse observation $S_i$ for each covariate $x_i$.}
Thus, we need a convergence analysis of PSGD under sampling without replacement, or, in other words,
a one-pass algorithm.

\begin{theorem}[Corollary 1 in \cite{shamir2016without} combined with Theorem 3.1 in \cite{hazan2016online}]\label{thm:friction:sgdWithoutReplacement}
    Let $F(w) = \frac1n\sum_{i\in [n]} F_i(w)$ be an average of convex functions $F_i: K\to \R$
    over a closed convex domain $K\sset \R^d$.
    Suppose further that $F_i = f_i(x_i^\top w)$ and
    \begin{enumerate}[(i)]
        \item $K\sset B(0, C)$
        \item $f_i$ is $L$-Lipschitz
        \item $\norm{x_i}_2\leq 1$ for each $i\in [n]$
        \item we have access to unbiased gradient oracles $g_i$ for $\ell_i$ such that $\E\insquare{\norm{g_i(w)}_2^2}\leq G^2$
    \end{enumerate}
    Then after $1\leq T\leq n$ iterations,
    projected stochastic gradient descent without replacement
    with constant step size $\gamma$
    over a random permutation of $F_i$'s outputs an average iterate $\bar w$ satisfying
    \[
        \E\insquare{F(\bar w) - F(\wstar)}
        \leq \frac{C^2}{\gamma T} + G^2 \gamma + \frac{2(12 + \sqrt2)LC}{\sqrt{n}}\,.
    \]
    Here, the randomness is taken over the random permutation as well as the stochasticity of gradient oracles.
\end{theorem}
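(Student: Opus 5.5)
The plan is to combine the two cited results in the standard way: first reduce sampling without replacement to an online convex optimization regret bound, then bound that regret for projected online gradient descent.

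\emph{Step 1: regret bound for PGD.} Fix a uniformly random permutation $\sigma$ of $[n]$. At step $t\le T$ we query the stochastic gradient $g_{\sigma(t)}(w_t)$ and set $w_{t+1}=\Pi_K(w_t-\gamma g_{\sigma(t)}(w_t))$. The output is $\bar w=\frac1T\sum_t w_t$. By Theorem 3.1 of \cite{hazan2016online}, projected online gradient descent with constant step satisfies the following bound against any comparator $w\in K$:
\[
\sum_{t\le T}\inangle{g_t,w_t-w}\le \tfrac{\norm{w_1-w}_2^2}{2\gamma}+\tfrac{\gamma}{2}\sum_t\norm{g_t}_2^2 .
\]
The argument is the usual one: expand $\norm{w_{t+1}-w}_2^2$ and use nonexpansiveness of $\Pi_K$. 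Since $K\sset B(0,C)$, the first term is at most $2C^2/\gamma$. The unbiasedness assumption (iv) gives $\E[\inangle{g_t,w_t-w}\mid \mathcal F_{t-1},\sigma]=\inangle{\grad F_{\sigma(t)}(w_t),w_t-w}$, and convexity gives $\inangle{\grad F_{\sigma(t)}(w_t),w_t-w}\ge F_{\sigma(t)}(w_t)-F_{\sigma(t)}(w)$. Hence, with $w=\wstar$ fixed in advance,
\[
\E\Big[\tfrac1T\sum_t \big(F_{\sigma(t)}(w_t)-F_{\sigma(t)}(\wstar)\big)\Big]\le \tfrac{C^2}{\gamma T}\cdot O(1)+G^2\gamma .
\]
The constant in front of $C^2/(\gamma T)$ can be absorbed by rescaling.

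\emph{Step 2: without-replacement correction.} Jensen's inequality gives $F(\bar w)\le \frac1T\sum_t F(w_t)$. It remains to compare $\E[F(w_t)]$ with $\E[F_{\sigma(t)}(w_t)]$. This is the main obstacle: $w_t$ depends on $\sigma(1),\dots,\sigma(t-1)$, so $F_{\sigma(t)}$ is not an unbiased sample of $F$ at $w_t$. Corollary 1 of \cite{shamir2016without} controls exactly this gap. It bounds $\frac1T\sum_t\E[F(w_t)-F_{\sigma(t)}(w_t)]$ by a transductive Rademacher complexity of the class $\{i\mapsto f_i(x_i^\top w):w\in K\}$. For generalized linear losses with $L$-Lipschitz $f_i$, $\norm{x_i}_2\le1$, and $K\sset B(0,C)$, the contraction inequality bounds this complexity by $O(LC/\sqrt n)$. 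Shamir's explicit constant gives $\frac{2(12+\sqrt2)LC}{\sqrt n}$; I would import that constant directly rather than rederive it. The comparator term needs no correction, because $\wstar$ is fixed and $\E_\sigma[F_{\sigma(t)}(\wstar)]=F(\wstar)$ exactly.

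\emph{Step 3: combine.} Adding the bounds from Steps 1 and 2 and taking expectation over both $\sigma$ and the oracle noise yields the stated inequality. The one point to check is that Shamir's analysis tolerates stochastic rather than exact gradients. I expect it does, since his reduction only uses the regret sequence $\inangle{g_t,w_t-w}$ together with measurability of $w_t$ with respect to the past, and the oracle noise is conditionally mean zero given $(\sigma,w_t)$.
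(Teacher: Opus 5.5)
Your proposal is correct and follows the same route as the paper. The paper gives no proof beyond citing Hazan's regret bound for projected online gradient descent together with Shamir's transductive-Rademacher correction for without-replacement sampling, and it likewise just asserts that stochastic gradients are harmless. On that last point, the property you actually need is that the oracle randomness through step $t-1$ carries no information about the unrevealed part of $\sigma$, so that $\sigma(t)$ stays uniform on the unseen indices given the history.

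The only slip is the constant. ``Absorbing by rescaling'' would change the step size, so instead start at $w_1=\Pi_K(0)$. Nonexpansiveness of $\Pi_K$ then gives $\norm{w_1-\wstar}_2\le\norm{\wstar}_2\le C$, and hence directly $\frac{C^2}{2\gamma T}+\frac{G^2\gamma}{2}$, which lies within the stated bound.
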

We take the functions $f_i: [-C, C]\to \R$ from \Cref{thm:friction:sgdWithoutReplacement} to be
\[
    f_i(u)
    =
    \Ex_{S_i\sim \coarseNormal{x_i^\top \wstar}{1}{\hyP}}\insquare{
        -\log{
            \normalMass{S_i}{u}{1}
        }
    }\,.
\]
For an $R$-local friction function,
we can bound the Lipschitz constant of each function $f_i$ by \mbox{$O(R+C)$} using a similar calculation as \Cref{lem:friction:gradientOracle}.

One difference in our setting compared to \cite{shamir2016without} is that
they assume access to deterministic gradient oracles.
However,
it is not hard to see that the analysis extends to the case of stochastic gradient oracles.

{Our first attempt would be to run} the PSGD algorithm for sampling without replacement (\Cref{thm:friction:sgdWithoutReplacement}).
For $R$-local partitions,
we can bound the second moment of the gradient oracle as $G^2 = O(R^2 + C^2)$ (\Cref{lem:friction:gradientOracle}).
In order to recover $\wstar$ up to $\eps$-Euclidean distance,
we would like an $O(\alpha^2\eps^2)$-optimal solution in function value (\Cref{lem:friction:most_indices_are_nice}).
Na\"ively applying \Cref{thm:friction:sgdWithoutReplacement} would yield an $\Omega(\nicefrac1{\eps^4})$ gradient steps.
Instead,
we design an alternative one-pass iterative PSGD algorithm (\Cref{alg:one-pass-SGD-local-growth})
which requires $\tilde O(\nicefrac1{\eps^2})$ steps instead.
{We note that the algorithm (\Cref{alg:one-pass-SGD-local-growth}) and accompanying analysis (\Cref{thm:one-passs-SGD-local-growth-convergence}) are similar to the PSGD analysis by \citet{kalavasis2025can},
but we need additional ideas to argue that the algorithm converges in the one-pass setting, 
where we can only process each observation once.}

\begin{algorithm}[t]
\caption{One-Pass Iterative PSGD($K, w_0, \epsilon_0, g, \eta, \eps$)}\label{alg:one-pass-SGD-local-growth}
\begin{algorithmic}[1]
    \State \textbf{Input:} Projection access to feasible region $K$,
    initial point $w^{(0)}\in K$, 
    $\epsilon_0\geq F(w^{(0)}) - F(\wstar)$,
    gradient oracles $g_i$ for $f_i(x_i^\top \cdot), i\in [n]$ with $\E\insquare{\norm{g_i}_2^2}\leq G^2$,
    local growth rate $\eta > 0$,
    desired accuracy $\eps > 0$
    \State \textbf{Output:} $2\eps$-optimal solution with probability $0.98$
    \vspace{0.2em}
    \State Set $\tau \gets {\ceil{\log_2(\nfrac{\eps_0}\eps)}}$
    \vspace{0.2em}
    \State Set $C_0 \gets {\frac{2\eps_0}{\eta \sqrt{\eps}}}$
    \vspace{0.2em}
    \State Set $\gamma_0 \gets {\frac{\eps_0}{300G^2\tau}}$
    \vspace{0.2em}
    \State Set $T \gets {\frac{4\cdot 300^2 G^2 \tau^2}{\eta^2\eps}}$
    \vspace{0.2em}
    \State $\sigma\gets$ uniform random permutation of $[n]$
    \For{$\ell=1,\ldots, \tau$}
        \State $\gamma_\ell \gets 2^{-\ell}\gamma_0$
        \State $C_\ell\gets 2^{-\ell}C_0$
        \State $w^{(\ell, 0)} \gets w^{(\ell-1)}$
        \For{$t=1, \dots, T$}
            \State $j \gets \sigma(\ell\cdot (T-1) + t)$ \Comment{next index of gradient oracles in permutation}
            \State $w^{(\ell, t)} \gets \Pi_{K\cap B(w^{(\ell-1)}, C_k)} \inparen{w^{(\ell, t-1)} - \gamma_\ell\cdot g_j(w^{(\ell, t-1)})}$
        \EndFor
        \State $w^{(\ell)} \gets \frac1T \sum_{t=1}^T w^{(\ell, t)}$
    \EndFor
    \vspace{1em}
    \State \textbf{Return} $w^{(\tau)}$
\end{algorithmic}
\end{algorithm}

\begin{theorem}\label{thm:one-passs-SGD-local-growth-convergence}
    Let $F\colon K\sset \R^d\to \R$ be an average of functions
    $F(w) = \frac1n \sum_{i=1}^n f_i(x_i^\top w)$ 
    where $\ell_i$ is convex.
    Suppose that 
    \begin{enumerate}[(i)]
        \item $K\sset B(0, C)$
        \item $f_i$ is $G$-Lipschitz
        \item $\norm{x_i}_2\leq 1$ for each $i\in [n]$
        \item we have access to unbiased gradient oracles $g_i(w)$ for $\ell_i(x_i^\top w)$ such that $\E\insquare{\norm{g_i(x_i^\top w)}_2^2}\leq G^2$
        \item $F$ satisfies a $(\eta, \eps)$-local growth condition (\Cref{def:local-growth}).
        \item We are given an initial $\eps_0$-optimal solution $w^{(0)}\in K$.
    \end{enumerate}
    There is an iterative PSGD algorithm {(\Cref{alg:one-pass-SGD-local-growth})} which proceeds in one pass over a random permutation of data
    and outputs a $2\eps$-optimal solution with probability $0.98$
    if $n = \Omega\left( \frac{G^2}{\eta^2 \eps} \log^4\left( \frac{\eps_0}{\eps} \right) \right)$.
\end{theorem}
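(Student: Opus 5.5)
The proof runs by induction over the $\tau$ epochs of \Cref{alg:one-pass-SGD-local-growth}, in the style of the restart analysis behind \Cref{thm:SGD-local-growth-convergence}, with the i.i.d.\ stochastic-gradient bound in each epoch replaced by the without-replacement bound of \Cref{thm:friction:sgdWithoutReplacement}. The inductive invariant is that after epoch $\ell$ the iterate $w^{(\ell)}$ is $\eps_\ell$-optimal with $\eps_\ell \coloneqq 2^{-\ell}\eps_0$ (floored at $\eps$). The base case $\ell=0$ is assumption (vi). Since $\tau=\lceil\log_2(\eps_0/\eps)\rceil$, the final iterate will be $2\eps$-optimal once the invariant holds for all epochs.

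For the inductive step, assume $w^{(\ell-1)}$ is $\eps_{\ell-1}$-optimal. If $\eps_{\ell-1}\le\eps$, local growth (v) gives
\[
\norm{w^{(\ell-1)}-\wstar}_2\le \sqrt{\eps_{\ell-1}}/\eta .
\]
In the regime $\eps_{\ell-1}>\eps$, I will instead use convexity: $S_{\eps_{\ell-1}}$ is star-shaped around $\wstar$ with growth scaling linearly outside $S_\eps$. This yields $\norm{w^{(\ell-1)}-\wstar}_2\le \eps_{\ell-1}/(\eta\sqrt{\eps})=C_{\ell-1}/2$, which matches the radius choice $C_0=2\eps_0/(\eta\sqrt\eps)$. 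Consequently $\wstar$ lies in the epoch's feasible set $K\cap B(w^{(\ell-1)},C_\ell)$, whose diameter is $O(C_\ell)$ (up to the factor-2 bookkeeping in $C_\ell$ versus $C_{\ell-1}$, which I will fix by constants).

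Running $T$ steps of projected SGD with step size $\gamma_\ell$ on this ball over a fresh block of the random permutation, the analogue of \Cref{thm:friction:sgdWithoutReplacement} gives
\[
\E\bigl[F(w^{(\ell)})-F(\wstar)\bigr]\le \frac{C_\ell^2}{\gamma_\ell T}+G^2\gamma_\ell+\text{(without-replacement term)} .
\]
The constants $\gamma_0=\eps_0/(300G^2\tau)$ and $T=4\cdot300^2G^2\tau^2/(\eta^2\eps)$ are chosen so that the first two terms are at most $\eps_\ell/(c\,\tau)$ for a large constant $c$. Markov's inequality then makes each epoch fail with probability $O(1/\tau)$ times a small constant, and a union bound over the $\tau$ epochs gives overall success probability $0.98$. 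The total sample usage is $\tau T=O\bigl(\tfrac{G^2}{\eta^2\eps}\log^3(\eps_0/\eps)\bigr)$, and the extra logarithm in the hypothesis on $n$ leaves room for the required slack, so the $\tau$ disjoint blocks of the permutation fit within $n$.

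The main obstacle is the without-replacement term. In \Cref{thm:friction:sgdWithoutReplacement} it reads $LC/\sqrt n$ and measures suboptimality for the full average $F$, but here each epoch sees only $T$ of the $n$ indices and those indices are correlated with the previous epochs through the shared permutation. I would handle this by conditioning on the indices used in epochs $1,\dots,\ell-1$. The epoch-$\ell$ block is then a uniformly random subset of the remaining indices, and the transductive Rademacher argument of \cite{shamir2016without} applies to it with the domain radius $C_\ell$ in place of $C$, since the iterates stay in $B(w^{(\ell-1)},C_\ell)$ and the losses are linear-in-$x_i^\top w$ and $G$-Lipschitz. This gives an error of order $G C_\ell/\sqrt T$, which is at most $\eps_\ell/(c\tau)$ by the same choice of $T$. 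It remains to check that removing a $\tau T/n$ fraction of indices does not bias the average away from $F$. This is where $n\gtrsim \tfrac{G^2}{\eta^2\eps}\log^4$ is needed: it makes the discrepancy between the remaining-index average and $F$ on the ball $O(G C_\ell\,\tau T/n)$, which is negligible.
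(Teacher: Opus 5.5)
\textbf{Where your plan is sound.} Your choice of comparator is cleaner than the paper's. Take $s_0=\eps/\delta$ on the segment from $\wstar$ to a $\delta$-optimal $w$ with $\delta\ge\eps$. Convexity puts $\wstar+s_0(w-\wstar)$ in $S_\eps$, so local growth gives $\norm{w-\wstar}_2\le\delta/(\eta\sqrt{\eps})$. Hence the fixed point $\wstar$ lies in every epoch's ball. The paper instead compares against the data-dependent $(w^{(\ell-1)})^\dagger_\eps$ from \Cref{prop:level-set-dist} and carries the invariant $\eps_\ell+\eps$. Conditioning on the set $U$ of indices used before epoch $\ell$ is also legitimate, as is running the without-replacement analysis on $[n]\setminus U$ with the (then fixed) center $w^{(\ell-1)}$.

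\textbf{The gap.} It is in the last step, converting the resulting bound on $F_{\mathrm{rem}}(w^{(\ell)})-F_{\mathrm{rem}}(\wstar)$ (the average over $[n]\setminus U$) into a bound for $F$. Your Lipschitz estimate gives a discrepancy of $\frac{|U|}{n}\cdot O(GC_\ell)$ with $|U|\le\tau T$. But $GC_\ell$ is not of order $\eps_\ell$: from the algorithm's parameters, $GC_\ell/\eps_\ell=2G/(\eta\sqrt{\eps})=\sqrt{T}/(300\tau)$. So your bound is $O(GC_\ell\tau T/n)=\eps_\ell\cdot O(T^{3/2}/n)$. Making this at most $\eps_\ell/(c\tau)$ needs $n\gtrsim\tau T^{3/2}\asymp\frac{G^3}{\eta^3\eps^{3/2}}\log^4(\eps_0/\eps)$. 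The hypothesis $n\gtrsim\frac{G^2}{\eta^2\eps}\log^4(\eps_0/\eps)\asymp\tau^2 T$ only makes $\tau T/n$ small. You are therefore short by the factor $G/(\eta\sqrt{\eps})$, which is unbounded in the regime of interest. In \Cref{cor:friction:linearEstimation} it grows like the inverse target accuracy, which would turn an $\eps^{-2}$ sample complexity into $\eps^{-3}$.

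\textbf{How to repair it.} Control the discrepancy uniformly over a \emph{localized} class rather than pointwise. Write $F-F_{\mathrm{rem}}=\frac{|U|}{n}(F_U-F_{\mathrm{rem}})$ and set $h_i(w)\coloneqq f_i(x_i^\top w)-f_i(x_i^\top\wstar)$. On the good event, both $w^{(\ell)}$ and $\wstar$ lie in $B(\wstar,2C_\ell)$. Since $U$ is a uniformly random subset, apply the transductive Rademacher argument of \cite{shamir2016without}, with contraction for the $G$-Lipschitz maps $s\mapsto f_i(s+x_i^\top\wstar)-f_i(x_i^\top\wstar)$. This bounds the expected supremum over $\norm{w-\wstar}_2\le 2C_\ell$ of $\abs{\frac{1}{|U|}\sum_{i\in U}h_i(w)-\frac{1}{n-|U|}\sum_{i\notin U}h_i(w)}$ by $O(GC_\ell/\sqrt{|U|})$ when $|U|\le n/2$. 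The discrepancy is then $O(GC_\ell\sqrt{\tau T}/n)$, which is at most $\eps_\ell/(c\tau)$ once $n\gtrsim\sqrt{\tau}\,T$. Markov's inequality and the union bound then finish as you planned. The center must be the data-independent $\wstar$, not $w^{(\ell-1)}$, which depends on $U$; your comparator is exactly what makes this work.

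\textbf{Comparison with the paper.} The paper never performs this conversion explicitly. It reinterprets \Cref{thm:friction:sgdWithoutReplacement} as running on one block chosen uniformly from a random grouping into $n/T$ blocks of size $T$. It then conditions on not choosing any of the at most $\tau$ previously used blocks, an event of probability at most $\tau T/n\le 1/(100\tau)$ when $n\ge 100\tau^2T$.
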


The proof of \Cref{thm:one-passs-SGD-local-growth-convergence} relies on the following result.
\begin{proposition}[Lemma 1 in \cite{xu2019accelerate}]\label{prop:level-set-dist}
    Suppose $F\colon \R^d\to \R$ satisfies a $(\eta, \rho)$-local growth condition.
    Then for any $w\in \R^d$,
    \[
        \snorm{w-w_\rho^\dagger}_2
        \leq \frac{F(w) - F(w_\rho^\dagger)}{\eta \sqrt{\rho}}\,.
    \]
\end{proposition}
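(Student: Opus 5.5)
Throughout, I read $w_\rho^\dagger$ as the Euclidean projection of $w$ onto the sublevel set $S_\rho$, i.e.\ the closest point of $S_\rho$ to $w$. I will use $w^\star$ for the global minimizer from \Cref{def:local-growth}, and I take $F$ convex as in the surrounding setting. The plan is a normal-cone/subgradient argument: show that the displacement $w-w_\rho^\dagger$ is aligned with a subgradient of $F$ at $w_\rho^\dagger$, and lower bound that subgradient's norm using the local growth condition.

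\textbf{Case $w\in S_\rho$.} Then $w_\rho^\dagger=w$. The left-hand side is $0$ and the right-hand side is $0$, so the inequality holds.

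\textbf{Case $w\notin S_\rho$.} Here $F(w)>F(w^\star)+\rho$. Since $S_\rho$ is closed and convex, the projection $w^\dagger\coloneqq w_\rho^\dagger$ exists, and $w-w^\dagger$ lies in the normal cone of $S_\rho$ at $w^\dagger$. By continuity of $F$ along the segment from $w^\dagger$ to $w$, the point $w^\dagger$ lies on the boundary of the sublevel set, so $F(w^\dagger)=F(w^\star)+\rho$.

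Because $\rho>0$, $w^\dagger$ is not a minimizer, and Slater's condition holds for the sublevel set (the point $w^\star$ strictly satisfies $F<F(w^\star)+\rho$). So the standard description of the normal cone of a convex sublevel set applies: $N_{S_\rho}(w^\dagger)=\{\lambda g:\lambda\ge 0,\ g\in\partial F(w^\dagger)\}$. Hence $w-w^\dagger=\lambda g$ for some subgradient $g$ with $g\neq 0$ and $\lambda>0$.

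The subgradient inequality then gives
\[
F(w)-F(w^\dagger)\;\ge\;\inangle{g,w-w^\dagger}\;=\;\norm{g}_2\,\norm{w-w^\dagger}_2\,.
\]
It therefore suffices to show $\norm{g}_2\ge \eta\sqrt{\rho}$.

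\textbf{Lower bound on $\norm{g}_2$.} Apply the subgradient inequality between $w^\dagger$ and $w^\star$, then Cauchy--Schwarz, then local growth at $w^\dagger\in S_\rho$:
\[
\rho=F(w^\dagger)-F(w^\star)\le \inangle{g,w^\dagger-w^\star}\le \norm{g}_2\,\norm{w^\dagger-w^\star}_2\le \norm{g}_2\,\frac{\sqrt{\rho}}{\eta}\,.
\]
This gives $\norm{g}_2\ge \eta\sqrt\rho$. Combining it with the previous display yields the claim.

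\textbf{Main obstacle.} The delicate step is justifying that $w-w^\dagger$ is a nonnegative multiple of a subgradient at $w^\dagger$. This needs the normal-cone formula for sublevel sets, which requires $F(w^\dagger)>\min F$; that holds here precisely because $\rho>0$. It also needs $F$ finite and convex on a neighborhood of $w^\dagger$; if $F$ is only defined on $K$, one works with $F+\iota_K$ and the same argument goes through. If that route is awkward, my fallback is a direct one-dimensional argument along the ray from $w^\dagger$ through $w$, using convexity of $t\mapsto F(w^\dagger+t(w-w^\dagger))$ and the separating hyperplane of $S_\rho$ at $w^\dagger$.
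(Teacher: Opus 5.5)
Your proof is correct and follows essentially the standard argument behind the cited lemma, which the paper imports without proof. The optimality (KKT) conditions for projecting $w$ onto $S_\rho$ give $w-w_\rho^\dagger=\lambda g$ with $g\in\partial F(w_\rho^\dagger)$ and $F(w_\rho^\dagger)=F(w^\star)+\rho$, and the subgradient inequality toward $w^\star$ together with local growth yields $\norm{g}_2\ge\eta\sqrt{\rho}$; your reading of $w_\rho^\dagger$ as the projection onto $S_\rho$ and your added convexity assumption are exactly what that lemma presupposes, though the statement as printed leaves both implicit.
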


\begin{proof}[Proof of \Cref{thm:one-passs-SGD-local-growth-convergence}]
    Define $\eps_\ell \coloneqq 2^{-\ell} \eps_0$.
    We will argue that $F(w^{(\ell)}) - F(w^\star) \leq \eps_{\ell} + \eps$
    with probability $1-\nfrac1{100\tau}$ conditioned on $F(w^{(\ell-1)}) - F(w^\star) \leq \eps_{\ell-1} + \eps$.
    The claim follows then by a union bound over $\tau$ stages.

    The base case for $\ell=0$ holds by assumption.
    We have
    \[
        C_\ell = \frac{\eps_{\ell-1}}{\eta \sqrt{\eps}}
        \qquadand
        \gamma_\ell = \frac{\eps_\ell}{300G^2\tau}\,.
    \]
    By \Cref{prop:level-set-dist},
    we have
    \[
        \norm{(w^{(\ell-1)})_\eps^\dagger - w^{(\ell-1)}}_2
        \leq \frac{1}{\eta \sqrt\eps} (F(w^{(\ell-1)}) - F(w^{(\ell-1)})_\eps^\dagger)
        \leq \frac{\eps_{\ell-1}}{\eta\sqrt\eps}
        \leq C_\ell\,.
    \]

    If $\ell=1$,
    we can directly apply \Cref{thm:friction:sgdWithoutReplacement} on the $\ell$-th stage of PSGD
    to deduce that
    \begin{align*}
        \E[F(w^{(\ell)}) - F((w^{(\ell-1)})_\eps^\dagger)]
        &\leq \frac{C_\ell^2}{\gamma_\ell T} + G^2 \gamma_\ell + \frac{30GC_\ell}{\sqrt{n}} \\
        &= \frac{\eps_\ell}{300\tau} + \frac{\eps_\ell}{300\tau} + \frac{\eps_\ell}{300\tau} \\
        &= \frac{\eps_\ell}{100\tau}\,.
    \end{align*}
    An application of Markov's inequality would then yield the desired result
    with probability $1-\frac1{100\tau}$.

    However,
    as our analysis conditions on each of the previous $\ell-1$ stages,
    we cannot directly apply \Cref{thm:friction:sgdWithoutReplacement} for $\ell > 1$
    since the order of the permutation is not independent when conditioning on prior stages.
    Instead,
    we remark that the analysis of \Cref{thm:friction:sgdWithoutReplacement}
    is equivalent to 
    \begin{enumerate}[1)]
        \item randomly grouping the $n$ datapoints into $\nicefrac{n}{T}$ subsets of size $T$
        by repeating sampling $T$ points without replacement,
        \item selecting one such subset uniformly at random, and
        \item executing a one-pass PSGD on a random permutation of this selected group.
    \end{enumerate}
    Thus, if $n\geq 100\tau^2 T$,
    then at the $\ell\leq \tau$ stage,
    we can perform our analysis conditioning on not selecting any of the at most $\tau$ subsets in the second step of the experiment above.
    Thus, 
    the $\ell$-th stage still succeeds with probability $1-\frac2{100\tau}$.
\end{proof}

\subsection{Putting It Together}\label{sec:friction:together}
Applying \Cref{thm:one-passs-SGD-local-growth-convergence} to our setting
with $(\eta, \rho) = (\nicefrac{\alpha b}{200}, \nicefrac{\alpha^2 b^2}{600^2})$
yields the following result.
Note that compared to na\"ively applying \Cref{thm:friction:sgdWithoutReplacement},
\Cref{thm:one-passs-SGD-local-growth-convergence} requires $O(\nicefrac1{\eps^2})$ fewer observations.
\begin{proposition}\label{thm:friction:localPartitionLinearEstimation}
    Let $\eps\in (0, \nicefrac1{300})$.
    Consider an instance of linear regression with friction
    and suppose \Cref{asmp:friction:convex level sets,asmp:friction:boundedParameter,asmp:friction:informationPreservation} holds with constants $b, C > 0, D=\frac1{\sqrt{d}}$.
    Suppose further that the partition $\hyP$ of $\R$ induced by the friction function is $R$-local.
    There is an algorithm that outputs an estimate $\tilde w$ satisfying
    $
        \norm{\tilde w - \wstar}_2 \leq \eps
    $
    with probability $0.98$.
    Moreover,
    the algorithm requires
    \[
        n = O\inparen{\frac{R^2 + C^2}{\alpha^4 b^4 \eps^2} \log^4\inparen{\frac{GC}{\eps}}}
    \]
    observations 
    and terminates in $\poly(n, T_s)$ time,
    where $T_s$ is the time complexity of sampling from a 1-dimensional Gaussian distribution truncated to a partition $P\in \hyP$ (\Cref{asmp:frictionSamplingOracle}).
\end{proposition}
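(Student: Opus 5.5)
The plan is to instantiate \Cref{thm:one-passs-SGD-local-growth-convergence} with the ingredients already assembled in this section, and then convert function-value optimality into parameter distance via local growth. Concretely, take $F=\negLL_\hyP$ on the projection set $K=B(0,C)$ from \Cref{eq:friction:projectionSet}, which contains $\wstar$. Each $f_i(u)=\E_{S_i}[-\log \normalMass{S_i}{u}{1}]$ is convex by the convexity lemma of \Cref{sec:friction:convexityLocalGrowth}. By \Cref{lem:friction:gradientOracle}, each $f_i$ is $L$-Lipschitz with $L=O(R+C)$. The same lemma provides an unbiased gradient oracle $g_i$, consuming the single observation $S_i$ and one call to \Cref{asmp:frictionSamplingOracle}, with $G^2=O(R^2+C^2)$. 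The normalization $\norm{x_i}_2\le 1$ is in force since $D=1/\sqrt d$. By \Cref{lem:frictionLocalGrowth}, $F$ satisfies an $(\eta,\rho)=(\alpha b/200,\ \alpha^2b^2/600^2)$-local growth condition.

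Next I set the target function accuracy. I want $\norm{\tilde w-\wstar}_2\le\eps$, so I run the algorithm to reach a $2\eps'$-optimal point with $\eps'=\eta^2\eps^2/2$. Since $\eps<1/300$, we have $2\eps'\le\rho$, so the output lies in the sublevel set $S_\rho$. Local growth then gives
\[
\norm{\tilde w-\wstar}_2\le \sqrt{2\eps'}/\eta=\eps.
\]
Any $w^{(0)}\in K$, say $0$, is $\eps_0$-optimal with $\eps_0=2LC=O((R+C)C)$, because $F$ is $L$-Lipschitz along directions of norm at most $\norm{x_i}_2\le 1$ on a domain of diameter $2C$.

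Plugging into \Cref{thm:one-passs-SGD-local-growth-convergence} gives the required number of observations:
\[
n=\Omega\!\left(\frac{G^2}{\eta^2\eps'}\log^4\frac{\eps_0}{\eps'}\right)=O\!\left(\frac{R^2+C^2}{\alpha^4b^4\eps^2}\log^4\frac{GC}{\eps}\right).
\]
Here $\eta^4\propto\alpha^4b^4$, and the logarithm absorbs $\alpha,b$ factors, as in the statement's convention. The success probability is $0.98$, directly from that theorem. The running time consists of $n$ oracle calls, projections onto $K\cap B(w^{(\ell-1)},C_\ell)$, which are computable in closed form or by simple convex projection, and averaging. This is $\poly(n,T_s)$.

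The main obstacle is the step from function value to distance. \Cref{lem:frictionLocalGrowth} is only valid inside $S_\rho$, and its proof relies on \Cref{lem:friction:most_indices_are_nice}, whose threshold must be compatible with $\rho$. I would check that the choice $\eps<1/300$ together with $\eps'=\eta^2\eps^2/2$ keeps $2\eps'$ below $\rho$. I would also verify that the hypotheses of the one-pass theorem, namely the Lipschitz constant and the second moment bound, use the same $G$, taking $G=O(R+C)$ for both.
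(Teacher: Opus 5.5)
Your proposal is correct and takes the same route as the paper. The paper obtains this proposition simply by plugging the local-growth parameters $(\eta,\rho)=(\alpha b/200,\ \alpha^2b^2/600^2)$ from \Cref{lem:frictionLocalGrowth}, the bound $G^2=O(R^2+C^2)$ from \Cref{lem:friction:gradientOracle}, and a target function accuracy of order $\alpha^2 b^2\eps^2$ into \Cref{thm:one-passs-SGD-local-growth-convergence}, which is exactly what you do. Your checks that $\eta^2\eps^2\le\rho$ (which only needs $\eps\le 1/3$) and that $\eps_0=2LC$ fill in details the paper leaves implicit. The one looseness is that $\log(\eps_0/\eps')$ really carries an extra $\log(1/(\alpha b))$ that is absent from the stated $\log^4(GC/\eps)$, and this is inherited from the paper's statement rather than being a flaw in your argument.
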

We now explain how to 
handle the more general condition that $\norm{x_i}_\infty\leq D$
as well as remove the assumption of $R$-locality.

On a high level,
We downscale the covariates $\bar x_i\gets \frac{x_i}{D\sqrt{d}}$ 
while taking the scaled function $\bar \ell_i(x_i^\top w)\coloneqq \ell_i(x_i^\top w\cdot D\sqrt{d})$.
This does not affect information preservation or quadratic growth since $\bar \ell_i(\bar x_i^\top w) = \ell_i(x_i^\top w)$ are identical as functions of $w$.
However, this increases the second moment of the gradient oracle by $\bar G^2\gets G^2 D^2 d$ as well as the Lipschitz constant of $\bar \ell_i$ by $\bar L\gets GD\sqrt{d}$.

As for the $R$-locality assumption,
we see that with probability $0.99$,
the (unobserved) dependents without friction will not exceed $CD\sqrt{d} + O(\log n)$.
Hence, we can essentially simulate an $R$-local partition for $R = CD\sqrt{d} + O(\log n)$.

We are now ready to prove the more general \Cref{cor:friction:linearEstimation}.
The proof details are similar to that of \Cref{thm:convexPartitionMeanEstimation},
where we approximately implement the algorithm for an $R$-local friction function
using samples from a general friction function.
\begin{proof}[Proof of \Cref{cor:friction:linearEstimation}]
    Suppose the friction function $f$ induces a general convex partition $\hyP$.
    Since $\abs{x_i^\top \wstar}\leq CD\sqrt{d}$,
    choosing $R = CD\sqrt{d} + O(\log(n))$ means that any $n$-sample algorithm will not observe 
    a sample $S_i\sim \coarseNormal{x_i^\top \wstar}{1}{\hyP}$ such that $S_i\cap [-R, R] = \varnothing$ with probability $0.99$
    when the hidden constant in $R$ is sufficiently large.
    Define the partition $\hyP(R)$ of $\R$ given by
    \[
        \hyP(R)\coloneqq 
        \inbrace{P\cap [-R, R]: P\in \hyP, P\cap [-R, R]\neq \varnothing}
        \cup \inbrace{\set{x}: x\notin [-R, R]}.
    \]
    Since $\hyP(R)$ is a refinement of $\hyP$,
    it must also be $\alpha$-information preserving with respect to the covariates.
    Consider the set-valued function $H_R: \hyP\to \hyP(R)$ given  by
    \[
        P \mapsto
        \begin{cases}
            P\cap [-R, R], &P\cap [-R, R]\neq \varnothing, \\
            \text{$\set{x}$ for an arbitrary $x\in P$}, &P\cap [-R, R]=\varnothing\,.
        \end{cases}.
    \]
    By the choice of $R$, with probability $0.99$, any $n$-sample algorithm cannot distinguish between observations $P_i\sim \coarseNormal{x_i\top \wstar}{1}{\hyP(R)}$ and $H_R(S_i)$ for $S_i\sim \coarseNormal{x_i^\top \wstar}{1}{\hyP}$.
    Moreover,
    we can sample from $P\cap [-R, R]$ as per \Cref{asmp:frictionSamplingOracle}.
    
    Thus, we can simply run the algorithm from \Cref{thm:friction:localPartitionLinearEstimation}
    with input observations $(x_i, H_R(S_i))$ given samples $(x_i, S_i)$
    and conclude by applying Markov's inequality to the guarantees of \Cref{thm:friction:localPartitionLinearEstimation}.
\end{proof}

\section{Simulations on Variance Reduction}
    \label{sec:simulations}
    {
    The proofs of both our primary estimation algorithms, \ie{}, coarse mean estimation (\Cref{thm:convexPartitionMeanEstimation}) and the linear regression with friction (\Cref{thm:friction}}), systematically depend on a critical theoretical component: the uniform variance reduction exhibited by Gaussian variables upon conditioning on convex truncations (\Cref{prop:convexPartitionVarianceReduction}).
    We complement our our theoretical analysis with empirical evidence of variance reduction.
    }

    {Specifically, we empirically test the variance-reduction condition that are used in establishing our algorithm's guarantees. To study this, we draw \iid{} samples from Beta, Gaussian, Laplace, and Quartic families in 1D with interval truncation sets. Then, we compare empirical variances before and after truncation. Across all instances, the variance of the truncated samples is consistently smaller, suggesting that our algorithm could, in principle, be extended to other distribution families as well, at least in a single dimension.}
    
    \begin{figure}[htbp]
        \centering
        \includegraphics[width=\linewidth]{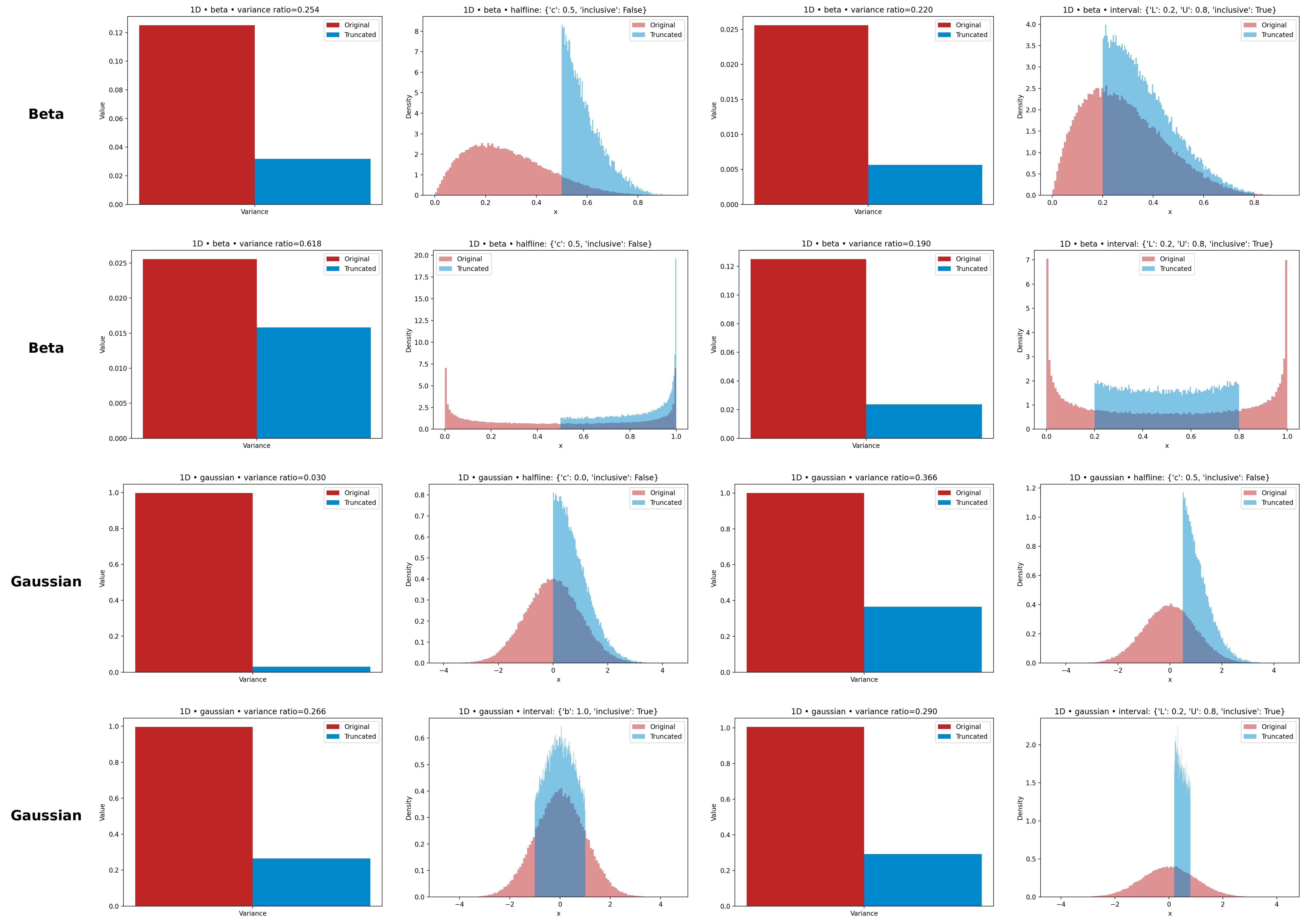}
        \caption{{Each row shows a distribution (rows 1 and 2: {Beta}; rows 3 and 4: {Gaussian}). Columns (1,3): bar plots of empirical variances (red = original, blue = truncated) with variance ratio $r=\mathrm{Var}_{\text{trunc}}/\mathrm{Var}_{\text{orig}}$ annotated. Columns (2,4): histogram overlays (original vs. truncated) for {half-line} (col. 2) and {interval $[L,U]$} truncations (col. 4). In all displayed runs, $r<1$, indicating variance reduction under these convex truncations.}}
        \label{fig:4}
    \end{figure}
    \begin{figure}[htbp]
        \centering
        \includegraphics[width=\linewidth]{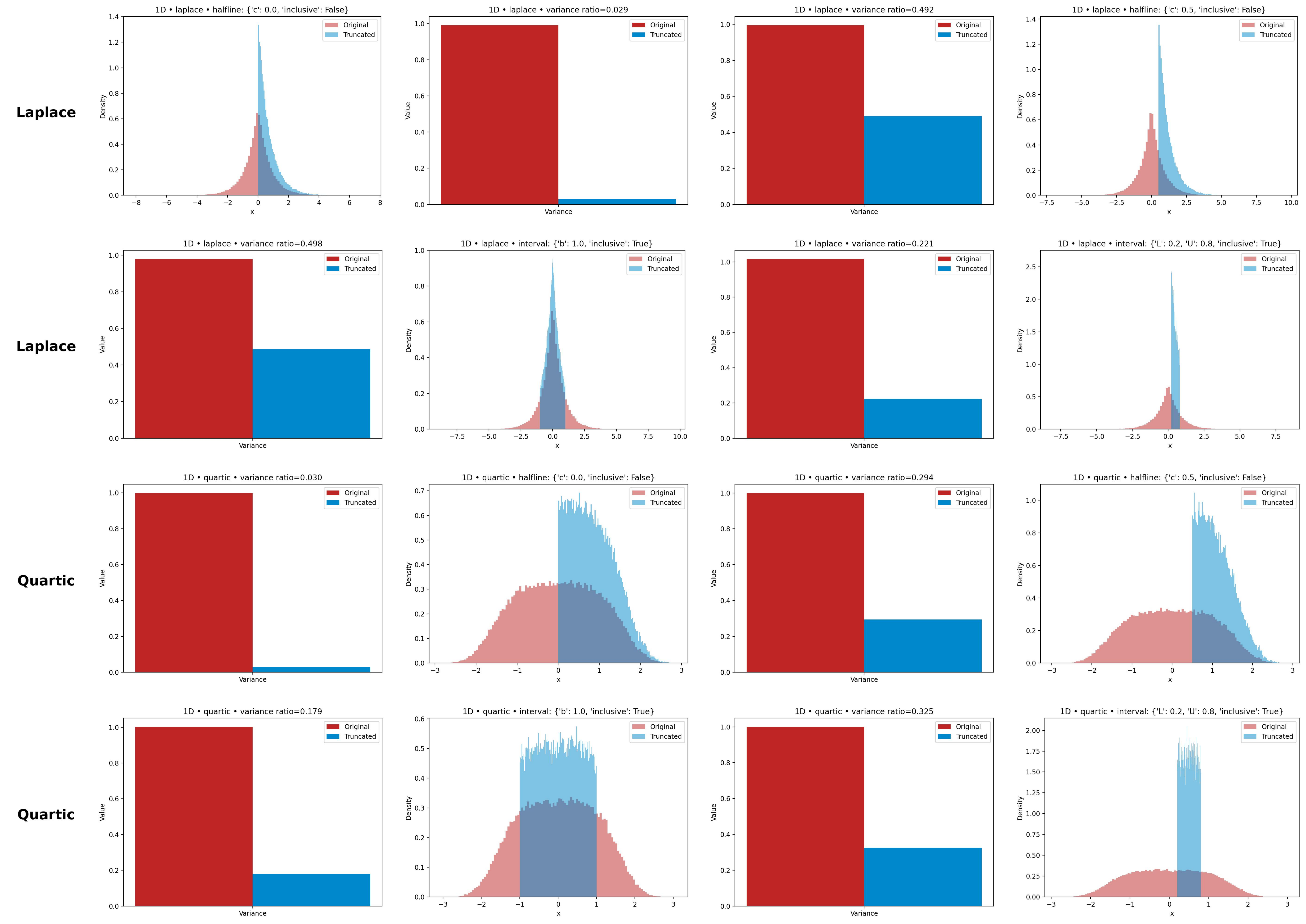}
        \caption{{Same layout as \cref{fig:4}. Rows 1 and 2: {Laplace}; rows 3 and 4: {Quartic} (density $\propto e^{-(x-\mu)^4/s}$). Columns (1,3) report variance bars and $r$; columns (2,4) show the corresponding histogram overlays for {half-line} and {interval $[L,U]$} truncations. The observed $r<1$ across settings provides further evidence of practical variance reduction beyond the Gaussian case.}}
    \end{figure}
    
\end{document}